\definecolor{blued}{RGB}{70,197,221}
\definecolor{pearOne}{HTML}{2C3E50}
\definecolor{pearTwo}{HTML}{A9CF54}
\definecolor{pearTwoT}{HTML}{C2895B}
\definecolor{pearThree}{HTML}{E74C3C}
\colorlet{titleTh}{pearOne}
\colorlet{bull}{pearTwo}
\definecolor{pearcomp}{HTML}{B97E29}
\definecolor{pearFour}{HTML}{588F27}
\definecolor{pearFith}{HTML}{ECF0F1}
\definecolor{pearDark}{HTML}{2980B9}
\definecolor{pearDarker}{HTML}{1D2DEC}
\definecolor{citrine}{rgb}{0.89, 0.82, 0.04}
\definecolor{graphicbackground}{rgb}{0.96,0.96,0.8}
\definecolor{rouge1}{RGB}{226,0,38}  
\definecolor{orange1}{RGB}{243,154,38}  
\definecolor{jaune}{RGB}{254,205,27}  
\definecolor{blanc}{RGB}{255,255,255} 
\definecolor{rouge2}{RGB}{230,68,57}  
\definecolor{orange2}{RGB}{236,117,40}  
\definecolor{taupe}{RGB}{134,113,127} 
\definecolor{gris}{RGB}{91,94,111} 
\definecolor{bleu1}{RGB}{38,109,131} 
\definecolor{bleu2}{RGB}{28,50,114} 
\definecolor{vert1}{RGB}{133,146,66} 
\definecolor{vert3}{RGB}{20,200,66} 
\definecolor{vert2}{RGB}{157,193,7} 
\definecolor{darkyellow}{RGB}{233,165,0}  
\definecolor{lightgray}{rgb}{0.9,0.9,0.9}
\definecolor{darkgray}{rgb}{0.6,0.6,0.6}
\definecolor{babyblue}{rgb}{0.54, 0.81, 0.94}\definecolor{citrine}{rgb}{0.89, 0.82, 0.04}
\definecolor{misogreen}{rgb}{0.25,0.6,0.0}
\DeclareMathOperator*{\argmax}{arg\,max}
\newcommand{\EE}[1]{\mathbb{E}\left[#1\right]}
\newcommand{\pa}[1]{\left(#1\right)}
\let\originalleft\left
\let\originalright\right
\renewcommand{\left}{\mathopen{}\mathclose\bgroup\originalleft}
\renewcommand{\right}{\aftergroup\egroup\originalright}
\newcommand{\abs}[1]{\left|#1\right|}
\newcommand{\CommaBin}{\mathbin{\raisebox{0.5ex}{,}}}
\newcommand{\cO}{\mathcal{O}}
\newcommand{\tcO}{\widetilde{\cO}}
\newcommand{\eps}{\varepsilon}
\renewcommand{\epsilon}{\varepsilon}
\renewcommand{\hat}{\widehat}
\renewcommand{\bar}{\overline}
\newcommand{\nothere}[1]{}
\newcommand{\UCB}{{\small \textsc{UCB}}\xspace}
\newcommand{\UCBone}{{\small \textsc{UCB1}}\xspace}
\newcommand{\MOSS}{{\small\textsc{MOSS}}\xspace}
\newcommand{\EXP}{{\textsc{Exp3}}\xspace}
\newcommand{\rounds}{{\textcolor[rgb]{0.25,0.0,0.6}{T}}}
\newcommand{\regret}{R_\rounds}
\newcommand{\noise}{\eps}
\newcommand{\narms}{K}
\newcommand{\currentTime}{t}
\newcommand{\lipschitz}{L}
\newcommand{\timeEnd}{T}
\newcommand{\subgaussian}{\sigma}
\newcommand{\arm}{i}
\newcommand{\armCount}{N}
\newcommand{\policy}{\pi}
\newcommand{\reward}{\mu}
\newcommand{\obs}{r}
\newcommand{\totalReward}{J}
\newcommand{\expectation}{\mathop{\mathbb{E}}}
\newcommand{\possibleArms}{\mathcal{K}}
\newcommand{\window}{h}
\newcommand{\estReward}{\hat{\reward}}
\newcommand{\expestReward}{\bar{\reward}}
\newcommand{\historyt}{\mathcal{H}_\currentTime}
\newcommand{\EWA}{\policy_{\rm F}}
\newcommand{\underpullSet}{\textsc{up}}
\newcommand{\overpullSet}{\textsc{op}}
\newcommand{\HPevent}{\xi}
\newcommand{\rewardSet}{\mathcal{L}_\lipschitz}
\newcommand{\myAlgorithm}{{\small \textsc{FEWA}}\xspace}
\newcommand{\FEWA}{{\small \textsc{FEWA}}\xspace}
\newcommand{\SWA}{\normalfont{\textsc{wSWA}}\xspace}
\newcommand{\DUCB}{{\small \textsc{D-UCB}}\xspace}
\newcommand{\SWUCB}{{\small \textsc{SW-UCB}}\xspace}
\newcommand{\EFF}{{\small \textsc{EFF-FEWA}}\xspace}
\newcommand{\EFFP}{\policy_{\rm EF}}
\newcommand{\wb}[1]{\overline{#1}}
\newtheorem{corollary}{Corollary}
\newtheorem{lemma}{Lemma}
\newtheorem{proposition}{Proposition}
\newtheorem{assumption}{Assumption}
\newtheorem{remark}{Remark}
\begin{document}
	
	\twocolumn[
	
	\aistatstitle{Rotting bandits are not harder than stochastic ones}
	
	\aistatsauthor{Julien Seznec \And  Andrea Locatelli \And Alexandra Carpentier \And Alessandro Lazaric \And Michal Valko}
	
	\aistatsaddress{Lelivrescolaire.fr \And OvGU Magdeburg  \And  OvGU Magdeburg   \And FAIR Paris \And Inria Lille}]


\begin{abstract}\noindent
In stochastic multi-armed bandits, the reward distribution of each arm is assumed to be stationary. This assumption is often violated in practice (e.g., in recommendation systems), where the reward of an arm may change whenever is selected, i.e., rested bandit setting. In this paper, we consider the \textit{non-parametric rotting bandit} setting, where rewards can only decrease. We introduce the \textit{filtering on expanding window average} (\myAlgorithm) algorithm that constructs moving averages of increasing windows to identify arms that are more likely to return high rewards when pulled once more. We prove that for an unknown horizon $T$, and without any knowledge on the decreasing behavior of the $K$ arms, \myAlgorithm achieves problem-dependent regret bound of $\tcO(\log{(KT)}),$ and a problem-independent one of $\tcO(\sqrt{KT})$. Our result substantially improves over the algorithm of~\citet{levine2017rotting}, which suffers regret $\tcO(K^{1/3}T^{2/3})$. \myAlgorithm also matches known bounds for the stochastic bandit setting, thus showing that the rotting bandits are not harder. Finally, we report simulations confirming the theoretical improvements of \myAlgorithm.
\end{abstract}


\vspace{-0.1in}
\section{Introduction}
\label{Introduction}
\vspace{-0.1in}
The multi-arm bandit framework~\citep{bubeck2012regret,lattimore2019bandit} formalizes the exploration-exploitation dilemma in online learning, where an agent has to trade off the \textit{exploration} of the environment to gather information and the \textit{exploitation} of the current knowledge to maximize reward. In the \textit{stochastic setting} \citep{thompson1933likelihood,auer2002finite}, each arm is characterized by a stationary reward distribution. Whenever an arm is pulled, an i.i.d.\,sample from the corresponding distribution is observed. Despite the extensive algorithmic and theoretical study of this setting, 
the stationarity assumption is often too restrictive in practice, e.g., the preferences of users may change over time. The \textit{adversarial setting} \citep{auer2002nonstochastic} addresses this limitation by removing any assumption on how the rewards are generated and learning agents should be able to perform well for any \textit{arbitrary} sequence of rewards. While algorithms such as \EXP \citep{auer2002nonstochastic} are guaranteed to achieve small regret in this setting, their behavior is conservative as all arms are repeatedly explored to avoid incurring too much regret because of unexpected changes in arms' values. This behavior results in unsatisfactory performance in practice, where arms' values, while non-stationary, are far from being adversarial. \citet{garivier2011upper-confidence-bound} proposed a variation of the stochastic setting, where the distribution of each arm is \textit{piecewise stationary}. Similarly,~\citet{besbes2014optimal} introduced an adversarial setting where the total amount of change in arms' values is bounded. These settings fall into the \textit{restless} bandit scenario, where the arms' value evolves \textit{independently} from the decisions of the agent. 
On the other hand, for \textit{rested} bandits, the value of an arm changes only when it is pulled. For instance, the value of a service may deteriorate only when it is actually used, e.g., if a recommender system shows always the same item to the users, they may get bored~\citep{warlop2018fighting}. Similarly, a student can master a frequently taught topic in an intelligent tutoring system and extra learning on that topic would be less effective. A particularly interesting case is represented by the \textit{rotting bandits}, where the value of an arm may decrease whenever pulled. 
\citet{heidari2016tight} studied this problem when rewards are deterministic (i.e., no noise) and showed how a greedy policy (i.e., selecting the arm that returned the largest reward the last time it was pulled) is optimal up to a small constant factor depending on the number of arms $\narms$ and the largest per-round decay in the arms' value $\lipschitz$. \citet{bouneffouf2016multi-armed} considered the stochastic setting when the dynamics of the rewards is known up to a constant factor. Finally, \citet{levine2017rotting} considered both non-parametric and parametric noisy rotting bandits, for which they derive algorithms with regret guarantees. In the non-parametric case, where the decrease in reward is neither constrained nor known, they introduce the \textit{sliding-window average} (\SWA) algorithm, which is shown to achieve a regret to the optimal policy of order $\widetilde{\cO}(K^{1/3}\timeEnd^{2/3})$, where $\timeEnd$ is the number of rounds in the experiment. 

In this paper, we study the non-parametric rotting setting of~\citet{levine2017rotting} and introduce 
\textit{Filtering on Expanding Window Average} (\myAlgorithm) algorithm, a novel method that constructs moving average estimates of increasing windows to identify the arms that are more likely to perform well if pulled once more. Under the assumption that the reward decays are bounded, we show that \myAlgorithm achieves a regret of $\widetilde{\cO}(\sqrt{K\timeEnd})$, thus \emph{significantly improving over \SWA} and matching the minimax rate of stochastic bandits up to a logarithmic factor. This shows that learning with non-increasing rewards is not more difficult than in the stationary case. Furthermore, when rewards are constant, we recover \emph{standard problem-dependent regret guarantees} (up to constants), while in the rotting bandit scenario with no noise, the regret reduces to the one of~\citet{heidari2016tight}. Numerical simulations confirm our theoretical results and show the superiority of \myAlgorithm over \SWA. 

\vspace{-0.1in}
\section{Preliminaries}
\label{Model}
\vspace{-0.1in}

We consider a rotting bandit scenario similar to the one of~\citet{levine2017rotting}. At each round $t$, an agent chooses an arm $\arm(\currentTime) \in \possibleArms \triangleq \left\{ 1, ... , \narms\right\} $ and receives a noisy reward $\obs_{\arm(\currentTime),\currentTime}$. The reward associated to each arm $i$ is a $\subgaussian^2$-sub-Gaussian r.v.\,with expected value of $\mu_i(n)$, which depends on the number of times $n$ it was pulled before; $\mu_i(0)$ is the initial expected value.\footnote{Our definition slightly differs from the one of~\citet{levine2017rotting}. We use $\mu_i(n)$ for the expected value of arm~$i$ \textit{after $n$ pulls} instead of when it is pulled \textit{for the $n$-th time}. 
}  Let $\historyt \triangleq \left\{ \left\{ \arm(s), \obs_{\arm(s), s} \right\}, \forall s < \currentTime \right\}$ be the sequence of arms pulled and rewards observed until round $t$, then 
\begin{align*}
\obs_{\arm(\currentTime),\currentTime} \triangleq \reward_{\arm(\currentTime)}(\armCount_{\arm(\currentTime),\currentTime}) + \noise_\currentTime 
 \;\;\; \text{with}\; \expectation \left[  \noise_\currentTime | \historyt \right] = 0\\
  \text{and} \; \forall \lambda \in \mathop{\mathbb{R}}, \quad \expectation\left[ e^{\lambda\noise_\currentTime}\right] \leq e^{\frac{\subgaussian\lambda^2}{2}},
\end{align*}
where $\armCount_{\arm,\currentTime}\triangleq \sum_{s=1}^{t-1} \mathbb{I}\{i(t) = i\}$ is the number of times arm $i$ is pulled before round $t$. We use $r_i(n)$ to denote the random reward of arm $i$ when pulled for the $n+1$-th time, i.e., $r_{i(t),t} = \obs_{\arm(\currentTime)}(\armCount_{\arm(\currentTime),\currentTime})$. 
We introduce a non-parametric rotting assumption with bounded decay.
\begin{assumption}\label{assum-Lipschitz}
The reward functions $\reward_\arm$ are non-increasing with bounded decays $-\lipschitz \leq \reward_\arm(n+1) - \reward_\arm(n) \leq 0.$ The initial expected value is bounded as $\reward_\arm(0) \in \left[0,\lipschitz\right]$. We refer to this set of functions as $\rewardSet.$
\end{assumption}%

\paragraph{The learning problem} A learning policy~$\pi$ is a function from the history of observations to arms, i.e., $\pi(\mathcal{H}_t) \in \mathcal{K}$. In the following, we often use $\pi(t) \triangleq \pi(\mathcal{H}_t)$. The performance of a policy $\pi$ is measured by the (expected) rewards accumulated over time, 
%
\begin{equation*}
\totalReward_\timeEnd(\policy) \triangleq \sum_{t=1}^\timeEnd  \reward_{\policy(t)}\pa{\armCount_{\policy(t),t}}.
\end{equation*}
Since $\pi$ depends on the (random) history observed over time, $\totalReward_\timeEnd(\policy)$ is also random. We define the expected cumulative reward as $\wb{\totalReward}_T(\policy) \triangleq \mathbb{E}\big[\totalReward_\timeEnd(\policy)\big]$. We now restate a characterization of the optimal (oracle) policy.
\begin{proposition}[\citealp{heidari2016tight}]
If the expected value of each arm $\{\mu_i(n)\}_{i,n}$ is known, the policy $\pi^\star$ maximizing the expected cumulative reward $\overline{\totalReward_T}(\policy)$ is greedy at each round, i.e., 
\begin{align}\label{eq:optimal.policy}
\pi^\star(t) = \arg\max_\arm \reward_\arm\pa{\armCount_{\arm,\currentTime}}.
\end{align}
We denote by $J^\star \triangleq \wb{\totalReward}_T(\policy^\star) = \totalReward_T(\policy^\star)$, the cumulative reward of the optimal policy.
\end{proposition}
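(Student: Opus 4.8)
The plan is to exploit the fact that, because the means are known, the problem is fully deterministic and its value depends only on \emph{how many} times each arm is pulled, not on the order of the pulls. Concretely, since the reward collected the $(j+1)$-th time arm $i$ is pulled equals $\mu_i(j)$, any policy that pulls arm $i$ a total of $n_i$ times over the horizon collects from that arm exactly $\sum_{j=0}^{n_i-1}\mu_i(j)$, independently of when those pulls occur. First I would use this to rewrite the cumulative reward of any policy as
\begin{equation*}
\totalReward_T(\pi) = \sum_{i=1}^{K} \sum_{j=0}^{n_i-1} \mu_i(j), \qquad \text{with } n_i = N_{i,T+1} \text{ and } \sum_{i=1}^{K} n_i = T,
\end{equation*}
thereby reducing the maximization of the cumulative reward to a separable resource-allocation problem: choose nonnegative integers $n_1,\dots,n_K$ summing to $T$ so as to maximize $\sum_i \sum_{j<n_i}\mu_i(j)$.

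Next I would record the structural consequence of Assumption~\ref{assum-Lipschitz}: each marginal sequence $\big(\mu_i(0),\mu_i(1),\mu_i(2),\dots\big)$ is non-increasing, so the per-arm partial sum $F_i(n)\triangleq\sum_{j<n}\mu_i(j)$ is (discretely) concave and the objective exhibits diminishing returns. The useful reformulation is to view each of the $T$ rounds as selecting one marginal value: pulling arm $i$ when it has already been pulled $N_{i,t}$ times contributes the marginal $\mu_i(N_{i,t})$, and feasibility requires that the marginals selected from arm $i$ form a \emph{prefix} $\mu_i(0),\dots,\mu_i(n_i-1)$ of its sequence. Any policy thus collects a $T$-element sub(multi)set of the pooled marginals $\{\mu_i(j)\}_{i,j}$ subject only to this prefix constraint.

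The core of the argument is then a greedy-optimality/exchange step. The greedy policy $\pi^\star(t)=\arg\max_i \mu_i(N_{i,t})$ selects, at each round, the largest currently \emph{available} marginal. Because each arm's marginals are non-increasing, the available marginal $\mu_i(N_{i,t})$ is the largest not-yet-claimed marginal of arm $i$; hence $\max_i \mu_i(N_{i,t})$ is the largest unclaimed marginal over \emph{all} arms. By induction over $t$ this shows that greedy collects exactly the $T$ largest values of the pooled multiset $\{\mu_i(j)\}_{i,j}$, and, crucially, this top-$T$ selection automatically satisfies the prefix constraint. On the other hand, any feasible selection is a $T$-subset of the same multiset, so its total is at most the sum of the $T$ largest marginals, which is precisely what greedy attains. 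This yields $\totalReward_T(\pi)\le \totalReward_T(\pi^\star)$ for every policy.

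The main obstacle, modest but worth handling with care, is to make the comparison valid against \emph{arbitrary} competitor policies (history-dependent and randomized ones whose counts $n_i$ are themselves random) and to deal with ties in the marginals. Ties are harmless for the value statement: all maximizers of the allocation problem share the same objective, so any tie-breaking rule leaves $\totalReward_T(\pi^\star)$ unchanged. For general policies I would observe that the bound ``total $\le$ sum of the top $T$ marginals'' holds \emph{pathwise} for every realization of the noise, since it only uses the prefix structure of the collected marginals; taking expectations then gives $\wb{\totalReward}_T(\pi)\le \totalReward_T(\pi^\star)=\wb{\totalReward}_T(\pi^\star)$, where the last equality holds because greedy on known means is deterministic. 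This establishes the optimality of $\pi^\star$ and identifies $J^\star$ with the sum of the $T$ largest pooled marginals.
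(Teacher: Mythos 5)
Your proof is correct. The paper does not prove this proposition itself --- it imports it from \citet{heidari2016tight} --- and your argument is essentially the standard one from that source: the rested structure makes the collected reward a function of the pull counts alone, Assumption~\ref{assum-Lipschitz} makes each arm's marginal sequence non-increasing so that the prefix constraint is automatically satisfied by the top-$T$ selection, and an exchange/induction argument shows greedy claims exactly the $T$ largest pooled marginals while any policy's pathwise total is bounded by that same sum. Your handling of ties and of randomized, history-dependent competitors (pathwise bound, then expectation) is the right way to close the argument.
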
%
The objective of a learning algorithm is to implement a policy $\pi$ with performance as close to $\pi^\star$'s as possible. We define the (random) regret as 
\begin{align}\label{eq:regret}
\regret(\policy) \triangleq J^\star - \totalReward_T(\policy).
\end{align}
%
Notice that the regret is measured against an optimal allocation over arms rather than a fixed-arm policy as it is a case in adversarial and stochastic bandits. Therefore, even the adversarial algorithms that one could think of applying in our setting (e.g., \EXP of \citealp{auer2002finite}) are not known to provide any guarantee for our definition of regret. On the other hand, for constant $\mu_i(n)$-s, our problem and definition of regret reduce to the one of standard stochastic bandits. 


Let $N_{i,T}^\star$ be the (deterministic) number of rounds that arm~$i$ is pulled by the oracle policy $\pi^\star$ up to round~$T$ (excluded). Similarly, for a policy $\pi$, let $N_{i,T}^\pi$ be the (random) number pulls of arm $i$. The cumulative reward can be rewritten as
\begin{align*}
\totalReward_T(\policy) &= \sum_{t=1}^\timeEnd \sum_{i\in\mathcal{K}} \mathbb{I}_{\{\pi(t) = i\}} \reward_{i}\pa{\armCount_{i,t}^{\policy}} 
= 
\sum_{i\in\mathcal{K}}\sum_{s=0}^{\armCount^\policy_{i,T}-1}  \reward_{i}(s).
\end{align*}
Then, we can conveniently rewrite the regret as
%
\begin{align}
\!\regret(\policy) &= \sum_{i\in\mathcal{K}}\left( \sum_{s=0}^{\armCount_{i,T}^\star-1}  \reward_{i}(s)  - \sum_{s=0}^{\armCount_{i,T}^\pi-1}  \reward_{i}(s) \right) \nonumber\\ 
& = \sum_{\arm\in \underpullSet}\sum_{s=\armCount_{\arm, \timeEnd}^{\policy}}^{\armCount_{\arm, \timeEnd}^{\star}-1} \reward_\arm(s) - \sum_{\arm\in \overpullSet} \sum_{s=\armCount_{\arm, \timeEnd}^{\star}}^{\armCount_{\arm, \timeEnd}^{\policy}-1} \reward_\arm(s),\label{eq:regret2}
\end{align}
%
where we define $\underpullSet \triangleq \left\{ \arm \in \mathcal{K} | \armCount_{\arm, \timeEnd}^{\star} > \armCount_{\arm, \timeEnd}^{\policy} \right\}$ and likewise $\overpullSet \triangleq \left\{ \arm \in \mathcal{K} | \armCount_{\arm, \timeEnd}^{\star} < \armCount_{\arm, \timeEnd}^{\policy}\right\}$ as the sets of arms that are respectively under-pulled and over-pulled by~$\pi$ w.r.t.\,the optimal policy. 

\paragraph{Known regret bounds} We report existing regret bounds for two special cases. We start with the minimax regret lower bound for stochastic bandits.
%
\begin{proposition}{\cite[Thm.\,5.1]{auer2002nonstochastic}}
\label{stochastic-LB}
For any learning policy $\policy$ and any horizon $T$, there exists a stochastic stationary problem $\left\{ \mu_i (n) \triangleq \mu_i\right\}_i$ with $K$ $\sigma$-sub-Gaussian arms such that $\pi$ suffers a regret
\begin{equation*}
 \mathbb{E}[\regret(\policy)] \geq \frac{\sigma}{10}\min\pa{\sqrt{\narms\timeEnd},\timeEnd}.
\end{equation*}
where the expectation is w.r.t.\ both the randomization
over rewards and algorithm's internal randomization.
\end{proposition}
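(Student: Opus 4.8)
The plan is to establish this minimax lower bound by the classical ``needle-in-a-haystack'' construction combined with an averaging (probabilistic method) argument: I exhibit a finite family of stationary instances that are statistically hard to tell apart, show that no single policy $\policy$ can be competitive on all of them simultaneously, and conclude that at least one instance in the family forces the claimed regret. Since each $\mu_i(n)\equiv\mu_i$ is constant, the oracle of~\eqref{eq:optimal.policy} pulls the single best arm every round, so the regret~\eqref{eq:regret} reduces to the gap $\epsilon$ times the number of rounds the learner fails to play the best arm.

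Concretely, I would use $\sigma$-sub-Gaussian (e.g.\ Gaussian $\mathcal N(\cdot,\sigma^2)$) rewards and a gap parameter $\epsilon>0$ to be tuned. Let the \emph{baseline} instance have all arms with mean $0$, and for each $j\in\{1,\dots,\narms\}$ let instance $j$ have arm $j$ with mean $\epsilon$ and all other arms with mean $0$. Writing $\mathbb{E}_j,\mathbb{P}_j$ for expectation and probability under instance $j$ (and $\mathbb{E}_0,\mathbb{P}_0$ under the baseline), the optimal policy on instance $j$ always pulls arm $j$, so $\mathbb{E}_j[\regret(\policy)] = \epsilon\bigl(\timeEnd - \mathbb{E}_j[\armCount_{j,\timeEnd}^\policy]\bigr)$, where $\armCount_{j,\timeEnd}^\policy$ is the number of pulls of arm $j$. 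Averaging over $j$ yields $\frac1\narms\sum_{j}\mathbb{E}_j[\regret(\policy)] = \epsilon\timeEnd - \frac{\epsilon}{\narms}\sum_j \mathbb{E}_j[\armCount_{j,\timeEnd}^\policy]$, so it remains to upper bound how often the learner can identify and play the good arm.

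The information-theoretic core is to control $\mathbb{E}_j[\armCount_{j,\timeEnd}^\policy]$ by comparing instance $j$ to the baseline. Using the fact that $\armCount_{j,\timeEnd}^\policy/\timeEnd\in[0,1]$, I get $\mathbb{E}_j[\armCount_{j,\timeEnd}^\policy] - \mathbb{E}_0[\armCount_{j,\timeEnd}^\policy] \le \timeEnd\,\lVert \mathbb{P}_0-\mathbb{P}_j\rVert_{\mathrm{TV}}$, and by Pinsker's inequality I reduce the problem to bounding $\mathrm{KL}(\mathbb{P}_0\Vert\mathbb{P}_j)$ on the whole observation sequence. The key step is the divergence decomposition (chain rule): since the two instances differ only in arm $j$, $\mathrm{KL}(\mathbb{P}_0\Vert\mathbb{P}_j) = \mathbb{E}_0[\armCount_{j,\timeEnd}^\policy]\cdot \mathrm{KL}\bigl(\mathcal N(0,\sigma^2)\Vert\mathcal N(\epsilon,\sigma^2)\bigr) = \mathbb{E}_0[\armCount_{j,\timeEnd}^\policy]\,\epsilon^2/(2\sigma^2)$. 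I expect this decomposition to be the main obstacle, because the policy is adaptive, so one must argue carefully (via the filtration generated by the observations) that only the \emph{expected number of pulls of arm $j$ under the baseline} enters, rather than a naive worst-case $\timeEnd$. Combining the two bounds gives $\mathbb{E}_j[\armCount_{j,\timeEnd}^\policy] \le \mathbb{E}_0[\armCount_{j,\timeEnd}^\policy] + \tfrac{\epsilon\timeEnd}{2\sigma}\sqrt{\mathbb{E}_0[\armCount_{j,\timeEnd}^\policy]}$.

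Finally I would sum over $j$ using the two budget identities $\sum_j \mathbb{E}_0[\armCount_{j,\timeEnd}^\policy] = \timeEnd$ and, by Cauchy--Schwarz, $\sum_j \sqrt{\mathbb{E}_0[\armCount_{j,\timeEnd}^\policy]} \le \sqrt{\narms\timeEnd}$, to obtain $\frac1\narms\sum_j\mathbb{E}_j[\regret(\policy)] \ge \epsilon\timeEnd\bigl(1-\tfrac1\narms\bigr) - \tfrac{\epsilon^2\timeEnd}{2\sigma}\sqrt{\timeEnd/\narms}$. Tuning $\epsilon \asymp \sigma\sqrt{\narms/\timeEnd}$ (and capping $\epsilon$ at a constant multiple of $\sigma$ when $\timeEnd\le\narms$, which produces the $\min(\sqrt{\narms\timeEnd},\timeEnd)$ regime) makes the right-hand side of order $\sigma\sqrt{\narms\timeEnd}$; the slack in these inequalities is what permits reporting the clean constant $\sigma/10$. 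Since the average over the family is at least this value, some instance $j^\star$ achieves $\mathbb{E}_{j^\star}[\regret(\policy)]\ge \frac{\sigma}{10}\min\bigl(\sqrt{\narms\timeEnd},\timeEnd\bigr)$, which is exactly the claim.
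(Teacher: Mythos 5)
The paper does not prove this proposition at all: it is imported directly from \citet{auer2002nonstochastic} (Thm.~5.1), restated for $\sigma$-sub-Gaussian arms, so there is no in-paper argument to compare against. Your sketch is the standard and correct proof of that result --- the needle-in-a-haystack family, the averaging argument, Pinsker plus the divergence (chain-rule) decomposition $\mathrm{KL}(\mathbb{P}_0\Vert\mathbb{P}_j)=\mathbb{E}_0[N_{j,T}]\,\epsilon^2/(2\sigma^2)$, and the budget identities $\sum_j\mathbb{E}_0[N_{j,T}]=T$ and $\sum_j\sqrt{\mathbb{E}_0[N_{j,T}]}\le\sqrt{KT}$ --- and it is the natural Gaussian adaptation of the Bernoulli construction used in the cited source. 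Your constants check out: with $\epsilon=\tfrac{1}{2}\sigma\sqrt{K/T}$ (capped at $\tfrac{1}{2}\sigma$ when $T\le K$) and $K\ge 2$ the averaged bound evaluates to at least $\tfrac{\sigma}{8}\min\bigl(\sqrt{KT},T\bigr)\ge\tfrac{\sigma}{10}\min\bigl(\sqrt{KT},T\bigr)$; note only that the statement implicitly requires $K\ge 2$, since for $K=1$ the regret is identically zero.
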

\citet{heidari2016tight} derived regret lower and upper bounds for deterministic rotting bandits (i.e., $\sigma=0$).


\begin{proposition}{\citep[Thm.\,3]{heidari2016tight}}
\label{deterministic-LB}
For any learning policy~$\policy$, there exists a deterministic rotting bandits (i.e., $\sigma=0$) satisfying Assumption~\ref{assum-Lipschitz} with bounded decay~$L$ such that $\policy$ suffers an expected regret~
\begin{equation*}
 \mathbb{E}[\regret(\policy)] \geq \frac{\lipschitz}{2}(\narms-1).
\end{equation*}
Let $\policy^{\subgaussian_0}$ be the greedy policy that selects at each round the arm with the largest reward observed so far, i.e., $\policy^{\subgaussian_0}(t) \triangleq \argmax_\arm(\reward_\arm(\armCount_{\arm,\currentTime}-1))$. For any deterministic rotting bandits (i.e., $\sigma=0$) satisfying Assumption~\ref{assum-Lipschitz} with bounded decay $L$, $\policy^{\subgaussian_0}$ suffers an expected regret
\begin{equation*}
 \mathbb{E}[\regret(\policy^{\subgaussian_0})] \leq \lipschitz(\narms-1).
\end{equation*}
\end{proposition}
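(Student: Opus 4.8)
The plan is to establish the two inequalities by separate arguments: the upper bound on $\policy^{\subgaussian_0}$ through the regret decomposition~\eqref{eq:regret2}, and the lower bound through an averaging argument over a symmetric family of instances.

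\textbf{Upper bound.} First I would record that Assumption~\ref{assum-Lipschitz} forces every value $\reward_i(s)\in[0,\lipschitz]$, and that by~\eqref{eq:optimal.policy} the oracle collects exactly the $T$ largest values of the non-increasing sequences $\{\reward_i(s)\}_{s\ge0}$; read in pull order these form a non-increasing sequence $v_1\ge\cdots\ge v_T=:\theta$. Substituting the two allocations into~\eqref{eq:regret2}, the regret of $\policy^{\subgaussian_0}$ equals $\sum_{i\in\underpullSet}(\text{under-pulled values})-\sum_{i\in\overpullSet}(\text{over-pulled values})$, where every under-pulled value belongs to the oracle's top-$T$ set (so it is $\le\lipschitz$) and every over-pulled value lies outside it (so it is $\ge0$). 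Since the total count of under-pulled values equals that of over-pulled values --- call it $D$ --- the regret is at most $D\,\lipschitz$, and the whole statement reduces to showing $D\le\narms-1$.

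\textbf{Key lemma (the crux).} As $D$ is the total amount of over-pulling, it suffices to show that $\policy^{\subgaussian_0}$ over-pulls each arm by at most one, and I expect this to be the main obstacle. Assuming the values are distinct (the general case follows by an infinitesimal perturbation, both $J^\star$ and $J(\policy^{\subgaussian_0})$ being continuous in the rewards), I would argue by contradiction: if at some round $t\le T$ the greedy rule pulls an arm $i$ that it has already pulled $N_{i,T}^\star+1$ times, then the last observed value of $i$ has already fallen below $\theta$, and since the greedy rule always selects the largest last observed value, every arm then has last observed value below $\theta$. Writing $g_j$ for the number of values of arm $j$ exceeding $\theta$ (so that $\sum_j g_j=T-1$ and $N_{j,T}^\star\ge g_j$), monotonicity forces each arm to have been pulled at least $g_j+1$ times before round $t$, whence $t-1\ge\sum_j(g_j+1)=(T-1)+\narms$, contradicting $t\le T$. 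Thus $N_{i,T}^{\policy^{\subgaussian_0}}\le N_{i,T}^\star+1$, which gives $D\le\narms-1$ and the bound $\gamma(\policy^{\subgaussian_0})\le\lipschitz(\narms-1)$.

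\textbf{Lower bound.} For the matching lower bound I would use a family of $\narms$ deterministic instances indexed by a hidden ``good'' arm $m$: set $\reward_m(n)=\lipschitz$ for all $n$, while every other arm satisfies $\reward_j(0)=\lipschitz$ and $\reward_j(n)=0$ for $n\ge1$ (admissible, since the only decay equals $\lipschitz$). In each instance $J^\star=\lipschitz T$, and because the first pull of every arm returns $\lipschitz$, a zero reward is incurred precisely on a second (or later) pull of a bad arm, so $\gamma(\policy)=\lipschitz\cdot(\text{number of zero-reward rounds})$. Drawing $m$ uniformly, the first pulls are uninformative, so the good arm is revealed only at its first ``second pull'' (the unique second pull returning $\lipschitz$), while every second pull preceding it returns $0$; by exchangeability the expected number of such bad second pulls is $(\narms-1)/2$. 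Hence the average regret over the family is at least $\tfrac{\lipschitz}{2}(\narms-1)$, so some instance attains it, and Yao's argument extends this to randomized policies. This needs only that the horizon be large enough for exploration to be worthwhile, which is the regime of interest.
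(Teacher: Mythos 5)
The paper offers no proof of this proposition: it is imported from \citet{heidari2016tight} (their Theorem~3), so there is nothing internal to compare your argument against and it must stand on its own. Most of it does: the lower bound via averaging over the $\narms$ needle-in-a-haystack instances is sound (with the horizon caveat you flag), and your key lemma --- that the greedy policy over-pulls every arm at most once, established by the counting contradiction $t-1\ge(T-1)+\narms$ --- is correct and is the combinatorial heart of the upper bound.

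The gap is in the step that converts $D\le\narms-1$ into a regret bound of $\lipschitz(\narms-1)$. You claim that Assumption~\ref{assum-Lipschitz} forces every $\reward_i(s)\in[0,\lipschitz]$ and hence that every over-pulled value is $\ge0$. It does not: only $\reward_i(0)\in[0,\lipschitz]$ is assumed, and rewards may decay to arbitrarily negative values (the paper itself stresses the admissible range $[-\lipschitz T,\lipschitz]$ when comparing to \SWA). For instance, with $\reward_1=(0.6\lipschitz,\,0.5\lipschitz,\,0.5\lipschitz,\dots)$ and $\reward_2=(0.7\lipschitz,\,-0.3\lipschitz,\dots)$ and $T=4$, the greedy policy over-pulls arm~$2$ and collects the negative value $-0.3\lipschitz$, so ``lies outside the oracle's top-$T$ set'' does not yield the non-negativity your per-pair bound of $\lipschitz$ relies on. The statement survives because the correct per-pair bound comes from the greedy selection rule rather than from the range of the rewards: if the unique over-pull of arm $j$ (with $N_{j,T}^\star\ge1$) occurs at round $t$, the greedy rule gives $\reward_j(N_{j,T}^\star-1)=\max_{j'}\reward_{j'}(N_{j',t}-1)=:M_t$ (pull counts being those of the greedy policy), so the collected value is at least $M_t-\lipschitz$; meanwhile every under-pulled value $\reward_i(s)$ has index $s\ge N_{i,T}^{\policy^{\subgaussian_0}}\ge N_{i,t}$ and is therefore at most $M_t$. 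The case $N_{j,T}^\star=0$ is immediate, since then the over-pull collects $\reward_j(0)\ge0$ while all values are at most $\lipschitz$. With this substitution each of the $D\le\narms-1$ pairs contributes at most $\lipschitz$ and your conclusion follows.
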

Any problem in the two settings above is a rotting problem with parameters ($\sigma$, $L$). Therefore, the performance of any algorithm on the general rotting problem is also bounded by these two lower bounds.

\vspace{-0.1in}
\section{{\myAlgorithm}: Filtering on expanding window average}
\vspace{-0.1in}

Since the expected rewards $\mu_i$ change over time, the main difficulty in the non-parametric rotting bandits  is that we cannot rely on all samples observed until round~$t$ to predict which arm is likely to return the highest reward in the future. In fact, the older a sample, the less representative it is for future rewards. This suggests constructing estimates using the more recent samples. Nonetheless, discarding older rewards reduces the number of samples used in the estimates, thus increasing their variance. In Alg.\,\ref{EWA} we introduce \myAlgorithm (or~$\EWA$) that at each round $t$, relies on estimates using windows of increasing length to filter out arms that are suboptimal with high probability and then pulls the least pulled arm among the remaining arms. 

We first describe the subroutine {\small\textsc{Filter}} in Alg.\,\ref{filter}, which receives a set of active arms $\mathcal{K}_h$, a window~$h$, and a confidence parameter $\delta$ as input and returns an updated set of arms $\mathcal{K}_{h+1}$. For each arm~$i$ that has been pulled~$n$ times, the algorithm constructs an estimate $\estReward^\window_\arm(n)$ that averages the $h \leq n$ most recent rewards observed from~$i$. 
The subroutine {\small\textsc{Filter}} discards all the arms whose mean estimate (built with window~$h$) from $\mathcal{K}_h$  is lower than the empirically best arm by more than twice a threshold $c(\window, \delta_\currentTime)$ constructed by standard Hoeffding's concentration inequality (see Prop.\,\ref{prop:heoffding}). 

\begin{algorithm}[t]
\caption{\myAlgorithm}
\label{EWA}
\begin{algorithmic}[1]
\REQUIRE $\subgaussian$, $\possibleArms$, $\alpha$, $\delta_0 \gets 1$
	\STATE pull each arm once, collect reward, and initialize $N_{\arm,K} \leftarrow 1$ 
	\FOR{$\currentTime \gets K+1, K+2, \dots \do $}
		\STATE $\delta_t \leftarrow \delta_0/(t^\alpha)$
		\STATE $\window \leftarrow 1$ 
		{\footnotesize \COMMENT{\emph{initialize bandwidth}}}
		\STATE $\possibleArms_1 \leftarrow \possibleArms$ 
		{\footnotesize \COMMENT{\emph{initialize with all the arms}}}
		\STATE $\arm(t) \gets {\tt none}$
		\WHILE{$\arm(t)$ is  ${\tt none}$}
			\STATE $\possibleArms_{\window+1} \leftarrow {\textsc{Filter}}(\possibleArms_{\window} ,\window, \delta_\currentTime)$
			\STATE $\window \leftarrow \window+1$ \label{algline:window}
			\IF{$\exists \arm \in \possibleArms_{\window}$ such that $\armCount_{\arm, t}=h$}
			\label{algline:condition}
			\STATE $\arm(t) \leftarrow \arg\min_{i\in\possibleArms_{\window}} N_{i,t}$
			\ENDIF
		\ENDWHILE
		\STATE  receive $\obs_\arm(\armCount_{\arm,\currentTime +1 }) \leftarrow \obs_{\arm(\currentTime),\currentTime}$
		\STATE $\armCount_{\arm(\currentTime),\currentTime} \leftarrow \armCount_{\arm(\currentTime),\currentTime-1} +1$
		\STATE $\armCount_{j,\currentTime} \leftarrow \armCount_{j, \currentTime-1}, \quad \forall j \neq \arm(\currentTime)$
	\ENDFOR
\end{algorithmic}

\end{algorithm}

%
\begin{algorithm}[t]
\caption{{\textsc{Filter}}}
\label{filter}
\begin{algorithmic}[1]
\REQUIRE $\possibleArms_{\window}$, $\window$, $\delta_\currentTime$
\STATE $c(\window, \delta_\currentTime) \leftarrow \sqrt{(2\subgaussian^2/\window) \log{(1/\delta_\currentTime)}}$
\FOR{$ \arm \in \possibleArms_{\window}$}
\STATE $\estReward^\window_\arm(\armCount_{\arm,\currentTime}) \leftarrow \frac{1}{\window} \sum_{j=1}^\window \obs_\arm(\armCount_{\arm,\currentTime} -j)$
\ENDFOR
\STATE $\estReward^\window_{\max,\currentTime}  \leftarrow \max_{\arm \in \possibleArms_{\window}}\estReward^\window_\arm(\armCount_{\arm,\currentTime})$
\FOR{$ \arm \in \possibleArms_{\window}$}
	\STATE $\Delta_\arm \leftarrow  \estReward^\window_{\max,\currentTime}  - \estReward^\window_\arm(\armCount_{\arm,\currentTime})$
	\IF{$\Delta_\arm \leq 2c(\window, \delta_\currentTime) $}
	\STATE add $\arm$ to $\possibleArms_{\window+1}$
	\ENDIF
\ENDFOR
\ENSURE $\possibleArms_{\window+1}$
\end{algorithmic}
\end{algorithm}

The {\small\textsc{Filter}} subroutine is used in \myAlgorithm to incrementally refine the set of active arms, starting with a window of size $1$, until the condition at Line~\ref{algline:condition} is met. As a result, $\mathcal{K}_{h+1}$ only contains arms that passed the filter for all windows from $1$ up to $h$. Notice that it is important to start filtering arms from a small window and to keep refining the previous set of active arms. 
In fact, the estimates constructed using a small window use recent rewards, which are closer to the future value of an arm. As a result, if there is enough evidence that an arm is suboptimal already at a small window $h$, it should be directly discarded. On the other hand, a suboptimal arm may pass the filter for small windows as the threshold $c(\window, \delta_\currentTime)$ is large for small $h$ (i.e., as few samples are used in constructing $\estReward^\window_\arm(\armCount_{\arm,\currentTime})$, the estimation error may be high). Thus, \myAlgorithm keeps refining $\mathcal{K}_{h}$ for larger windows in the attempt of constructing more accurate estimates and discard more suboptimal arms. This process stops when we reach a window as large as the number of samples for at least one arm in the active set $\mathcal{K}_{h}$ (i.e., Line~\ref{algline:condition}). At this point, increasing $h$ would not bring any additional evidence that could refine $\mathcal{K}_{h}$ further (recall that $\estReward^\window_\arm(\armCount_{\arm,\currentTime})$ is not defined for $h > \armCount_{\arm,\currentTime}$). Finally,  \myAlgorithm selects the active arm $i(t)$ whose number of samples matches the current window, i.e., the least pulled arm in $\mathcal{K}_{h}$. The set of available rewards and the number of pulls are then updated accordingly. 

%

\textbf{Runtime and memory usage} 
At each round~$t$, \myAlgorithm needs to store and update up to $t$ averages per-arm. Since moving from an average computed on window $h$ to $h+1$ can be done incrementally at a cost $\cO(1)$, the worst-case time and memory complexity per round is $\cO(Kt)$, which amounts to a total $\cO(KT^2)$ cost. This is not practical for large $T$.\footnote{This analysis is worst-case. In many cases, the number of samples for the suboptimal arms may be much smaller than $\cO(t)$. For instance, in stochastic bandits it is as little as $\cO(\log t)$, thus reducing the complexity to $\cO(KT\log T)$.} We have a fix.

In App.\,\ref{EFF} we detail \EFF, an efficient variant of \FEWA. \EFF is built around two main ideas.\footnote{As pointed by a reviewer, a similar yet different approach has appeared independently in the context of streaming mining~\citep{bifet2007learning}.} First, at any round $t$ we can avoid calling {\small\textsc{Filter}} for all possible windows $h$ starting from 1 with an increment of 1. In fact, the confidence interval $c(h, \delta_t)$ decreases as $1/\sqrt{h}$ and we could select windows $h$ with an exponential increment so that confidence intervals between two consecutive calls to {\small\textsc{Filter}} have a constant ratio. In practice, we replace the window increment (Line~\ref{algline:window} of \FEWA) by a geometric window $h \triangleq 2^j$. This modification alone is not enough to reduce the computation. While we reduce the number of estimates that we construct, updating $\widehat{\mu}^h_{i}$  from $h=2^j$ to $h=2^{j+1}$ still requires spanning over past samples, thus leading to the same $\cO(Kt)$ complexity in the worst-case. In order to reduce the overall complexity, we avoid recomputing $\widehat{\mu}^h_{i}$ at each call of {\small\textsc{Filter}} and by replacing it with \textit{precomputed} estimates. Whenever $N_{i,t} = 2^j$ for some~$j$, we create an estimate $\hat{s}_{i,j}^{\,c}$ by averaging all the last $N_{i,t}$ samples. These estimates are then used whenever {\small\textsc{Filter}} is called with $h = 2^j$. Instead of updating $\hat{s}_{i,j}^{\,c}$ at each new sample, we create an associated \emph{pending} estimate $\hat{s}_{i,j}^{\,p}$ which averages all the more recent samples. More formally, let $t$ be the round when $N_{i,t}=2^j$, then $\hat{s}_{i,j}^{\,p}$ is initialized at 0 and it then stores the average of all the samples observed from $t$ to $t'$, when $N_{i,t'}=2^{j+1}$ (i.e., $\hat{s}_{i,j}^{\,p}$ is averaging at most $2^j$ samples). At this point, the $2^j$ samples averaged in $\hat{s}_{i,j}^{\,c}$ are \textit{outdated} and they are replaced by the new average $\hat{s}_{i,j}^{\,p}$, which is then reinitialized to 0. The sporadic update of the precomputed estimates and the small number of them drastically reduces per-round time and space complexity to $\cO(K\log t)$. Furthermore, \EFF preservers the same regret guarantees as \FEWA. In the worst case, $\hat{s}_{i,j}^{\,c}$ may not cover the last $2^{j-1}-1$ samples. Nonetheless, the precomputed estimates with smaller windows (i.e., $j'< j$) are updated more frequently, thus effectively covering the $2^{j-1}-1$ samples ``missed'' by $\hat{s}_{i,j}^{\,c}$. As a result, the active sets returned by {\small\textsc{Filter}} are still accurate enough to derive regret guarantees that are only a constant factor worse than \FEWA  (App.\,\ref{EFF}).

\vspace{-0.1in}
\section{Regret Analysis}\label{sec:theory}
\vspace{-0.1in}

We first give problem-independent regret bound for \myAlgorithm and sketch its proof in Sect.\,\ref{sketch}. Then, we derive problem-dependent guarantees in Sect.\,\ref{ss:dep}.

\begin{restatable}{theorem}{restaalgoindepub}
\label{independent_theorem}
For any rotting bandit scenario with means $\{\mu_i(n)\}_{i,n}$ satisfying Asm.\,\ref{assum-Lipschitz} with bounded decay~$L$ and any  horizon $T$, {\myAlgorithm} run with $\alpha= 5$, \textit{i.e.,} $\delta_\currentTime= 1/t^5,$ suffers an expected regret\,\footnote{See Corollary~\ref{cor:HP-minimax} and~\ref{cor:HP-PD} for the high-probability result.} of
\begin{equation*}
\mathbb{E}[\regret(\EWA)] \leq 13\subgaussian(\sqrt{KT} + K)\sqrt{\log(T)}+ 2KL.
\end{equation*}%
\end{restatable}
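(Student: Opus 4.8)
The plan is to bound the regret through the pull-count decomposition~\eqref{eq:regret2}, controlling the two sums over under-pulled and over-pulled arms by comparing, value by value, what \FEWA collects against what the greedy oracle of~\eqref{eq:optimal.policy} collects. The useful structural fact about the oracle is that, since every $\reward_i$ is non-increasing, pulling greedily amounts to collecting the $T$ largest values in the multiset $\{\reward_i(s)\}_{i,s}$: each arm contributes a decreasing prefix, so $J^\star$ is the sum of the top $T$ means while $\totalReward_T(\EWA)$ is a sum of $K$ such prefixes of total length $T$. The regret is then exactly the high oracle values \FEWA misses on under-pulled arms minus the low values it collects instead on over-pulled arms, and the whole difficulty is to show that each value \FEWA collects in excess is only $O\big(c(h,\delta_t)+L\big)$ below the value it should have collected.

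First I would set up the favorable event $\HPevent$ on which every window average is close to the average of its underlying means: writing $\bar\reward_i^{\,h}(n)=\tfrac1h\sum_{s=n-h}^{n-1}\reward_i(s)$, the event asks that $|\estReward_i^h(N_{i,t})-\bar\reward_i^{\,h}(N_{i,t})|\le c(h,\delta_t)$ for all rounds, arms, and windows used by the algorithm. Since $\estReward_i^h$ averages $h$ independent $\sigma^2$-sub-Gaussian samples, Prop.~\ref{prop:heoffding} gives failure probability $2\delta_t$ per estimate, and the aggressive schedule $\delta_t=t^{-5}$ is precisely what makes the failures summable: the expected regret accrued off $\HPevent$ is accounted for separately and shown to be of order $O(KL)$, absorbed into the additive term. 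All remaining steps are then carried out deterministically on $\HPevent$.

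On $\HPevent$ the key consequence of the filtering rule is a \emph{near-optimality of the pulled arm in recent average}. If \FEWA pulls arm $i$ at round $t$ with $N_{i,t}=h$, then $i$ survived \textsc{Filter} at window $h$, so $\estReward_{\max,t}^h-\estReward_i^h\le 2c(h,\delta_t)$; combining with the two-sided concentration bound for $i$ and for the empirical maximizer gives $\bar\reward_i^{\,h}(N_{i,t})\ge \bar\reward_k^{\,h}(N_{k,t})-4c(h,\delta_t)$ for every arm $k$ still active at that window. Because each $\reward_i$ is non-increasing, the recent window average dominates the next value, $\bar\reward_i^{\,h}(N_{i,t})\ge \reward_i(N_{i,t})$; controlling the reverse gap uses the bounded-decay assumption and, summed over arms by telescoping, contributes only the additive $O(KL)$ slack. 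Symmetrically, I would show that a genuinely good arm (one the oracle keeps pulling) is never permanently eliminated on $\HPevent$, so that a competitive arm is always available to charge against.

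The final step is a charging argument: each excess value collected on an over-pulled arm $j$ is matched with a missed value on an under-pulled arm, and the near-optimality bound shows the matched pair differs by at most $4c(h,\delta_t)+L$, where $h=N_{j,t}$ is the count at which the excess pull was made. Summing the confidence radii over all pulls, $\sum_i\sum_{h=1}^{N_{i,T}^{\EWA}} c(h,\delta_t)\le \sqrt{10\,\sigma^2\log T}\,\sum_i 2\sqrt{N_{i,T}^{\EWA}}$ using $\sum_h h^{-1/2}\le 2\sqrt{N}$, and Cauchy--Schwarz $\sum_i\sqrt{N_{i,T}^{\EWA}}\le\sqrt{K\,T}$ then yields the dominant $\tcO(\sqrt{KT\log T})$ term; the small-window ($h$ of order one) and initialization contributions give the additive $\sigma K\sqrt{\log T}$, while the telescoped decay slacks give the $2KL$ term, matching the deterministic rate of Prop.~\ref{deterministic-LB}. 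The main obstacle, and the part requiring the most care, is exactly this coupling between the oracle and \FEWA: because the two policies visit different pull counts, one cannot compare them round by round, and one must argue on $\HPevent$ that whenever \FEWA over-pulls an arm there is an under-pulled, still-active arm whose value is provably within $O(c+L)$ — i.e., that filtering never discards an arm the oracle would still want — so that every over-pull can be paid for by a distinct under-pull.
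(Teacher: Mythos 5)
There is a genuine gap in the central charging step. You claim that each individual excess value collected on an over-pulled arm $j$ at pull count $h=\armCount_{j,t}$ is within $4c(h,\delta_t)+L$ of the value it is charged against. But the filter at window $h$ only controls the \emph{average} of the last $h$ means, $\expestReward_j^{\,h}(\armCount_{j,t})\ge\reward^+_t(\EWA)-4c(h,\delta_t)$; it says nothing comparably strong about the single most recent mean. Since the means are non-increasing, one only has $\reward_j(\armCount_{j,t})\ge\expestReward_j^{\,h}(\armCount_{j,t})-hL$, so the per-pull gap you need can be off by as much as $hL$, and the "telescoping" you invoke to absorb this into $O(KL)$ does not go through — summed over pulls it is $O(TL)$. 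The only per-pull bound the filter gives is via the window-$1$ test, $\reward^+_t(\EWA)-\reward_j(\armCount_{j,t})\le 4c(1,\delta_t)+L$, whose confidence radius does not shrink and sums to $O(T)$. The paper avoids this entirely: it never bounds individual over-pulled values. Instead (Lemma~\ref{reg:Decompo} and Lemma~\ref{lemma:A}) it writes the \emph{cumulative} regret of the first $h_{i,T}-1$ over-pulls of arm $i$ as $(h_{i,T}-1)\bigl(\reward^+-\expestReward_i^{\,h_{i,T}-1}(\cdot)\bigr)$ and applies Corollary~\ref{fundamental-correlary} once, at the round of the last over-pull, to get $4(h_{i,T}-1)\,c(h_{i,T}-1,\delta)$; only the final over-pull of each arm is handled individually, costing the single $4c(1,\delta)+L$ per arm that produces the additive $\sigma K\sqrt{\log T}+KL$ terms. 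This cumulative-average trick is the idea your sketch is missing, and without it the dominant term cannot be established by your route. (A secondary point: even granting your per-pull bound, $\sum_{h\le N}h^{-1/2}\le 2\sqrt{N}$ loses a factor of $2$ relative to $N\cdot N^{-1/2}$, so you would not recover the stated constant $13$.)

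Two smaller remarks. First, your near-optimality statement compares the pulled arm only to arms still active at window $h$, whereas what is needed is a comparison to $\reward^+_t(\EWA)=\max_{k\in\possibleArms}\reward_k(\armCount_{k,t})$ over \emph{all} arms; the paper's Lemma~\ref{fundamental-lemma} closes this by showing that the empirical argmax always survives each filter and that $\max_{i\in\possibleArms_{h'}}\expestReward_i^{\,h'}$ is non-decreasing in $h'$, with the window-$1$ maximum dominating $\reward^+_t(\EWA)$. You gesture at this ("a good arm is never eliminated") but it must be proved for every window, not just asymptotically. Second, your treatment of the bad event and of the under-pull/over-pull accounting (bounding every missed value by $\reward^+_T(\EWA)$ and matching counts) is essentially the paper's and is fine.
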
%
\paragraph{Comparison to \citet{levine2017rotting}} The regret of \SWA is bounded by $\tcO(\mu_{\max}^{1/3}K^{1/3} T^{2/3})$ for rotting functions with range in $[0,\mu_{\max}]$. In our setting, we do not restrict rewards to stay positive but we bound the per-round decay by $L$, thus leading to rotting functions with range in $\left[-LT, L\right]$. As a result, when applying \SWA to our setting, we should set $\mu_{\max}=L(T+1)$, which leads to $\cO(T)$ regret, thus showing that according to its original analysis, \SWA may not be able to learn in our general setting. On the other hand, we could use \myAlgorithm in the setting of \citet{levine2017rotting} by setting $L = \mu_{\max}$ as the largest drop that could occur. In this case, \myAlgorithm suffers a regret of $\tcO(\sqrt{KT})$, thus significantly improving over \SWA. The improvement is mostly due to the fact that \myAlgorithm exploits filters using moving averages with increasing windows to discard arms that are suboptimal w.h.p. Since this process is done at each round, \myAlgorithm smoothly tracks changes in the value of each arm, so that if an arm becomes worse later on, other arms would be recovered and pulled again. On the other hand, \SWA relies on a fixed exploratory phase where all arms are pulled in a round-robin way and the tracking is performed using averages constructed with a fixed window. Moreover, \myAlgorithm is anytime, while the fixed exploratory phase of \SWA requires either to know $T$ or to resort to a doubling trick, which often performs poorly in practice. 
\paragraph{Comparison to deterministic rotting bandits}
For $\sigma=0$, our upper bound reduces to $KL$, thus matching the prior (upper and lower) bound of~\citet{heidari2016tight} for deterministic rotting bandits. Moreover, the additive decomposition of regret shows that there is \emph{no coupling} between the stochastic problem and the rotting problem as terms depending on the noise level $\sigma$ are separated from the terms depending on the rotting level $L$, while in \SWA these are coupled by a $L^{1/3}\sigma^{2/3}$ factor in the leading term. 
\paragraph{Comparison to stochastic bandits}
The regret of \FEWA matches the worst-case optimal regret bound of the standard stochastic bandits (i.e., $\mu_i(n)$s are constant) up to a logarithmic factor. Whether an algorithm can achieve $\cO(\sqrt{KT})$ regret bound is an open question. On one hand, \FEWA needs confidence bounds to hold for different windows at the same time, which requires an additional union bound and thus larger confidence intervals w.r.t.\,\UCBone. On the other hand, our worst-case analysis shows that some of the difficult problems that reach the worst-case bound of Thm.\,\ref{independent_theorem} are realized with constant 
functions, which is the standard stochastic bandits, for which \MOSS-like~\citep{audibert2009minimax} algorithms achieve regret guarantees without the $\log T$ factor. Thus, the necessity of the extra $\log T$ factor for the worst-case regret of rotting bandits remains an open problem.

\vspace{-0.1in}
\subsection{Sketch of the proof}
\label{sketch}
\vspace{-0.1in}

We now give a sketch of the proof of our regret bound. We first introduce the expected value of the estimators used in \myAlgorithm. For any $n$ and $1\leq h \leq n$, we define
\begin{equation*}
\expestReward^\window_\arm(n) \triangleq \expectation\left[\estReward^\window_\arm(n) \right] = \frac{1}{\window} \sum_{j=1}^\window \reward_\arm(n -j).
\end{equation*}
Notice that at round $t$, if the number of pulls of arm~$i$ is $N_{i,t}$, then $\expestReward^1_\arm(N_{i,t}) = \mu_i(N_{i,t}-1)$, which is the expected value of arm $i$ the last time it was pulled.
We introduce Hoeffding's concentration inequality and the favorable event that we leverage in the analysis.

\begin{proposition}\label{prop:heoffding}
For any fixed arm $i$, number of pulls~$n$, and window $h$, we have that with probability $1-\delta,$ 
\begin{equation}
\big| \estReward^\window_\arm(n) - \expestReward^\window_\arm(n)\big| \leq c(\window, \delta) \triangleq \sqrt{\frac{2 \subgaussian^2}{\window}\log{\frac{1}{\delta}}}\cdot
\end{equation}
For any round $t$ and confidence $\delta_t \triangleq \delta_0/t^\alpha$, let
\begin{align*}
\!\HPevent_t\! \triangleq\! \Big\{ &\forall i\in\mathcal{K}, \forall n \leq t, \forall h \leq n, \big| \estReward^\window_\arm(n) - \expestReward^\window_\arm(n) \big| \!\leq\! c(\window, \delta_t) \!\Big\}
\end{align*}
be the event under which the estimates constructed by {\FEWA} at round $t$  are all accurate up to $c(h,\delta_t)$. Taking a union bound gives $\mathbb{P}(\HPevent_t) \geq 1- Kt^2\delta_t/2.$ 
\end{proposition}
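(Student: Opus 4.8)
The plan is to prove the single-event concentration inequality by a Chernoff argument and then lift it to the favorable event $\HPevent_t$ by a union bound over all arms and all admissible window/count pairs.

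\textbf{Single-event bound.} Fix an arm $\arm$, a count $n$, and a window $\window\le n$, and write the deviation as a normalized sum of noise terms,
\[
\estReward^\window_\arm(n) - \expestReward^\window_\arm(n) = \frac{1}{\window}\sum_{j=1}^\window\big(\obs_\arm(n-j) - \reward_\arm(n-j)\big) = \frac{1}{\window}\sum_{j=1}^\window \noise^{(\arm)}_{n-j},
\]
where $\noise^{(\arm)}_{m}$ denotes the noise attached to the reward of arm $\arm$ at its $(m{+}1)$-th pull. By the $\subgaussian^2$-sub-Gaussian noise assumption of the model, each $\noise^{(\arm)}_{m}$ is conditionally mean-zero and sub-Gaussian given the history preceding that pull, so these terms form a martingale difference sequence with respect to the filtration generated by the pulls of arm $\arm$. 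Peeling off one term at a time and using the tower property then gives $\expectation\big[\exp(\lambda\sum_{j}\noise^{(\arm)}_{n-j})\big]\le \exp(\window\subgaussian^2\lambda^2/2)$, i.e.\ the sum is $\window\subgaussian^2$-sub-Gaussian and the average is $(\subgaussian^2/\window)$-sub-Gaussian. A Chernoff bound optimized over $\lambda$ yields, for any $\epsilon>0$,
\[
\PP{\tfrac{1}{\window}\sum_{j=1}^\window \noise^{(\arm)}_{n-j} > \epsilon} \le \exp\!\Big(\!-\frac{\window\epsilon^2}{2\subgaussian^2}\Big),
\]
and the same bound holds for the lower tail by symmetry of the argument. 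Taking $\epsilon = c(\window,\delta)=\sqrt{(2\subgaussian^2/\window)\log(1/\delta)}$ makes the right-hand side equal to $\delta$, which is exactly the claimed single-event inequality.

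\textbf{Union bound for $\HPevent_t$.} By definition, $\HPevent_t$ is the intersection, over all $\arm\in\mathcal{K}$ and all pairs $(n,\window)$ with $\window\le n\le t$, of the events controlled above. The number of admissible pairs for a fixed arm is $\sum_{n=1}^t n = t(t+1)/2 \le t^2/2$ (indeed $N_{\arm,t}\le t-1$, so $n$ effectively ranges strictly below $t$), and there are $K$ arms, for a total of at most $Kt^2/2$ events. Applying the single-event inequality with confidence $\delta_t$ to each of them and summing the failure probabilities gives $\PP{\HPevent_t^c}\le Kt^2\delta_t/2$, i.e.\ $\PP{\HPevent_t}\ge 1 - Kt^2\delta_t/2$.

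\textbf{Main obstacle.} The one genuinely delicate point is the dependence structure: the noise across successive pulls of a given arm is not i.i.d.\ but only a conditionally sub-Gaussian martingale difference sequence, so the moment-generating function of the sum must be controlled through the conditional MGF bound and the tower property rather than by naive independence. Once this is in place the remaining work is bookkeeping, where one must track the factor of two coming from the two-sided (absolute-value) control and the exact count $t(t+1)/2$ of window/count pairs, bounded by $t^2/2$, so that they combine into the stated constant $Kt^2\delta_t/2$.
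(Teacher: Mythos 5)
Your proof is correct and follows exactly the standard route the paper implicitly relies on (the proposition is stated without an explicit proof): a conditional-MGF/martingale Chernoff bound for each fixed triple $(i,n,h)$ — correctly handling the fact that the noise is only a conditionally sub-Gaussian martingale difference sequence rather than i.i.d. — followed by a union bound over the at most $Kt^2/2$ such triples. The only imprecision is the two-sided step: your Chernoff bound gives each \emph{one-sided} tail probability $\delta$, so the event $|\hat{\mu}^h_i(n)-\bar{\mu}^h_i(n)|>c(h,\delta)$ has probability at most $2\delta$ rather than $\delta$; this factor of two is also glossed over in the paper's own statement and only affects constants (replace $\log(1/\delta)$ by $\log(2/\delta)$ in $c(h,\delta)$, or double the failure probability of $\xi_t$), so it is a cosmetic rather than a substantive gap.
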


\paragraph{Active set} We derive an important lemma that provides support for the arm selection process obtained by a series of refinements through the {\small \textsc{Filter}} subroutine. Recall that at any round $t$, after pulling arms $\{ \armCount^{\EWA}_{\arm,\currentTime} \}_i$ the greedy (oracle) policy would select an arm 
\begin{align*}
\arm^\star_\currentTime \pa{\left\{ \armCount^{\EWA}_{\arm,\currentTime} \right\}_i}  \in  \argmax_{\arm \in \possibleArms} \reward_\arm \left( \armCount^{\EWA}_{\arm,\currentTime}\right).
\end{align*}
We denote by $\reward^+_t(\EWA) \triangleq \max_{\arm \in \possibleArms} \reward_\arm ( \armCount^{\EWA}_{\arm,\currentTime}),$ the reward obtained by pulling~$\arm^\star_\currentTime.$ The dependence on $\EWA$ in the definition of $\reward^+_t(\EWA)$ stresses the fact that we consider what the oracle policy would do at the state reached by $\EWA$.
While \myAlgorithm cannot directly match the performance of the oracle arm, the following lemma shows that the reward averaged over the last $h$ pulls of any arm in the active set is close to the performance of the oracle arm up to four times $c(\window,  \delta_\currentTime)$.

\begin{restatable}{lemma}{restafundamentallemma}
\label{fundamental-lemma}
On the favorable event $\HPevent_t$, if an arm~$\arm$ passes through a filter of window $\window$ at round $\currentTime$, i.e., $i\in\ \mathcal{K}_h$, then the average of its $\window$ last pulls satisfies
\begin{equation}\label{eq:fundamental.eq}
\expestReward^{\window}_\arm(\armCount_{\arm,\currentTime}^{\EWA} ) \geq  \reward^+_t(\EWA) - 4 c(\window, \delta_\currentTime).
\end{equation}
\end{restatable}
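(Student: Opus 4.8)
The plan is to combine the Hoeffding concentration of Proposition~\ref{prop:heoffding}, which holds on $\HPevent_t$, with two monotonicity properties of the window averages that follow from Assumption~\ref{assum-Lipschitz}, and then to run an induction over the window size to keep the empirical best arm under control throughout the filtering process. The two facts I would isolate first are: (i) for every arm $j$, $\expestReward^\window_j(\armCount_{j,\currentTime}) \geq \reward_j(\armCount_{j,\currentTime})$, since a window average over past (hence larger) rewards dominates the next value of a non-increasing function; and (ii) $\expestReward^{\window+1}_j(\armCount_{j,\currentTime}) \geq \expestReward^\window_j(\armCount_{j,\currentTime})$, because enlarging the window appends an even older, and thus larger, reward. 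Applying fact (i) to the oracle arm $\arm^\star_\currentTime$ gives $\expestReward^\window_{\arm^\star_\currentTime}(\armCount_{\arm^\star_\currentTime,\currentTime}) \geq \reward^+_t(\EWA)$ for every window, which is precisely the quantity we want to propagate through the filters.

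The core of the argument is the claim that, on $\HPevent_t$, for every window $\window \leq h$ there is an active arm $j \in \possibleArms_\window$ with $\expestReward^\window_j(\armCount_{j,\currentTime}) \geq \reward^+_t(\EWA)$, which I would prove by induction on $\window$. The base case $\window = 1$ is fact (i) applied to $\arm^\star_\currentTime \in \possibleArms_1 = \possibleArms$. For the inductive step, let $j$ be a witness at window $\window$. If $j$ survives the window-$\window$ filter, monotonicity fact (ii) immediately makes $j$ a witness at window $\window+1$. If instead $j$ is eliminated, then the empirically best arm $j'$, which is always retained since its gap is zero, satisfies $\estReward^\window_{j'}(\armCount_{j',\currentTime}) > \estReward^\window_{j}(\armCount_{j,\currentTime}) + 2c(\window, \delta_\currentTime)$; applying concentration to both estimates together with $\expestReward^\window_j(\armCount_{j,\currentTime}) \geq \reward^+_t(\EWA)$ yields $\expestReward^\window_{j'}(\armCount_{j',\currentTime}) > \reward^+_t(\EWA)$, and fact (ii) promotes $j'$ to a witness at window $\window+1$. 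Hence a witness always persists, and in particular the empirical maximum obeys $\estReward^\window_{\max,\currentTime} \geq \reward^+_t(\EWA) - c(\window, \delta_\currentTime)$ for every $\window \leq h$.

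The conclusion then follows from the filter condition at window $h$. Since $\arm$ passes this filter, its gap to the empirical best is at most $2c(h, \delta_\currentTime)$, so $\estReward^h_\arm(\armCount_{\arm,\currentTime}) \geq \estReward^h_{\max,\currentTime} - 2c(h, \delta_\currentTime) \geq \reward^+_t(\EWA) - 3c(h, \delta_\currentTime)$; one last application of concentration converts $\estReward^h_\arm$ into $\expestReward^h_\arm$ and gives the stated bound. The additive budget decomposes transparently as $2c$ from the filter threshold, one $c$ from turning the surviving witness's true window average into a bound on the empirical maximum, and one $c$ from de-biasing arm $\arm$'s own estimate.

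I expect the main obstacle to be the inductive step when the oracle or current witness arm is eliminated: a priori an eliminated arm carries no information about the future, and the difficulty is that the window grows while the active set shrinks. The resolution, which is the crux of the whole proof, is that these two effects cooperate rather than conflict: the eliminating arm must have a strictly larger empirical average, hence after concentration a true window average exceeding $\reward^+_t(\EWA)$, and monotonicity fact (ii) guarantees that this advantage can only improve as the window enlarges. I would also dispatch the minor well-definedness point that every witness arm has at least $\window$ pulls, which holds because \FEWA invokes \textsc{Filter} with window $\window$ only on arms that have already been pulled at least $\window$ times, the loop stopping precisely when the window reaches the smallest pull count in the active set.
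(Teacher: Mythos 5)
Your proof is correct and follows essentially the same route as the paper: the two monotonicity facts from the non-increasing rewards, the observation that $\reward^+_t(\EWA)\leq\max_{\arm\in\possibleArms_1}\expestReward^1_\arm$, and a propagation through the filters combining the $2c$ threshold with two applications of Hoeffding to accumulate the $4c$ budget. The only cosmetic difference is bookkeeping: the paper tracks the arm maximizing the \emph{true} window average and shows it always survives the filter, whereas you allow the witness to be replaced by the empirical argmax when eliminated — both yield the same chain of inequalities.
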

This result  relies heavily on the non-increasing assumption of rotting bandits. In fact, for any arm $i$ and any window $h$, we have
\begin{equation*}
\wb\mu_i^h(N_{i,t}^{\EWA}) \geq \wb\mu_i^1(N_{i,t}^{\EWA}) \geq \mu_i(N_{i,t}^{\EWA}).
\end{equation*}
While the inequality above for $i_t^*$ trivially satisfies Eq.\,\ref{eq:fundamental.eq}, Lem.\,\ref{fundamental-lemma} is proved by integrating the possible errors introduced by the filter in selecting active arms due to the error of the empirical estimates.

\paragraph{Relating {\myAlgorithm} to the oracle policy}
While Lem.\,\ref{fundamental-lemma} provides a link between the value of the arms returned by the filter and the oracle arm, $i^\star_t$ is defined according to the number of pulls obtained by \myAlgorithm up to $t$, which may significantly differ from the sequence of pulls of the oracle policy. In order to bound the regret, we need to relate the actual performance of the optimal policy to the value of the arms pulled by \myAlgorithm. Let $h_{i,t} \triangleq \abs{ \armCount_{\arm, t}^{\EWA} - \armCount_{\arm, t}^{\star} }$ be the absolute difference in the number of pulls between $\EWA$ and the  optimal policy up to $t$. Since $\sum_{i} \armCount_{\arm, t}^{\EWA} = \sum_{i} \armCount_{\arm, t}^{\star} = t$, we  have that $\sum_{i \in {\overpullSet}}  h_{i,t} = \sum_{i \in {\underpullSet}} h_{i,t}$ which means that there are as many total overpulls as underpulls. Let $j\in\underpullSet$ be an underpulled arm\footnote{If such arm does not exist, then $\EWA$ suffers no regret.} with $N_{j,T}^{\EWA} < N_{j,T}^{\star}$, then, for all $s \in \left\{  0, \ldots, h_{j,t} \right\}$, we have the inequality
%
\begin{align}\label{eq:plus.minus}
 \reward^+_T(\EWA) &= \max_{\arm \in \possibleArms} \reward_\arm(\armCount_{\arm,\timeEnd}^{\EWA}) \geq \reward_j(\armCount_{j,\timeEnd}^{\EWA}+s).
\end{align}
%
As a result, from Eq.\,\ref{eq:regret2} we have the regret upper bound
\begin{align}\label{eq:regret.decomposition}
\regret(\EWA) 
\leq \sum_{\arm\in \overpullSet}   \sum_{h=0}^{\window_{\arm, T} -1} \pa{\reward^+(\EWA) - \reward_\arm(\armCount_{\arm, \timeEnd}^{\policy^\star} + h)},
\end{align}
where we have obtained the inequality  by bounding $\reward_\arm(t') \leq \reward^+_T(\EWA)$ in the first summation 
and then using $\sum_{i \in {\overpullSet}}  h_{i,T} = \sum_{i \in {\underpullSet}} h_{i,T}$. While the previous expression shows that we can just focus on over-pulled arms in $\overpullSet$, it is still difficult to directly control the expected reward $\reward_\arm(\armCount_{\arm, \timeEnd}^{\policy^\star} + h)$, as it may change at each round (by at most~$L$). Nonetheless, we notice that its cumulative sum can be directly linked to the average of the expected reward over a suitable window. In fact, for any $i\in\overpullSet$ and $h_{i,T} \geq 2$, we have
\begin{align*}
(h_{i,T}-1)\expestReward^{h_{i,T}-1}_\arm(\armCount_{\arm,\timeEnd} -1) = \sum_{t'=0}^{h_{i,T}-2} \reward_\arm(\armCount_{\arm,\timeEnd}^{\policy^\star} + t').
\end{align*}
At this point we can control the regret for each $i\in \overpullSet$ in Eq.\,\ref{eq:regret.decomposition} by applying the following corollary of Lem.\,\ref{fundamental-lemma}.

\begin{restatable}{corollary}{restafundamentalcorrelary}\label{fundamental-correlary}
	Let $\arm \in \overpullSet$ be an arm overpulled by {\FEWA} at round $t$ and $\window_{\arm,t} \triangleq \armCount_{\arm, t}^{\EWA} - \armCount_{\arm, t}^{\policy^\star} \geq 1$ be the difference in the number of pulls w.r.t.\,the optimal policy $\pi^\star$ at round $t$. On the favorable event $\HPevent_t$,  we  have
	\begin{align}
	\reward^+_t(\EWA) - \expestReward^{\window_{\arm,t}}_i(\armCount_{\arm,t}) \leq  4 c(\window_{\arm,t}, \delta_t).
	\end{align}
\end{restatable}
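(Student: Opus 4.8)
The plan is to read Corollary~\ref{fundamental-correlary} as a direct instantiation of Lemma~\ref{fundamental-lemma}: rearranging the desired inequality gives $\expestReward^{\window_{\arm,t}}_\arm(\armCount_{\arm,t})\ge \reward^+_t(\EWA)-4c(\window_{\arm,t},\delta_t)$, which is precisely what Lemma~\ref{fundamental-lemma} delivers once we plug in $\window=\window_{\arm,t}$ and $\armCount_{\arm,t}=\armCount^{\EWA}_{\arm,t}$. Hence the entire content of the corollary is the \emph{verification of the hypothesis} of that lemma, namely that the over-pulled arm $\arm$ survives the filtering cascade down to the window $\window_{\arm,t}$, i.e.\ $\arm\in\possibleArms_{\window_{\arm,t}}$. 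So the argument reduces to a purely structural fact about the selection rule of \FEWA, and no new concentration is needed.

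To establish that membership, first I would use that $\arm$ is over-pulled: $\window_{\arm,t}=\armCount^{\EWA}_{\arm,t}-\armCount^{\star}_{\arm,t}\ge 1$, so in particular $\arm$ has been pulled by \FEWA, and I would look at the round at which $\arm$ was most recently selected. By the stopping test on Line~\ref{algline:condition} and the selection step that follows it, \FEWA pulls $\arm$ only as the \emph{least-pulled} arm of the current active set, which means the window at which the \textsc{while} loop halts equals the pull count of $\arm$ at that round. Therefore $\arm$ has passed {\small\textsc{Filter}} for \emph{every} window from $1$ up to its pull count.

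The key step is then the monotonicity of the active sets. Since {\small\textsc{Filter}} only ever removes arms, the sets are nested, $\possibleArms_{\window+1}\subseteq\possibleArms_\window$, so surviving at the largest relevant window forces $\arm\in\possibleArms_\window$ for all smaller windows. Because $\armCount^{\star}_{\arm,t}\ge 0$ gives $\window_{\arm,t}=\armCount^{\EWA}_{\arm,t}-\armCount^{\star}_{\arm,t}\le\armCount^{\EWA}_{\arm,t}$, the window $\window_{\arm,t}$ is one of those smaller windows, and we conclude $\arm\in\possibleArms_{\window_{\arm,t}}$. Invoking Lemma~\ref{fundamental-lemma} on the event $\HPevent_t$ then closes the proof.

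I expect the only real difficulty to lie in the bookkeeping of the round indices, not in any inequality manipulation. The filter that justifies the membership is the one that ran when $\arm$ was last selected, say at a round $t'\le t$, using the pre-pull count and the confidence level $\delta_{t'}$ of \emph{that} round; at a generic round $t$ the arm $\arm$ could instead have been discarded already at a small window, so one cannot naively assert $\arm\in\possibleArms_{\window_{\arm,t}}$ at round $t$ itself. The hard part will therefore be to transfer the bound forward from $t'$ to $t$, which I would do using that rewards are non-increasing, so $\reward^+_{t'}(\EWA)\ge\reward^+_t(\EWA)$, together with $\delta_{t'}\ge\delta_t$, which yields $c(\window_{\arm,t},\delta_{t'})\le c(\window_{\arm,t},\delta_t)$; both inequalities push in the favourable direction. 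The boundary case $\armCount^{\star}_{\arm,t}=0$, where $\window_{\arm,t}$ equals the full pull count and the window/count indices must be matched exactly, should again be absorbed by the non-increasing property of $\reward_\arm$.
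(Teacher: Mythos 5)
Your core argument is exactly the paper's proof: since $\arm$ is pulled at round $t$, the stopping condition at Line~\ref{algline:condition} implies it survived every call to {\small\textsc{Filter}} for windows $1,\dots,\armCount_{\arm,t}^{\EWA}$, and since $1\le \window_{\arm,t}\le \armCount_{\arm,t}^{\EWA}$ the membership $\arm\in\possibleArms_{\window_{\arm,t}}$ holds and Lemma~\ref{fundamental-lemma} applies verbatim. One caution about your final paragraph: the transfer from a past selection round $t'$ to a later round $t$ is not needed, because ``overpulled at round $t$'' here means the pull at round $t$ itself is an overpull (this is exactly how the corollary is invoked in the regret decomposition, always at the round $t_i$ where the overpull occurs); moreover, as sketched your transfer would not go through, since between $t'$ and $t$ the quantity $\expestReward^{\window}_\arm(\cdot)$ is evaluated at the \emph{new} pull count, and by the non-increasing assumption the sliding of the window can only \emph{decrease} this average (by up to $L$), which pushes in the unfavourable direction rather than the favourable one you claim.
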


\vspace{-0.1in}
\subsection{Problem-dependent guarantees}
\label{ss:dep}
\vspace{-0.1in}

Since our setting generalizes the standard stochastic bandit setting, a natural question is whether we pay any price for this generalization. While the result of~\citet{levine2017rotting} suggested that learning in rotting bandits could be more difficult, in Thm.\,\ref{independent_theorem} we actually proved that \myAlgorithm nearly matches the problem-independent regret $\tcO(\sqrt{\narms\timeEnd})$. We may wonder whether this is true for the \emph{problem-dependent} regret as well.

\begin{remark}\label{remarkUCB}
Consider a stationary stochastic bandit setting with expected rewards $\{\mu_i\}_i$ and $\reward_\star \triangleq \max_\arm \reward_\arm$. Corollary~\ref{fundamental-correlary} guarantees that for $\delta_\currentTime \geq 1/\timeEnd^\alpha,$ 
\begin{align}
\reward_\star - \reward_\arm \leq 4c\pa{\window_{\arm,T}-1,  \delta_\currentTime} = 4\sqrt{\frac{2\alpha\subgaussian^2 \log(\timeEnd)}{\window_{\arm,T} -1}}
\nonumber\\
\text{\ or equivalently,\ }
\label{eq:LaiRob}
\window_{\arm,T} \leq 1+ \frac{32\alpha \subgaussian^2 \log(\timeEnd)}{(\reward_{\star} - \reward_\arm) ^2}\cdot
\end{align}
\end{remark}
Therefore, our algorithm matches the lower bound of~\citet{lai1985asymptotically} up to a constant, thus showing that learning in the rotting bandits are never harder than in the stationary case. Moreover, this upper bound is at most $\alpha$ larger than the one for \UCBone~\citep{auer2002finite}.\footnote{To make the results comparable to the one of~\citet{auer2002finite}, we need to replace $2\subgaussian^2$ by $\nicefrac{1}{2}$ for sub-Gaussian noise.} The main source of suboptimality is the use of a confidence bound filtering instead of an upper-confidence index policy. Selecting the less pulled arm in the active set is conservative as it requires uniform exploration until elimination, resulting in a factor 4 in the confidence bound guarantee on the selected arm (vs.\,2 for \UCB), which implies 4 times more overpulls than \UCB (see Eq.\,\ref{eq:LaiRob}). We conjecture that this may not be necessarily needed and it is an open question whether it is possible to derive either an index policy or a better selection rule. The other source of suboptimality w.r.t.\,\UCB is the use of larger confidence bands because of the higher number of estimators computed at each round ($Kt^2$ instead of $Kt$ for \UCB).


Remark~\ref{remarkUCB} also reveals that Corollary~\ref{fundamental-correlary} can be used to derive a general problem-dependent result in the rotting case.
In particular, with Corollary~\ref{fundamental-correlary} we upper-bound 
the maximum number of overpulls by a problem dependent quantity
\begin{equation}
\label{eq:hit+}
\window_{\arm,T}^+ \triangleq \max \left\{ \window \leq 
1 + \frac{32\alpha \subgaussian^2 \log(T)}{\Delta_{i,h-1}^2} \right\}\CommaBin
\end{equation}
\[
\text{\qquad where \ } \Delta_{i,h} \triangleq \min_{j \in \possibleArms} \reward_j\pa{N_{j,T}^\star -1} - \bar{\reward}_i^h\left( N_{i,t}^\star+h \right).
\]
We then use Corollary~\ref{fundamental-correlary} again to upper-bound the regret caused by $h_{\arm,T}^+$ overpulls for each arm, leading to Theorem~\ref{dependent_theorem} (see the full proof in App.\,\ref{sec:proofdep}). 

\begin{restatable}{theorem}{restaalgoub}\label{dependent_theorem}
For $\alpha = 5$ and $C_\alpha\triangleq 32\alpha\subgaussian^2$, the regret of \FEWA is bounded as
\begin{align*}
\mathbb{E}\left[R_T(\EWA)\right]  \leq \sum_{\arm\in \possibleArms} \pa{\frac{C_{5}\log(T)}{\Delta_{i,h_{i,T}^+-1}}+ \sqrt{C_{5}\log(T)} + 2L}.
\end{align*}

\end{restatable}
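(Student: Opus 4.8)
The plan is to start from the regret decomposition over over-pulled arms in Eq.\,\ref{eq:regret.decomposition} and convert it, arm by arm, into a product of an over-pull count and a gap that Corollary~\ref{fundamental-correlary} controls. Fix $\arm\in\overpullSet$ with over-pull count $\window_{\arm,T}=\armCount_{\arm,T}^{\EWA}-\armCount_{\arm,T}^{\star}$. Because the windowed mean $\expestReward^{\window_{\arm,T}}_\arm(\armCount_{\arm,T}^{\star}+\window_{\arm,T})$ is exactly the average of the over-pulled rewards $\reward_\arm(\armCount_{\arm,T}^{\star}),\dots,\reward_\arm(\armCount_{\arm,T}^{\star}+\window_{\arm,T}-1)$, the inner sum telescopes:
\begin{equation*}
\sum_{\window=0}^{\window_{\arm,T}-1}\!\!\big(\reward^+_T(\EWA)-\reward_\arm(\armCount_{\arm,T}^{\star}+\window)\big)=\window_{\arm,T}\big(\reward^+_T(\EWA)-\expestReward^{\window_{\arm,T}}_\arm(\armCount_{\arm,T}^{\star}+\window_{\arm,T})\big).
\end{equation*}
Thus the per-arm regret is one over-pull count times the gap between the oracle reward and a windowed average, which is precisely the quantity that Corollary~\ref{fundamental-correlary} bounds by $4c(\window_{\arm,T},\delta_t)$ at a suitable round.

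The key step is to extract from this both a bound on $\window_{\arm,T}$ and a bound on the regret. First I would replace $\reward^+_T(\EWA)$ by the oracle marginal level $\min_{j\in\possibleArms}\reward_j(\armCount_{j,T}^{\star}-1)$: since over- and under-pulls balance ($\sum_{\overpullSet}\window_{\arm,T}=\sum_{\underpullSet}\window_{\arm,T}$), an over-pulled $\arm$ forces an under-pulled arm $j$ with $\armCount_{j,T}^{\EWA}\le\armCount_{j,T}^{\star}-1$, so monotonicity of $\reward_j$ gives $\reward^+_T(\EWA)\ge\reward_j(\armCount_{j,T}^{\EWA})\ge\reward_j(\armCount_{j,T}^{\star}-1)\ge\min_k\reward_k(\armCount_{k,T}^{\star}-1)$. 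Combined with Corollary~\ref{fundamental-correlary}, this yields $\Delta_{\arm,\window_{\arm,T}}\le 4c(\window_{\arm,T},\delta_T)$, i.e.\ the self-referential inequality $\window_{\arm,T}\le C_\alpha\log(T)/\Delta_{\arm,\window_{\arm,T}}^2$. Reading this against the definition of $\window_{\arm,T}^+$ in Eq.\,\ref{eq:hit+} as the largest window compatible with such a bound then gives $\window_{\arm,T}\le\window_{\arm,T}^+$.

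For the final statement I would feed the over-pull bound back into the telescoped regret. Since $4\,\window\,c(\window,\delta_T)=\sqrt{C_\alpha\,\window\log T}$ is increasing in $\window$, the per-arm regret is at most $\sqrt{C_\alpha\,\window_{\arm,T}^+\log(T)}$; substituting $\window_{\arm,T}^+\le 1+C_\alpha\log(T)/\Delta_{\arm,\window_{\arm,T}^+-1}^2$ and splitting via $\sqrt{a+b}\le\sqrt a+\sqrt b$ produces exactly $\sqrt{C_\alpha\log T}+C_\alpha\log(T)/\Delta_{\arm,\window_{\arm,T}^+-1}$. Summing over $\arm\in\possibleArms$ and adding the $2L$ per arm, which absorbs the unfavorable-event contribution (summable because with $\alpha=5$ one has $\mathbb{P}(\HPevent_t^c)=\cO(K/t^3)$) together with the $\cO(L)$ pointwise-decay slack, yields Theorem~\ref{dependent_theorem}.

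The main obstacle is the bookkeeping of the index shift and of the round at which Corollary~\ref{fundamental-correlary} is actually invoked. The corollary holds at a round where $\arm$ still passes the filter of the relevant window, whereas $\Delta_{\arm,\cdot}$ and the final over-pull count are written through the round-$T$ oracle allocation $\{\armCount_{j,T}^{\star}\}_j$. Reconciling these requires applying the corollary at the \emph{last} round $\arm$ is pulled, where its over-pull count is $\window_{\arm,T}-1$ (hence the $\window_{\arm,T}^+-1$ appearing in the gap), and carefully controlling the $\cO(L)$ discrepancies arising because the oracle counts $\armCount_{j,t}^{\star}$ at that round may differ from $\armCount_{j,T}^{\star}$. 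This is also where the expectation must be handled with care, separating the favorable event $\cap_t\HPevent_t$ (on which the entire deterministic chain above runs) from its complement.
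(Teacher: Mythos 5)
Your proposal follows essentially the same route as the paper's proof: the same overpull decomposition restricted to $\overpullSet$, the same application of Corollary~\ref{fundamental-correlary} at the last pull of each over-pulled arm (with the window shift to $h_{i,T}-1$ and the extra $\cO(L)$ slack for the final overpull that you correctly flag), the same comparison of $\reward^+_T(\EWA)$ to $\min_j\reward_j(N^\star_{j,T}-1)$ yielding the self-referential definition of $h_{i,T}^+$ in Eq.~\ref{eq:hit+}, and the same feedback step via $\sqrt{a+b}\le\sqrt a+\sqrt b$. The one point to state precisely is the unfavorable event: conditioning globally on $\cap_t\HPevent_t$ does not work here (with $\alpha=5$ and $\delta_0=1$ the union bound $\sum_t Kt^2\delta_t/2$ is a constant, not $o(1)$), so one must, as the paper does in Lemma~\ref{reg:Decompo}, attribute each overpull to the round $t$ at which it occurs and pay $Lt\cdot\mathbb{P}(\bar{\HPevent_t})=\cO(L/t^2)$ for that round — which is what your ``summable'' parenthetical implicitly does, so the argument goes through.
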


\newpage
\vspace{-0.1in}
\section{Numerical simulations}
\label{Simulation}
\vspace{-0.1in}


\begin{figure*}[t]
\vspace{-0.15in}
\includegraphics[trim={0.3cm 0 1.5cm 0},clip,width=0.32\textwidth]{./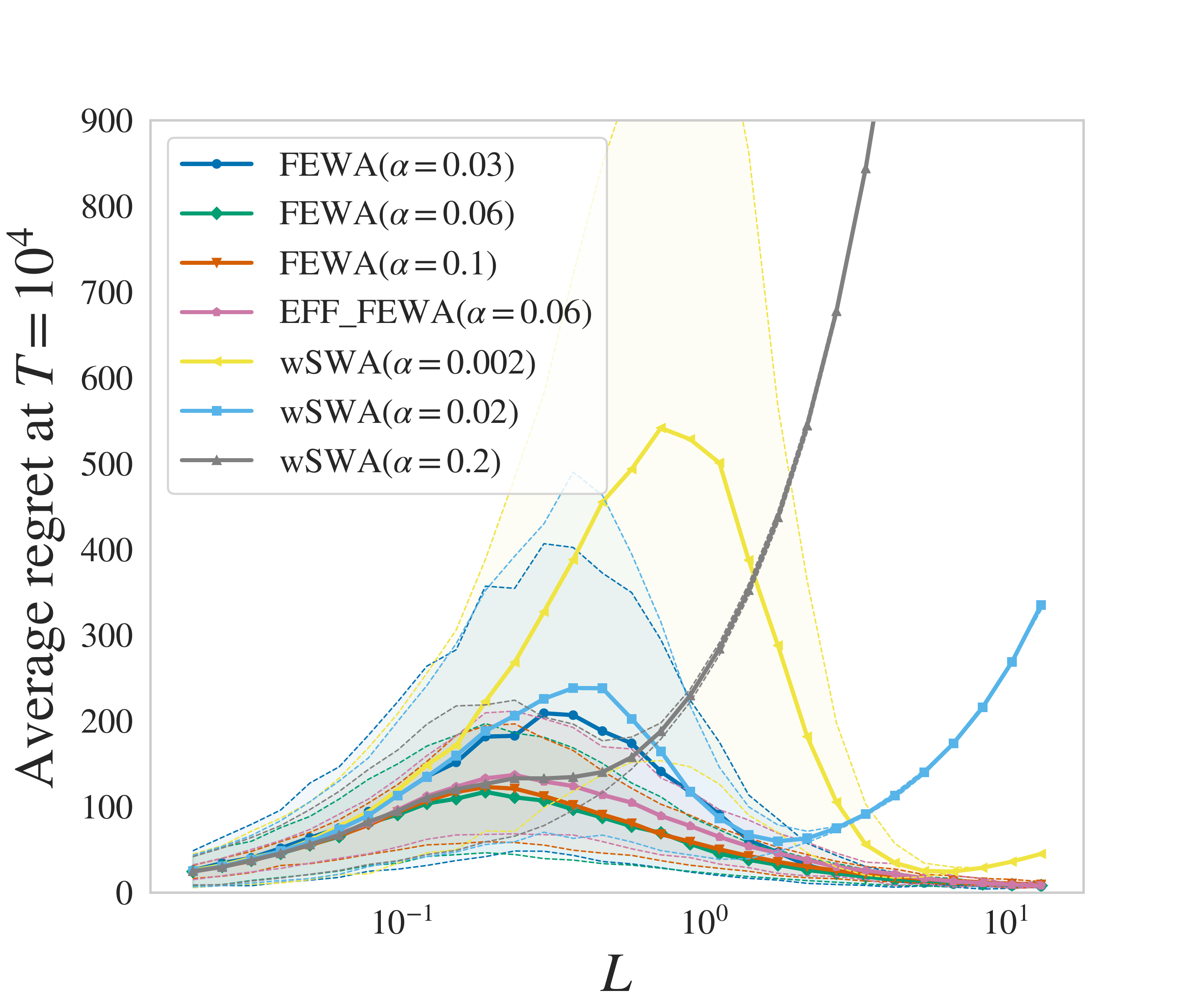}
\includegraphics[trim={0.5cm 0 1.5cm 0},clip,width=0.32\textwidth]{./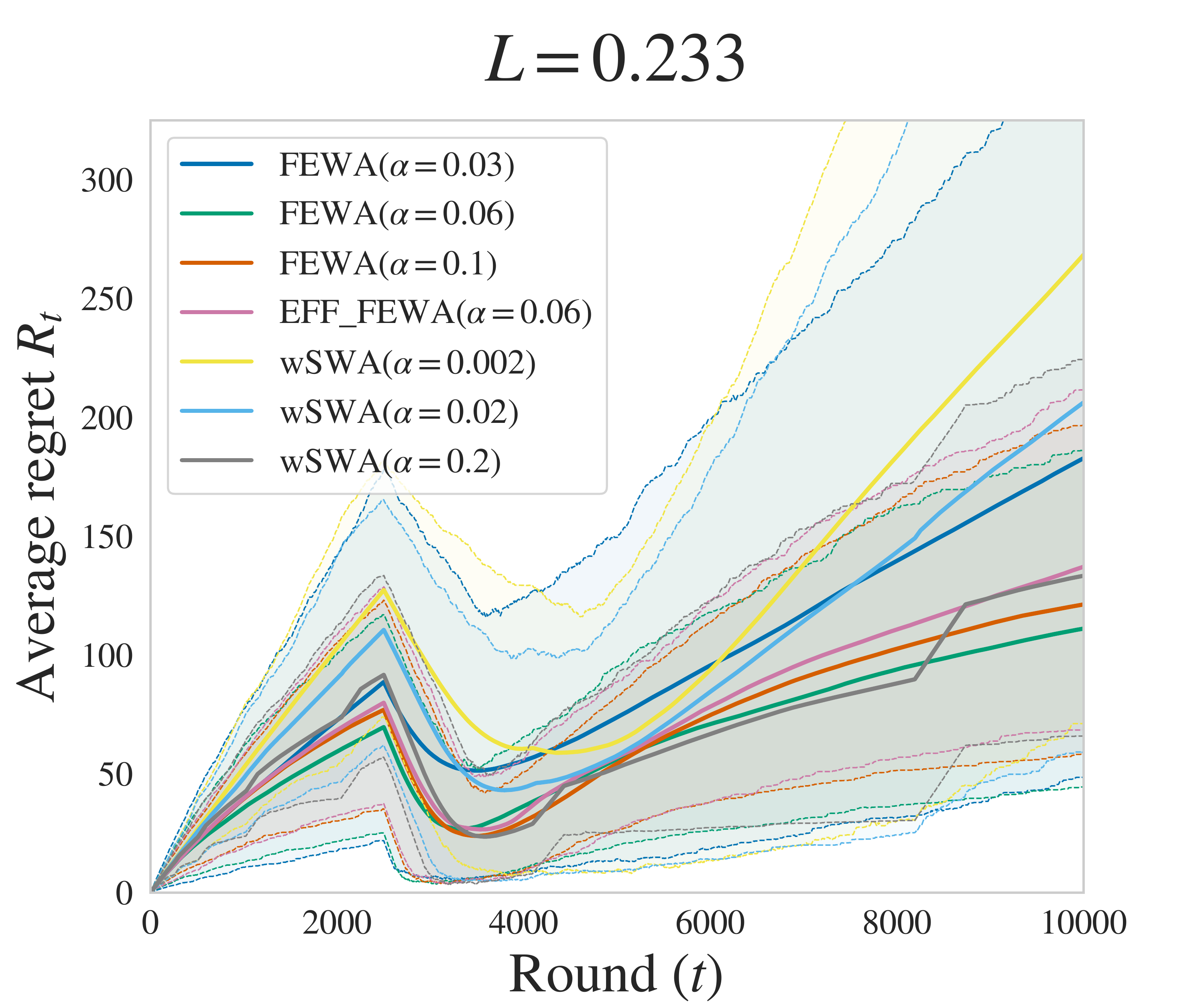}
\includegraphics[trim={0.5cm 0 0.3cm 0},clip,width=0.33\textwidth]{./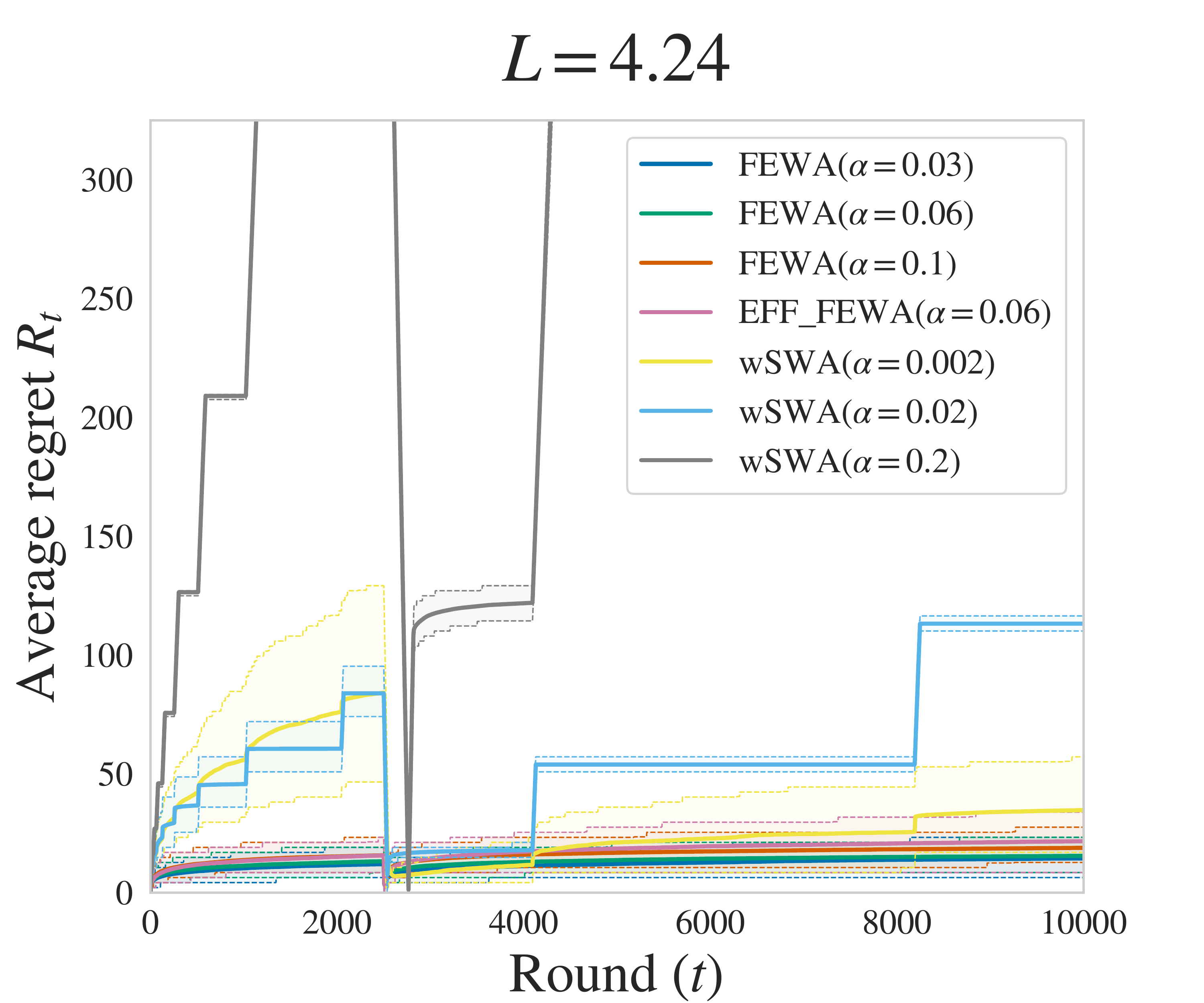}
\caption{Comparison between \FEWA and \SWA in the 2-arm setting. \textbf{Left:} Regret at $T=10000$ for different values of $L$.
 \textbf{Middle-right:} Regret over time for $L = 0.23$ (worst case for \FEWA) and $L = 4.24$ (case of $L \gg \sigma$). We highlight the $\left[10\%, 90\%\right]$ confidence region.}
\label{rotting-fig}
\end{figure*}

\paragraph{\underline{2-arms}} We design numerical simulations to study the difference between \SWA and \myAlgorithm. We consider rotting bandits  with two arms defined as
\begin{align*}
\reward_1(n) = 0, \;\; \forall n \leq \timeEnd \quad \text{and}  \quad
\reward_2(n) = \begin{cases}
\frac{\lipschitz}{2}  & \text{ if } n < \frac{\timeEnd}{4}\CommaBin \\
-\frac{\lipschitz}{2} & \text{ if } n \geq \frac{\timeEnd}{4}\cdot
\end{cases}
\end{align*}
The rewards are then generated by applying a Gaussian i.i.d.\,noise $\mathcal{N}\left(0,\subgaussian = 1\right)$. The single point of non-stationarity in the second arm is designed to satisfy Asm.\,\ref{assum-Lipschitz} with a bounded decay $L$. It is important to notice that in this specific case, $L$ also plays the role of defining the gap $\Delta$ between the arms, which is known to heavily impact the performance both in  stochastic bandits and in the rotting bandits  (Cor.\,\ref{dependent_theorem}). In particular, for any learning strategy, the gap between the two arms is always $\Delta = |\mu_1(n_1) - \mu_2(n_2)| = L/2$. Recall  that in stochastic bandits, the problem independent bound $\cO(\sqrt{KT})$ is obtained by the worst-case choice of $\Delta \triangleq \sqrt{K/T}$.
In the two-arm setting defined above, the optimal allocation is $N_{1,T}^\star = 3T/4$ and $N_{2,T}^\star=T/4$. 

\paragraph{Algorithms} Both algorithms have a parameter $\alpha$ to tune. In \SWA, $\alpha$ is a multiplicative constant to tune the window. We test three different values of $\alpha$, including the recommendation of~\citet{levine2017rotting}, $\alpha = 0.2$. In general, the smaller the $\alpha$, the smaller the averaging window and the more reactive the algorithm is to large drops. Nonetheless, in stationary regimes, this may correspond to high variance and poor regret. On the other hand, a large value of $\alpha$ may reduce variance but increase the bias in case of rapidly rotting arms. Thm.\,3.1 of~\citet{levine2017rotting} reveals this trade-off in the regret bound of \SWA, which has a factor $(\alpha \mu_{\max} + \alpha^{-1/2})$, where $\mu_{\max}$ is the largest arm value. The best choice of $\alpha$ is then $\mu_{\max}^{-2/3}$, which reduces the previous constant to $\mu_{\max}^{1/3}$. In our experiment, $\mu_{\max} = L$ and we could expected that for any fixed~$\alpha$, \SWA may perform well in cases when $\alpha \approx \mu_{\max}^{-2/3}$, while the performance may degrade for larger $\mu_{\max}$.

In \myAlgorithm, $\alpha$ tunes the confidence $\delta_\currentTime = 1/(t^\alpha)$ used in $c(h, \delta_t)$. While our analysis suggests $\alpha = 5$, the analysis of confidence intervals, union bounds, and filtering algorithms is too conservative. Therefore, we use more aggressive values, $\alpha \in  \left\{ 0.03, 0.06, 0.1\right\}.$

\paragraph{Experiments} In Fig.\,\ref{rotting-fig}, we compare the performance of the two algorithms and their dependence on $L$. The first plot shows the regret at $T$ for various values of~$L$. The second and the third plot show the regret as a function of $T$ for $L = 0.23$ and $L=4.24$, which corresponds to the  worst empirical performance for \myAlgorithm and to the $L \gg \sigma$ regime respectively. All experiments have  $T=10 000$ and are averaged over $500$ runs.

Before discussing the results, we point out that in the rotting setting, the regret can increase and decrease over time. Consider two simple policies: $\pi_1$, which first pulls arm $1$ for $N^\star_{1,T}$ times and then arm $2$ for $N^\star_{2,T}$ times, and $\pi_2$ in reversed order (first arm $2$ and then arm $1$). If we take $\pi_2$ as reference, $\pi_1$ has an increasing regret for the first $T/4$ rounds, which then would plateau from $T/4$ up to $3T/4$ as both $\pi_1$ and $\pi_2$ are pulling arm $1$. Then from $3T/4$ to $T$, the regret of~$\pi_1$ would reverse back to~0 since $\pi_2$ would keep selecting arm $1$ and getting a reward of $0$, while $\pi_1$ transitions to pulling arm $2$ with a reward of~$L/2$.

\begin{figure*}[t]
\vspace{-0.15in}
	\centering
	\includegraphics[clip, width = \textwidth]{./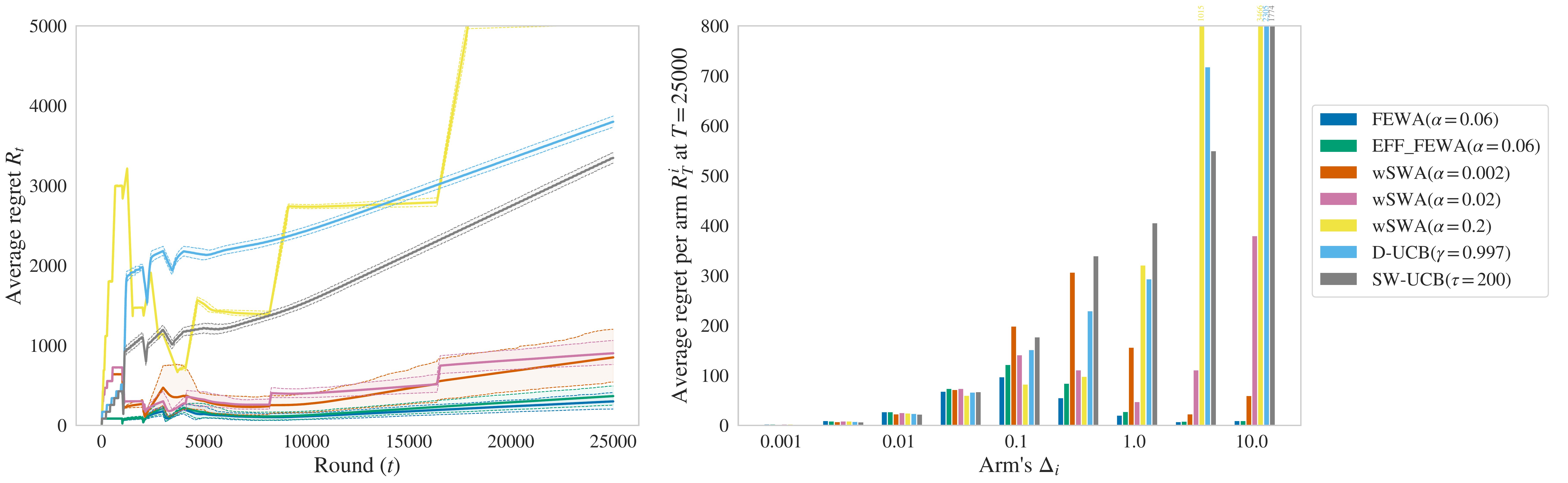}	
	\caption{10-arm setting. \textbf{Left:} Regret over time. \textbf{Right:} Regret per arm at the end of the experiment.}
	\label{benchmark}
	\vspace{-0.15in}
\end{figure*}


\paragraph{Results} Fig.\,\ref{rotting-fig} shows that the performance of \SWA depends on the proper tuning of $\alpha$ w.r.t.\,$\mu_{\max} = L$, as predicted by Thm.\,3.1 of~\citet{levine2017rotting}. In fact, for small values of $L$, the best choice is $\alpha=0.2$, while for larger values of $L$ a smaller $\alpha$ is preferable. In particular, when $L$ grows very large, the regret tends to grow linearly with $L$. On the other hand, \myAlgorithm seems much more robust to different values of~$L$. Whenever~$T$ and $\sigma$ are large compared to $L$, Thm.\,\ref{independent_theorem} suggests that the regret of \FEWA is dominated by $\cO(\sigma\sqrt{KT})$, while the term $KL$ becomes more relevant for large values of the drop $L$. We also notice that since $L$ defines the gap between the value of $\mu_1$ and $\mu_2$, the problem-independent bound is achieved for the worst-case choice of $L\sim 2\sqrt{K/T}$, when the regret of \FEWA is indeed the largest. Fig.\,\ref{rotting-fig} middle and right confirm these findings for the extreme choice of the worst-case value of $L$ and the regime where the drop is much larger than the noise level, i.e., where the term $KL$ dominates the regret.

We conclude that \FEWA is more robust than \SWA as it almost always achieves the best performance across different problems while being agnostic to the value of $L$. On the other hand, \SWA's performance is very sensitive to the choice of~$\alpha$ and the same value of the parameter may correspond to significantly different performance depending on $L$.
Finally, we notice that \EFF has a regret comparable to \FEWA. 

\paragraph{\underline{10-arms}} We also tested a rotting setting with 10 arms. The mean of 1 arm is constant with value 0 while the means of 9 arms abruptly decrease after 1000 pulls from $+\Delta_i$ to $-\Delta_i$. $\Delta_i$  is ranging from 0.001 to 10 in a geometric sequence. In this setting, the regret can be written as $R_T(\pi) = \sum_{i=1}^9 h_{i,T}\Delta_i$. Hence, the regret per arm  
is $R_T^i(\pi) \triangleq \Delta_i h_{i,T}$.
In Fig.\,\ref{benchmark}, we compare the performance of different algorithms for their best parameter. The left plot shows the average regret as a function of time. The right plot shows the regret per arm (indexed by $\Delta_i$) at the end of the experiment.
\vfil

\paragraph{Results} On Fig.\,\ref{benchmark} (left), we see that \FEWA outperforms \SWA. On the right, we remark that no tuning of \SWA is able to perfom well for all $\Delta_i$s. It is comparable with the 2-arms experiments where no tuning is good on all the experiments.  We also test \SWUCB and \DUCB \citep{garivier2011upper-confidence-bound} with parameters tuned for this experiment. While the two algorithms are known benchmarks for non-stationary \emph{restless} bandits, they are penalized in our \emph{rested} bandits problem. Indeed, they keep exploring arms that have not been pulled for many rounds which is detrimental in our case as the arms stay constant when they are not pulled. Hence, there is no good choice for their forgetting parameters: A fast forgetting rate makes the policies repeatedly pull bad arms (whose mean rewards do not change when they are not pulled in the rested setting) while a slow forgetting rate makes the policies not able to adapt to abrupt shifts.

Last, we remark that \EFF is penalized for arms with small $\Delta_i$, for which the impact of the delay is more significant. At the end of the game, \EFF suffers $22\%$ more regret but reduce the computational time by $99.7\%$ (Table~\ref{time-table}).
\renewcommand{\tabcolsep}{4pt}
\begin{center}
\begin{table}
\begin{tabular}{|c|c|c|c|c|}
  \hline
  \!\FEWA\! & \!\EFF\! & \!\SWA\!  & \!\SWUCB\! & \!\DUCB\! \\
  \hline
2271&7&1&5&2\\
  \hline
\end{tabular}
\vspace{0.2em}
  \caption{Average running time for the 10-arms experiment in seconds.}
  \label{time-table}
\end{table}
\end{center}

\vspace{-0.1in}
\section{Conclusion}
\label{conclusions}
\vspace{-0.1in}

We introduced \myAlgorithm, a novel algorithm for the non-parametric rotting bandits. 
We proved that \myAlgorithm achieves an $\tcO(\sqrt{KT})$ regret without any knowledge of the decays by using moving averages with a window that effectively adapts to the changes in the expected rewards. This result greatly improves over the \SWA algorithm by~\citet{levine2017rotting}, that suffers a regret of order $\tcO(K^{1/3}T^{2/3})$. Thus our result shows that the rotting bandit scenario is not harder than the stochastic one. Our technical analysis of \myAlgorithm hinges on the \textit{adaptive} nature of the window size. The most interesting aspect of the proof technique is that confidence bounds are used not only for the action selection but also for the \textit{data} selection, i.e., to identify the best window to trade off the bias and the variance in estimating the current value of each arm. 

\vfil
\newpage
\paragraph{Acknowledgements} 
We thank Nir Levine for his helpful remarks and Lilian Besson for his bandits package  \citep{SMPyBandits}. The research presented was supported by European CHIST-ERA project DELTA, French Ministry of Higher Education and Research, Nord-Pas-de-Calais Regional Council, Inria and Otto-von-Guericke-Universit\"at Magdeburg associated-team north-European project Allocate, and French National Research Agency projects ExTra-Learn (n.ANR-14-CE24-0010-01) and BoB (n.ANR-16-CE23-0003). The work of A.\,Carpentier is also partially supported by the Deutsche Forschungsgemeinschaft (DFG) Emmy Noether grant MuSyAD (CA 1488/1-1), by the DFG - 314838170, GRK 2297 MathCoRe, by the DFG GRK 2433 DAEDALUS, by the DFG CRC 1294 Data Assimilation, Project A03, and by the UFA-DFH through the French-German Doktorandenkolleg CDFA 01-18.
This research has also benefited from the support of the FMJH Program PGMO and from the support to this program from Criteo.
Part of the computational experiments were conducted using the Grid'5000 experimental
testbed (\url{https://www.grid5000.fr}).
\bibliography{library}
\bibliographystyle{plainnat}
\newpage
\onecolumn
\appendix

\onecolumn
\section{Proof of core {\myAlgorithm} guarantees}
\restafundamentallemma*
\begin{proof}
Let $\arm$ be an arm that passed a filter of window $\window$ at round $\currentTime$.
First, we use the confidence bound for the estimates and we pay the cost of keeping all the arms up to a distance $2c(\window,  \delta_\currentTime)$ of $\estReward^{\window}_{\max,\currentTime}$,
\begin{equation}
\label{1}
\expestReward^{\window}_\arm(\armCount_{\arm,\currentTime} )\geq \estReward^{\window}_\arm(\armCount_{\arm,\currentTime} )- c(\window,  \delta_\currentTime) 
\geq \estReward^{\window}_{\max,\currentTime} - 3c(\window, \delta_\currentTime)
\geq \max_{\arm \in \possibleArms_\window}\expestReward^{\window}_{\arm}(\armCount_{\arm,\currentTime})  - 4 c(\window, \delta_\currentTime),
\end{equation}
where in the last inequality, we used that for all $i \in \possibleArms_\window,$ \[\estReward^{\window}_{\max,\currentTime} \geq  \estReward^{\window}_{\arm}(\armCount_{\arm,\currentTime}) \geq \expestReward^{\window}_{\arm}(\armCount_{\arm,\currentTime})  - c(\window, \delta_\currentTime).\]
Second, since the means of arms are decaying, we know that 
\begin{equation}
\label{2}
 \reward^+_t(\EWA) \triangleq \reward_{\arm^\star_\currentTime}(\armCount_{\arm^\star_\currentTime,\currentTime}) \leq  \reward_{\arm^\star_\currentTime}(\armCount_{\arm^\star_\currentTime,\currentTime} - 1) =  \expestReward^{1}_{\arm^\star_\currentTime}(\armCount_{\arm^\star_\currentTime,\currentTime})  \leq \max_{\arm \in \possibleArms}\expestReward^{1}_{\arm}(\armCount_{\arm,\currentTime}) = \max_{\arm \in \possibleArms_1}\expestReward^{1}_{\arm}(\armCount_{\arm,\currentTime}).
\end{equation}
Third, we show that the largest average of the last $\window'$ means of arms in $\possibleArms_{\window'}$ is increasing with~$\window'$,
\begin{equation*}
\forall  \window' \leq \armCount_{\arm,\currentTime}-1,  \max_{\arm \in \possibleArms_{\window'+1}}\expestReward^{\window'+1}_{\arm}(\armCount_{\arm,\currentTime})   \geq \max_{\arm \in \possibleArms_{\window'}}\expestReward^{\window'}_{\arm}(\armCount_{\arm,\currentTime}). 
\end{equation*}
To show the above property, we remark that thanks to our selection rule, the arm that has the largest average of means, always passes the filter. Formally, we show that $\argmax_{\arm \in \possibleArms_{\window'}}\expestReward^{\window'}_{\arm}(\armCount_{\arm,\currentTime}) \subseteq \possibleArms_{\window'+1}.$ 
Let $\arm^{\window'}_{\max} \in \argmax_{\arm \in \possibleArms_{\window'}}\expestReward^{\window'}_{\arm}(\armCount_{\arm,\currentTime})$. Then for such $\arm^{\window'}_{\max}$, we have
\begin{equation*}
\estReward^{\window'}_{\arm^{\window'}_{\rm max}}(\armCount_{\arm^{\window'}_{\max},\currentTime} ) \geq \expestReward^{\window'}_{\arm^{\window'}_{\max}}(\armCount_{\arm^{\window'}_{\max},\currentTime}) - c(\window', \delta_\currentTime) 
\geq \expestReward^{\window'}_{\max,\currentTime} - c(\window',  \delta_\currentTime) \geq \estReward^{\window'}_{\max,\currentTime}- 2c(\window',  \delta_\currentTime),
\end{equation*}
where the first and the third inequality are due to confidence bounds on estimates, while the second one is due to the definition of $i^{\window'}_{\rm max}$. 

Since the arms are decaying, the average of the last $\window' +1$ mean values for a given arm is always greater than the average of the last $\window'$ mean values
and therefore, 
\begin{equation}
\label{3}
 \max_{\arm \in \possibleArms_{\window'}}\expestReward^{\window'}_{\arm}(\armCount_{\arm,\currentTime}) =   \expestReward^{\window'}_{\arm^{\window'}_{\rm max}}(\armCount_{\arm^{\window'}_{\rm max},\currentTime} ) \leq \expestReward^{\window'+1}_{\arm^{\window'}_{\rm max}}(\armCount_{\arm^{\window'}_{\rm max},\currentTime} ) \leq \max_{\arm \in \possibleArms_{\window'+1}}\expestReward^{\window' +1 }_{\arm}(\armCount_{\arm,\currentTime}), 
\end{equation}
because $\arm^{\window'}_{\rm max} \in \possibleArms_{\window'+1}$. Gathering Equations~\ref{1}, \ref{2}, and~\ref{3} leads to the claim of the lemma,
\begin{equation*}
\expestReward^{\window}_\arm(\armCount_{\arm,\currentTime} )
\stackrel{\eqref{1}}{\geq} \max_{\arm \in \possibleArms_\window}\expestReward^{\window}_{\arm}(\armCount_{\arm,\currentTime})  - 4c(\window, \delta_\currentTime)
\stackrel{\eqref{3}}{\geq} \max_{\arm \in \possibleArms_1}\expestReward^{1}_{\arm}(\armCount_{\arm,\currentTime}) - 4c(\window,  \delta_\currentTime)
\stackrel{\eqref{2}}{\geq}  \reward^+_t(\EWA) - 4c(\window, \delta_\currentTime).
\end{equation*}\end{proof}

\restafundamentalcorrelary*

\begin{proof}
If $i$ was pulled at round $t$, then by the condition at Line~\ref{algline:condition} of Algorithm~\ref{EWA}, it means that $i$ passes through all the filters from $h=1$ up to $N_{i,t}$. In particular, since $1 \leq h_{i,t} \leq N_{i,t}$, $i$ passed the filter for $h_{i,t}$, and thus we can apply Lemma~\ref{fundamental-lemma} and conclude 
\begin{equation}
\expestReward^{\window}_\arm(\armCount_{\arm,\currentTime} ) \geq  \reward^+_t(\EWA) - 4 c(\window_{i,t}, \delta_\currentTime).
\end{equation}
\end{proof}

\section{Proofs of auxiliary results}
\begin{lemma}
\label{reg:Decompo}
Let $h_{i,t}^\pi \triangleq | N_{i,T}^\pi - N_{i,T}^{\star}|$. For any policy $\pi$, the regret at round T is no bigger than
\begin{equation*}
R_T(\policy) \leq \sum_{i \in \overpullSet} \sum_{h=0}^{h_{i,T}^\policy-1}\left[\HPevent_{t_i^\policy(N_{i,T}^\star + h)} \right]\left(\mu^+_T(\policy) - \mu_i(N_{i,T}^{\star} + h ) \right) + \sum_{t=1}^T \Big[\bar{\HPevent_t}\Big]Lt.
\end{equation*}
We refer to the the first sum above as to $A_\policy$ and to the second one as to $B$.
\end{lemma}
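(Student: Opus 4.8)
The plan is to obtain the claimed decomposition by refining the regret bound of Eq.~\eqref{eq:regret.decomposition} through a case split on whether the favorable event holds at the round where each over-pull occurs. First I would observe that the chain of inequalities leading to Eq.~\eqref{eq:regret.decomposition}---which starts from the exact identity Eq.~\eqref{eq:regret2}, uses the over/under-pull balance $\sum_{i\in\overpullSet}h_{i,T}^\policy = \sum_{i\in\underpullSet}h_{i,T}^\policy$, and applies the bound $\mu_j(N_{j,T}^\policy + s)\le \mu^+_T(\policy)$ of Eq.~\eqref{eq:plus.minus}---relies only on the decreasing property of the means and is therefore valid for an \emph{arbitrary} policy $\policy$. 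This gives the starting point
\begin{equation*}
R_T(\policy) \le \sum_{i\in\overpullSet}\sum_{h=0}^{h_{i,T}^\policy-1}\bigl(\mu^+_T(\policy)-\mu_i(N_{i,T}^\star+h)\bigr).
\end{equation*}

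Next I would split each summand with the trivial identity $1 = [\HPevent_t] + [\bar{\HPevent_t}]$ evaluated at $t=t_i^\policy(N_{i,T}^\star+h)$, the round at which arm $i$ is pulled for the $(N_{i,T}^\star+h+1)$-th time and hence earns mean reward $\mu_i(N_{i,T}^\star+h)$. This round is well defined for $h\le h_{i,T}^\policy-1$ precisely because $i$ is over-pulled. The part of the sum carrying $[\HPevent_{t_i^\policy(\cdot)}]$ is by definition $A_\policy$, so the whole task reduces to showing that the complementary part is bounded by $B$.

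For the bad-event part I would bound the instantaneous gap and then re-index by rounds. By Assumption~\ref{assum-Lipschitz} every mean satisfies $\mu_a(\cdot)\le L$, so $\mu^+_T(\policy)\le L$; moreover, since the means drop by at most $L$ per pull starting from $\mu_i(0)\ge 0$, we have $\mu_i(n)\ge -Ln$. Writing $t=t_i^\policy(N_{i,T}^\star+h)$ and using that $N_{i,t}$ counts pulls \emph{before} round $t$ (so the relevant pull count is $N_{i,T}^\star+h\le t-1$), the gap is at most $L + L(t-1)=Lt$. The key observation is that the map $(i,h)\mapsto t_i^\policy(N_{i,T}^\star+h)$ is injective: distinct over-pulls of a fixed arm occur at distinct rounds, and since only one arm is pulled per round, over-pulls of different arms also map to distinct rounds. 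Hence each $t\in\{1,\dots,T\}$ is hit at most once, and because all summands are nonnegative, the bad-event part is dominated by the full sum over rounds:
\begin{equation*}
\sum_{i\in\overpullSet}\sum_{h=0}^{h_{i,T}^\policy-1}[\bar{\HPevent_{t_i^\policy(N_{i,T}^\star+h)}}]\bigl(\mu^+_T(\policy)-\mu_i(N_{i,T}^\star+h)\bigr) \le \sum_{t=1}^{T}[\bar{\HPevent_t}]\,Lt = B.
\end{equation*}
Adding $A_\policy$ to both sides yields the lemma.

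I expect the main obstacle to be bookkeeping rather than conceptual, in two places: (i) verifying carefully the injectivity of $(i,h)\mapsto t_i^\policy(N_{i,T}^\star+h)$, since this is exactly what permits passing from the over-pull-indexed sum to the clean round-indexed sum $\sum_{t=1}^T[\bar{\HPevent_t}]Lt$; and (ii) pinning the constant in the per-term gap bound to $Lt$, which depends on the precise pull-counting convention. Everything else is an exact algebraic split of the already-established decomposition, so no new concentration or probabilistic argument is needed here---the favorable event only enters as an indicator to be routed into either $A_\policy$ or $B$.
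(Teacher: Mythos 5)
Your proposal is correct and follows essentially the same route as the paper: both start from the over-pull-only decomposition (Eq.~\ref{eq:RegretDecompo}), split each over-pull according to whether the favorable event holds at the round $t_i^\policy(N_{i,T}^\star+h)$ where it occurs, and bound the bad-event part by $Lt$ per round using the fact that at most one over-pull happens per round (your injectivity observation is exactly the paper's $\sum_{i}\sum_{t'}\left[t_i^\policy(N_{i,T}^\star+t')=t\right]\le 1$). Your explicit justification of the $Lt$ gap via $\mu^+_T(\policy)\le L$ and $\mu_i(n)\ge -Ln$ is a slightly more careful bookkeeping of a step the paper merely asserts.
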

\begin{proof}
We consider the regret at round $T$. From Equation~\ref{eq:regret2}, the decomposition of regret in terms of overpulls and underpulls
gives
\begin{equation*}
\regret(\policy) = \sum_{\arm\in \underpullSet} \sum_{t'=\armCount_{\arm, \timeEnd}^{\policy}}^{\armCount_{\arm, \timeEnd}^{\star}-1} \reward_\arm(t') - \sum_{\arm\in \overpullSet} \sum_{t'=\armCount_{\arm, \timeEnd}^{\star}}^{\armCount_{\arm, \timeEnd}^{\policy}-1} \reward_\arm(t'). 
\end{equation*}
In order to separate the analysis for each arm, we upper-bound all the rewards in the first sum by their maximum $\reward^+_T(\policy) \triangleq \max_{i\in\possibleArms} \reward_i(N_{i,T}^\policy)$. This upper bound is tight for problem-independent bound because one cannot hope that the unexplored reward would decay to reduce its regret in the worst case. 
We also notice that there are as many terms in the first double sum (number of underpulls) than in the second one (number of overpulls). 
This number is equal to  $\sum_{\overpullSet} \window_{\arm,T}^\policy$.
Notice that this does \emph{not} mean that for each arm $\arm$, the number of overpulls equals to the number of underpulls, which cannot happen anyway since an arm cannot be simultaneously underpulled and overpulled. Therefore, we keep only the second double sum,
\begin{equation}
\label{eq:RegretDecompo}
\regret(\policy) \leq \sum_{\arm\in \overpullSet}   \sum_{t'=0}^{\window_{\arm,T}^\policy-1} \pa{\reward^+_T(\pi) - \reward_\arm(\armCount_{\arm, \timeEnd}^{\star} + t')}.
\end{equation}
Then, we need to separate overpulls that are done under $\HPevent_t$ and under $\bar{\HPevent_t}$. We introduce $t_i^{\policy}(n)$, the round at which $\policy$ pulls arm $i$ for the $n$-th time. We now make the round at which each overpull occurs  explicit,
\begin{align*}
\regret(\policy) & \leq \sum_{\arm\in \overpullSet}   \sum_{t'=0}^{\window_{\arm,T}^\policy-1} \sum_{t=1}^T \left[ t_i^{\policy}\pa{\armCount_{\arm, \timeEnd}^{\star} + t'} = t \right]  \pa{\reward^+_T(\policy) - \reward_\arm(\armCount_{\arm, \timeEnd}^{\star} + t')}\\
& \leq \underbrace{\sum_{\arm\in \overpullSet}   \sum_{t'=0}^{\window_{\arm,T}^\policy-1} \sum_{t=1}^T \left[ t_i^{\policy}\pa{\armCount_{\arm, \timeEnd}^{\star} + t'} = t \land \HPevent _t \right] \pa{\reward^+_T(\policy) - \reward_\arm(\armCount_{\arm, \timeEnd}^{\star} + t')}}_{A_\policy}\\
&+ \underbrace{\sum_{\arm\in \overpullSet}\sum_{t'=0}^{\window_{\arm,T}^\policy-1} \sum_{t=1}^T \left[ t_i^{\policy}\pa{\armCount_{\arm, \timeEnd}^{\star} + t'} = t \land \bar{\HPevent _t} \right]\pa{\reward^+_T(\policy) - \reward_\arm(\armCount_{\arm, \timeEnd}^{\star} + t')}}_B.
\end{align*}
For the analysis of the pulls done under $\HPevent_t$ we do not need to know at which round it was done. Therefore, 
\[
A_\policy \leq \sum_{\arm\in \overpullSet}   \sum_{t'=0}^{\window_{\arm,T}^\policy-1}  \left[ \HPevent_{t(N_{i,t}^\star + t')} \right] \pa{\reward^+_T(\policy) - \reward_\arm(\armCount_{\arm, \timeEnd}^{\star} + t')}.
\]
For \myAlgorithm, it is not easy to directly guarantee the low probability of overpulls (the second sum). Thus, we upper-bound the regret of each overpull at round $t$ under $\bar{\HPevent_t}$ by its maximum value $Lt$. While this is done to ease \myAlgorithm analysis, this is valid for any policy $\policy$. Then, noticing that we can have at most 1 overpull per round $t$, i.e., $\sum_{\arm\in \overpullSet}\sum_{t'=0}^{\window_{\arm,T}^\policy-1}\left[ t_i^{\policy}\pa{\armCount_{\arm, \timeEnd}^{\star} + t'} = t  \right] \leq 1$, we get
\[
B \leq  \sum_{t=1}^T \Big[\bar{\HPevent_t}\Big] Lt\pa{\sum_{\arm\in \overpullSet}\sum_{t'=0}^{\window_{\arm,T}^\policy-1}\left[ t_i^{\policy}\pa{\armCount_{\arm, \timeEnd}^{\star} + t'} = t  \right]} \leq  \sum_{t=1}^T \Big[\bar{\HPevent_t}\Big] Lt.
\]
Therefore, we conclude that
\[
\regret(\policy) \leq \underbrace{\sum_{\arm\in \overpullSet}   \sum_{t'=0}^{\window_{\arm,T}^\policy-1}  \left[ \HPevent_{t_i^\policy(N_{i,t}^\star + t')} \right] \pa{\reward^+_T(\policy) - \reward_\arm(\armCount_{\arm, \timeEnd}^{\star} + t')}}_{A_{\policy}} + \underbrace{\sum_{t=1}^T \Big[\bar{\HPevent_t}\Big] Lt}_B.
\]
\end{proof}
\begin{lemma}
\label{lemma:A}
Let $h_{i,t} \triangleq h_{i,t}^{\EWA} = | N_{i,T}^{\EWA} - N_{i,T}^{\star}|$. For policy $\EWA$ with parameters ($\alpha$, $\delta_0$), $A_{\EWA}$ defined in  Lemma~\ref{reg:Decompo} is upper-bounded by
\begin{align*}
A_{\EWA} &\triangleq  \sum_{\arm\in \overpullSet}   \sum_{t'=0}^{\window_{\arm,T}-1}  \left[ \HPevent_{t_i^{\EWA}(N_{i,t}^\star + t') }\right] \pa{\reward^+_T(\EWA) - \reward_\arm(\armCount_{\arm, \timeEnd}^{\star} + t')} \\
& \leq \sum_{\arm\in \overpullSet_\HPevent} \pa{4\sqrt{2\alpha\subgaussian^2\log_+(\timeEnd\delta_0^{-1/\alpha})}
+ 4\sqrt{2\alpha\subgaussian^2\left(h_{i,T}^\HPevent -1\right)\log_+(\timeEnd\delta_0^{-1/\alpha})} + \lipschitz}.
\end{align*}
\end{lemma}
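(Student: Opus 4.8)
The plan is to bound $A_{\EWA}$ arm by arm over $\arm\in\overpullSet$, invoking Corollary~\ref{fundamental-correlary} at the rounds where the overpulls of $\arm$ occur under the favorable event. The enabling observation, which I read off from Lemma~\ref{fundamental-lemma}, is that the oracle reward $\reward^+_t(\EWA)=\max_\arm \reward_\arm(\armCount^{\EWA}_{\arm,\currentTime})$ is \emph{non-increasing in $\currentTime$}: at each round exactly one count $\armCount^{\EWA}_{\arm,\currentTime}$ grows by one and each $\reward_\arm$ is non-increasing, so the maximum cannot grow. Hence for every overpull realized at some round $t\le T$ I may keep the small value $\reward^+_T(\EWA)$ in the sum while applying Corollary~\ref{fundamental-correlary} at the actual round $t$, where moreover $\delta_t\ge\delta_T$ makes the confidence radius $c(\cdot,\delta_t)\le c(\cdot,\delta_T)$. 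These two monotonicities let me replace round-dependent quantities by their values at $T$ with the correct sign.

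Next I would convert the per-pull regret into a windowed-average form. For fixed $\arm$ the means in the inner sum are $\reward_\arm(\armCount^\star_{\arm,\timeEnd}),\dots,\reward_\arm(\armCount^\star_{\arm,\timeEnd}+h_{\arm,T}-1)$, i.e. exactly the means at counts from $\armCount^\star_{\arm,\timeEnd}$ up to $\armCount^{\EWA}_{\arm,\timeEnd}-1$, which are precisely the terms averaged by $\expestReward^\window_\arm$ at the last pull of $\arm$; this is the identity $(h_{\arm,T}-1)\,\expestReward^{h_{\arm,T}-1}_\arm(\armCount^{\EWA}_{\arm,\timeEnd}-1)=\sum_{t'=0}^{h_{\arm,T}-2}\reward_\arm(\armCount^\star_{\arm,\timeEnd}+t')$ noted in the sketch. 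Restricting to overpulls occurring under $\HPevent$ replaces $h_{\arm,T}$ by $h^\HPevent_{\arm,T}$ and $\overpullSet$ by $\overpullSet_\HPevent$. Applying Corollary~\ref{fundamental-correlary} with window $h^\HPevent_{\arm,T}-1$ gives $\reward^+_t(\EWA)-\expestReward^{h^\HPevent_{\arm,T}-1}_\arm\le 4c(h^\HPevent_{\arm,T}-1,\delta_t)$; multiplying by $h^\HPevent_{\arm,T}-1$ and using $(h^\HPevent_{\arm,T}-1)\,c(h^\HPevent_{\arm,T}-1,\delta_T)=\sqrt{2\alpha\subgaussian^2(h^\HPevent_{\arm,T}-1)\log_+(\timeEnd\delta_0^{-1/\alpha})}$ produces the middle term. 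It is this averaging step, rather than a term-by-term $\sum_{h'}c(h',\delta)$, that yields the favorable constant $4$.

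The remaining two terms come from the single most recent overpull, namely the count $\armCount^{\EWA}_{\arm,\timeEnd}-1$ left out of the window-$(h^\HPevent_{\arm,T}-1)$ average. At the last round $\EWA$ pulls $\arm$, this arm passed the filter of window $1$, so Lemma~\ref{fundamental-lemma} with $\window=1$, together with $\expestReward^1_\arm(\armCount^{\EWA}_{\arm,\timeEnd}-1)=\reward_\arm(\armCount^{\EWA}_{\arm,\timeEnd}-2)$, gives $\reward_\arm(\armCount^{\EWA}_{\arm,\timeEnd}-2)\ge \reward^+_t(\EWA)-4c(1,\delta_t)$; using $\reward^+_T(\EWA)\le\reward^+_t(\EWA)$, $c(1,\delta_t)\le c(1,\delta_T)$, and one decay step $\reward_\arm(\armCount^{\EWA}_{\arm,\timeEnd}-1)\ge\reward_\arm(\armCount^{\EWA}_{\arm,\timeEnd}-2)-\lipschitz$ from Asm.~\ref{assum-Lipschitz} yields $\reward^+_T(\EWA)-\reward_\arm(\armCount^{\EWA}_{\arm,\timeEnd}-1)\le 4\sqrt{2\alpha\subgaussian^2\log_+(\timeEnd\delta_0^{-1/\alpha})}+\lipschitz$. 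Summing the windowed term and this boundary term over $\arm\in\overpullSet_\HPevent$ gives the stated bound.

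I expect the main obstacle to be the bookkeeping that aligns the \emph{current} overpull window $\window_{\arm,t}=\armCount^{\EWA}_{\arm,t}-\armCount^\star_{\arm,t}$ at the round where Corollary~\ref{fundamental-correlary} is invoked with the \emph{final} overpull count $h^\HPevent_{\arm,T}$ indexing the decomposition of Lemma~\ref{reg:Decompo}. Since the oracle count $\armCount^\star_{\arm,t}$ can still change after the relevant pull, the set of counts averaged by $\expestReward^{\window_{\arm,t}}_\arm$ (namely $[\armCount^\star_{\arm,t},\armCount^{\EWA}_{\arm,t}-1]$) need not coincide with the counts $[\armCount^\star_{\arm,\timeEnd},\armCount^{\EWA}_{\arm,\timeEnd}-2]$ in the regret sum, and because $\expestReward^\window_\arm$ is monotone in $\window$ (larger windows average older, hence larger, means) the naive comparison points the wrong way. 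Making the averaging identity exact therefore forces me to invoke the Corollary at a round where the oracle count has reached $\armCount^\star_{\arm,\timeEnd}$ and $\window_{\arm,t}=h^\HPevent_{\arm,T}-1$; restricting attention to the overpulls realized under $\HPevent$, i.e. to $\overpullSet_\HPevent$ and $h^\HPevent_{\arm,T}$, is precisely what guarantees that the Corollary is applicable at every contributing round and keeps the argument — and the constant $4$ — valid.
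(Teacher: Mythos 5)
Your proposal follows essentially the same route as the paper's proof: restrict to $\overpullSet_\HPevent$ and the first $h_{i,T}^\HPevent$ overpulls, bound the first $h_{i,T}^\HPevent-1$ of them via the windowed-average identity and Corollary~\ref{fundamental-correlary} applied at the round $t_i$ of the last overpull under $\HPevent$, bound the final overpull via the window-$1$ filter plus one decay step of $\lipschitz$, and use the monotonicity of $\reward^+_t(\EWA)$ and of $\delta_t$ to move everything to round $T$; the constants come out identically. The only detail the paper treats that you omit is the degenerate case $\armCount_{\arm,\timeEnd}^\star=0$, $h_{i,T}^\HPevent=1$ (where $\reward_\arm(-1)$ is undefined and one instead bounds the single term directly by $\lipschitz$ using $\reward^+_T(\EWA)\leq\lipschitz$ and $\reward_\arm(0)\geq 0$), which is minor.
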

\begin{proof}
First, we define $h_{i,T}^\HPevent \triangleq \max\left\{ h \leq h_{i,T} | \ \HPevent_{t_i^{\EWA}(N_{i,t}^\star + h)}\right\}$, the last overpull of arm $i$ pulled at round $t_i \triangleq t_i^{\EWA}(N_{i,t}^\star + h_{i,T}^\HPevent) \leq T$ under $\HPevent_t$. Now, we upper-bound $A_{\EWA}$ by including all the overpulls of arm $i$ until the $h_{i,T}^\HPevent$-th overpull, even the ones under $\bar{\HPevent_t}$,
\begin{align*}
A_{\EWA} &\triangleq  \sum_{\arm\in \overpullSet}   \sum_{t'=0}^{\window_{\arm,T}^{\EWA}-1}  \left[ \HPevent_{t_i^{\EWA}(N_{i,t}^\star + t') }\right] \pa{\reward^+_T(\EWA) - \reward_\arm(\armCount_{\arm, \timeEnd}^{\star} + t')} 
\leq \sum_{\arm\in \overpullSet_\HPevent}   \sum_{t'=0}^{h_{i,T}^\HPevent-1}  \pa{\reward^+_T(\EWA) - \reward_\arm(\armCount_{\arm, \timeEnd}^{\star} + t')},
\end{align*}
where $\overpullSet_\HPevent \triangleq \left\{ i \in \overpullSet | \  h_{i,T}^\HPevent \geq 1 \right\}.$ We can therefore split the second sum of~$h_{i,T}^\HPevent$ term above  into two parts. The first part corresponds to the first $h_{i,T}^\HPevent-1$ (possibly zero) terms (overpulling differences) and the second part to the last  $(h_{i,T}^\HPevent-1)$-th one. Recalling that at round $t_i$, arm $i$ was selected under $\HPevent_{t_i}$, we apply
Corollary~\ref{fundamental-correlary} to bound the regret caused by previous overpulls of $i$ (possibly none),
\begin{align}
A_{\EWA} &\leq  \sum_{i \in \overpullSet_\HPevent}   \reward^+_T(\EWA) - \reward_i\pa{N_{i, T}^\star + h_{i,T}^\HPevent  -1} + 4\pa{h_{i,T}^\HPevent - 1}c\pa{h_{i,T}^\HPevent-1, \delta_{t_i }} \label{eq:cor1-use1}\\
&\leq \sum_{i \in \overpullSet_\HPevent}   \reward^+_T(\EWA) - \reward_i\pa{N_{i, T}^\star + h_{i,T}^\HPevent  -1} + 4\pa{h_{i,T}^\HPevent - 1}c\pa{h_{i,T}^\HPevent-1, \delta_{T}}\\
&\leq \sum_{i \in \overpullSet_\HPevent}   \reward^+_T(\EWA) - \reward_i\pa{N_{i, T}^\star + h_{i,T}^\HPevent  -1} + 4\sqrt{2\alpha\subgaussian^2\pa{h_{i,T}^\HPevent - 1}\log_+{\pa{T\delta_0^{-1/\alpha}}}}\CommaBin
\label{eq:lasttimepdq}
\end{align}
with $\log_+(x) \triangleq \max(\log(x),0)$. The second inequality is obtained because $\delta_t$ is decreasing and $c(.,.,\delta)$ is decreasing as well. The last inequality is the definition of confidence interval in Proposition~\ref{prop:heoffding} with $\log_+(T^\alpha)\leq \alpha \log_+(T)$ for $\alpha>1$.
 If  $\armCount_{\arm, \timeEnd}^{\star} = 0$ and $h_{i,T}^\HPevent = 1$ then
\[ \reward^+_T(\EWA) - \reward_\arm(\armCount_{\arm, \timeEnd}^{\star} + h_{i,T}^\HPevent - 1) =  \reward^+(\EWA) - \reward_\arm(0) \leq \lipschitz,\] 
since and  $\reward^+(\EWA) \leq \lipschitz$ and  $ \reward_\arm(0) \geq 0$ by the assumptions of our setting.
Otherwise, we can decompose 
\begin{align*}
\reward^+_T(\EWA) - &\reward_\arm(\armCount_{\arm, \timeEnd}^{\star} + h_{i,T}^\HPevent - 1) 
= \underbrace{\reward^+_T(\EWA) - \reward_\arm(\armCount_{\arm, \timeEnd}^{\star} + h_{i,T}^\HPevent-2)}_{A_1} + 
\underbrace{\reward_\arm(\armCount_{\arm, \timeEnd}^{\star} + h_{i,T}^\HPevent-2) -  \reward_\arm(\armCount_{\arm, \timeEnd}^{\star} + h_{i,T}^\HPevent - 1)}_{A_2}.
\end{align*}
For term $A_1$, since arm $\arm$ was overpulled at least once by \myAlgorithm, it passed at least the first filter. Since this $h_{i,T}^\HPevent$-th overpull is done under $\HPevent_{t_i}$, by Lemma~\ref{fundamental-lemma}  we have that
\[
A_1 \leq 4c(1, \delta_{t_i}) \leq 4c(1,\timeEnd^{-\alpha}) \leq 4\sqrt{2\alpha\subgaussian^2\log_+\left(\timeEnd\delta_0^{-1/\alpha}\right)} .
\] 
The second difference, 
$A_2 = \reward_\arm(\armCount_{\arm, \timeEnd}^{\star} + h_{i,T}^\HPevent-2) -  \reward_\arm(\armCount_{\arm, \timeEnd}^{\star} + h_{i,T}^\HPevent - 1 )$  
cannot exceed $\lipschitz$, since by the assumptions of our setting, the maximum decay in one round is bounded.
Therefore, we further upper-bound Equation~\ref{eq:lasttimepdq} as
\begin{align}
A_{\EWA} \leq \sum_{\arm\in \overpullSet_\HPevent} \pa{4\sqrt{2\alpha\subgaussian^2\log_+{\pa{T\delta_0^{-1/\alpha}}}}
+ 4\sqrt{2\alpha\subgaussian^2\left(h_{i,T}^\HPevent -1\right)\log_+\pa{T\delta_0^{-1/\alpha}}} + \lipschitz}.
\label{HPevent0}
\end{align}
\end{proof}

\begin{lemma}
\label{lemma:B}
Let $\zeta(x) = \sum_n n^{-x}$. Thus, with $\delta_t = \delta_0/(Kt^\alpha)$ and $\alpha > 4$, we can use Proposition~\ref{prop:heoffding} and get
\[
\EE B \triangleq \sum_{t=1}^T p\pa{\bar{\HPevent_t}}Lt \leq \sum_{t=1}^T \frac{Lt\delta_0}{2t^{\alpha-2}}\leq L \delta_0 \frac{\zeta(\alpha -3)}{2}\cdot
\]
\end{lemma}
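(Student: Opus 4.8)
The plan is to compute $\EE B$ directly by linearity of expectation and then to control each failure probability using the union bound already established in Proposition~\ref{prop:heoffding}.

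First, since $B = \sum_{t=1}^T [\bar{\HPevent_t}] Lt$ is a sum of indicators weighted by the deterministic quantities $Lt$, I would take expectations term by term, using that the expectation of the indicator $[\bar{\HPevent_t}]$ equals $\PP{\bar{\HPevent_t}}$, to obtain
\[
\EE B = \sum_{t=1}^T Lt \, \PP{\bar{\HPevent_t}}.
\]
Second, I would apply Proposition~\ref{prop:heoffding}, whose union bound over all triples $(i,n,h)$ gives $\PP{\bar{\HPevent_t}} \le K t^2 \delta_t / 2$. Plugging in the local choice $\delta_t = \delta_0/(K t^\alpha)$, the factor $K$ cancels and leaves
\[
\PP{\bar{\HPevent_t}} \le \frac{K t^2}{2}\cdot\frac{\delta_0}{K t^\alpha} = \frac{\delta_0}{2\, t^{\alpha-2}},
\]
so that multiplying by $Lt$ and summing reproduces the first inequality of the statement, $\EE B \le \sum_{t=1}^T Lt\,\delta_0/(2 t^{\alpha-2})$.

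Third, I would pass from the finite sum to a convergent series. Rewriting $Lt\,\delta_0/(2 t^{\alpha-2}) = L\delta_0/(2\, t^{\alpha-3})$ and bounding the finite sum by the infinite one gives
\[
\EE B \le \frac{L\delta_0}{2}\sum_{t=1}^\infty t^{-(\alpha-3)} = \frac{L\delta_0}{2}\,\zeta(\alpha-3),
\]
where the $p$-series converges exactly because $\alpha-3>1$, i.e.\ $\alpha>4$; this is precisely where the hypothesis on $\alpha$ enters.

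There is no genuine obstacle here: the argument is a short chain combining an expectation identity, the pre-established union bound, and a standard $p$-series convergence test. The only points needing mild care are confirming that the local definition $\delta_t = \delta_0/(K t^\alpha)$ is tuned so the $K$ from the union bound cancels (yielding a $K$-free final constant), and noting that the convergence threshold $\alpha>4$ is exactly the stated hypothesis, so the result degrades gracefully as $\alpha \downarrow 4$.
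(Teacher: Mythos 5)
Your proposal is correct and follows exactly the chain of reasoning the paper intends for this lemma (which it states without a separate proof, since the display is the computation): linearity of expectation, the union bound $\mathbb{P}\big[\bar{\HPevent_t}\big] \leq Kt^2\delta_t/2$ from Proposition~\ref{prop:heoffding}, cancellation of $K$ under the stated choice $\delta_t = \delta_0/(Kt^\alpha)$, and the $p$-series bound requiring $\alpha - 3 > 1$. Your observation that the $K$ in the denominator of $\delta_t$ is precisely what makes the final constant $K$-free is the one point of mild care here, and you have it right.
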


\section{Minimax regret analysis of \myAlgorithm}
\label{proof1}
\restaalgoindepub*
\label{proof2} 

\begin{proof}

To get the problem-independent upper bound for \myAlgorithm, 
we need to upper-bound the regret by quantities which do not depend on $\left\{\reward_\arm\right\}_i$. 
The proof is based on Lemma~\ref{reg:Decompo},
where we bound the expected values of terms 
$A_{\EWA}$ and $B$ from the statement of the lemma.
We start by noting that on high-probability event $\HPevent_\timeEnd$, we have by Lemma~\ref{lemma:A} and $\alpha = 5$ that

\begin{equation*}
A_{\EWA}  \leq \sum_{\arm\in \overpullSet_\HPevent} \pa{  4\sqrt{10\subgaussian^2\log(T)}   
+ 4\sqrt{10\subgaussian^2\left(\window_\arm -1\right)\log(T)} +  \lipschitz}.
\end{equation*}
Since $\overpullSet_\HPevent \subseteq \overpullSet$ and there are at most $\narms - 1$ overpulled arms, we can upper-bound the number of terms in the above sum by  $\narms - 1$.
Next, the total number of overpulls $\sum_{\arm\in\overpullSet} \window_{\arm,T}$ cannot exceed $\timeEnd$. 
As square-root function is concave we can use Jensen's inequality. 
Moreover, we can deduce that the worst allocation of overpulls is the uniform one, i.e., $\window_{\arm,T} = \timeEnd/(\narms-1),$
\begin{align}
A_{\EWA} &\leq (\narms -1)(4\sqrt{10\subgaussian^2\log(T)} + \lipschitz) + 4\sqrt{10\subgaussian^2\log(T)} \sum_{\arm\in \overpullSet} \sqrt{(\window_{\arm,T} - 1)}\nonumber\\ 
&\leq (\narms -1)(4\sqrt{10\subgaussian^2\log(T)} + \lipschitz) + 4\sqrt{10\subgaussian^2\left(\narms-1\right)\timeEnd\log(T)}.
\label{Abound}
\end{align}
Now, we consider the expectation of term $B$  from Lemma~\ref{reg:Decompo}. According to Lemma~\ref{lemma:B}, with $\alpha=5$ and $\delta_0 =1$,
\begin{equation}
\label{Bbound}
\EE B\leq \frac{L\zeta(2)}{2} = \frac{L\pi^2}{12} \cdot 
\end{equation}
Therefore, using Lemma~\ref{reg:Decompo} together with Equations~\ref{Abound} and \ref{Bbound}, we bound the total expected regret as
\begin{equation}
\mathbb{E}[\regret(\EWA)] \leq 4\sqrt{10\subgaussian^2\left(\narms-1\right)\timeEnd\log(T)} + (\narms -1)(4\sqrt{10\subgaussian^2\log(T)} + \lipschitz) +\frac{\lipschitz\pi^2}{6}\cdot
\end{equation}
\end{proof}

\begin{corollary}
\label{cor:HP-minimax}
\myAlgorithm run with $\alpha > 3$ and $\delta_0 \triangleq 2\delta/\zeta(\alpha-2)$ achieves with probability $1 - \delta$, 
\[
R_T(\EWA) = A_{\EWA} \leq 4\sqrt{2\alpha\subgaussian^2\log_+\pa{\frac{\timeEnd}{\delta_0^{1/\alpha}}}}\pa{K -1 +\sqrt{(K-1)T}} + \pa{K-1} \lipschitz.
\]
\end{corollary}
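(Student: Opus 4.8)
The plan is to obtain the statement by conditioning on the \emph{global} favorable event $\Omega \triangleq \bigcap_{t=1}^{\timeEnd}\HPevent_t$ instead of taking expectations as in the proof of Theorem~\ref{independent_theorem}. First I would bound $\mathbb{P}(\bar\Omega)$ by a union bound. By the very same computation as in Lemma~\ref{lemma:B}---but summing the \emph{unweighted} failure probabilities $\mathbb{P}(\bar{\HPevent_t})$ rather than the $\lipschitz t$-weighted ones---the exponent inside the zeta function shifts from $\alpha-3$ to $\alpha-2$, so that $\sum_{t=1}^{\timeEnd}\mathbb{P}(\bar{\HPevent_t}) \leq \tfrac{\delta_0}{2}\,\zeta(\alpha-2)$, which is finite precisely when $\alpha>3$. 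Substituting the prescribed $\delta_0 = 2\delta/\zeta(\alpha-2)$ gives $\mathbb{P}(\bar\Omega) \leq \sum_{t=1}^{\timeEnd}\mathbb{P}(\bar{\HPevent_t}) \leq \delta$, so $\Omega$ holds with probability at least $1-\delta$.

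The key point is what happens on $\Omega$. Here every indicator $[\bar{\HPevent_t}]$ appearing in term $B$ of Lemma~\ref{reg:Decompo} is zero, hence $B=0$ and the regret decomposition collapses to $R_\timeEnd(\EWA) \leq A_{\EWA}$; this is exactly the equality $R_\timeEnd(\EWA)=A_{\EWA}$ recorded in the statement. Furthermore, since on $\Omega$ every overpull is made under a favorable event, we have $\overpullSet_\HPevent = \overpullSet$ and $h_{i,\timeEnd}^{\HPevent}=h_{i,\timeEnd}$, so Lemma~\ref{lemma:A} applies directly and bounds $A_{\EWA}$ by a sum over the overpulled arms of $4\sqrt{2\alpha\subgaussian^2\log_+(\timeEnd\delta_0^{-1/\alpha})} + 4\sqrt{2\alpha\subgaussian^2(h_{i,\timeEnd}-1)\log_+(\timeEnd\delta_0^{-1/\alpha})} + \lipschitz$.

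Finally I would close the estimate exactly as in Theorem~\ref{independent_theorem}, but keeping $\alpha$ and $\delta_0$ symbolic. There are at most $\narms-1$ overpulled arms, so the constant and additive-$\lipschitz$ contributions sum to $(\narms-1)\big(4\sqrt{2\alpha\subgaussian^2\log_+(\timeEnd\delta_0^{-1/\alpha})}+\lipschitz\big)$. For the remaining square-root terms, the global budget $\sum_{i\in\overpullSet}(h_{i,\timeEnd}-1)\leq\sum_i h_{i,\timeEnd}\leq\timeEnd$ together with concavity of $x\mapsto\sqrt{x}$ (Jensen's inequality, the worst case being the uniform allocation $h_{i,\timeEnd}=\timeEnd/(\narms-1)$) gives $\sum_{i\in\overpullSet}\sqrt{h_{i,\timeEnd}-1}\leq\sqrt{(\narms-1)\timeEnd}$. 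Collecting everything and factoring out $4\sqrt{2\alpha\subgaussian^2\log_+(\timeEnd/\delta_0^{1/\alpha})}$ yields the announced bound. I expect the only genuinely new ingredient relative to Theorem~\ref{independent_theorem} to be the opening observation that conditioning on $\Omega$ annihilates $B$ outright; everything after that is a high-probability replay of the expectation argument with the parameters left general, so the main care needed is simply tracking the $\zeta(\alpha-2)$ (rather than $\zeta(\alpha-3)$) normalization that fixes $\delta_0$.
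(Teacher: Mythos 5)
Your proposal is correct and follows essentially the same route as the paper: intersect the per-round favorable events, use the union bound $\sum_t Kt^2\delta_t/2 \leq \zeta(\alpha-2)\delta_0/2$ to set $\delta_0 = 2\delta/\zeta(\alpha-2)$, observe that $B=0$ on this event so the regret reduces to $A_{\EWA}$, and then replay the bound on $A_{\EWA}$ from Theorem~\ref{independent_theorem} with $\alpha$ and $\delta_0$ kept symbolic. The only cosmetic remark is that the decomposition of Lemma~\ref{reg:Decompo} gives $R_T(\EWA) \leq A_{\EWA}$ rather than equality, but this matches the paper's own (slightly abusive) phrasing.
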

\begin{proof}
We consider the event $\bigcup_{t\leq T} \HPevent_t$  which happens with probability
\begin{equation*}
 1 - \sum_{t\leq T} \frac{Kt^2\delta_t}{2} \leq 1 - \sum_{t\leq T} \frac{Kt^2\delta_t}{2} \leq 1 - \frac{\zeta(\alpha - 2)\delta_0}{2}\cdot
\end{equation*}
Therefore, by setting $\delta_0 \triangleq 2\delta/\zeta(\alpha-2)$, we have that $B = 0$ with probability $1-\delta$ since $\Big[ \bar{\HPevent_t} \Big] = 0$ for all $t$. We can then use the same analysis of $A_{\EWA}$ as in Theorem~\ref{independent_theorem} to get
\[
R_T(\EWA) = A_{\EWA} \leq 4\sqrt{2\alpha\subgaussian^2\log_+\pa{\frac{\timeEnd}{\delta_0^{1/\alpha}}}}\pa{K -1 +\sqrt{(K-1)T}} + \pa{K-1} \lipschitz.
\]

\end{proof}

\section{Problem-dependent regret analysis of \myAlgorithm} 
\label{sec:proofdep}
\begin{restatable}{lemma}{lemmaPDbound}\label{lemma:PDbound}
$A_{\EWA}$ defined in Lemma~\ref{reg:Decompo} is upper-bounded by a problem-dependent quantity,
\[
A_{\EWA} \leq \sum_{\arm\in \possibleArms} \pa{\frac{32\alpha\subgaussian^2\log_+(T\delta_0^{-1/\alpha})}{\Delta_{i,h_{i,T}^+-1}} + \sqrt{32\alpha\subgaussian^2\log_+(T\delta_0^{-1/\alpha})}}+  (K-1)\lipschitz.
\]
\end{restatable}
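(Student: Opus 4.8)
The plan is to start from the bound on $A_{\EWA}$ already established in Lemma~\ref{lemma:A},
\[
A_{\EWA} \leq \sum_{\arm\in \overpullSet_\HPevent} \pa{4\sqrt{2\alpha\subgaussian^2\log_+(\timeEnd\delta_0^{-1/\alpha})} + 4\sqrt{2\alpha\subgaussian^2(h_{i,T}^\HPevent -1)\log_+(\timeEnd\delta_0^{-1/\alpha})} + \lipschitz},
\]
and to convert the middle (variance) term, currently scaling like $\sqrt{h_{i,T}^\HPevent-1}$, into the problem-dependent shape $C_\alpha \log/\Delta_{i,\window_{\arm,T}^+-1}$, where I write $C_\alpha = 32\alpha\subgaussian^2$. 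The first term already equals $\sqrt{C_\alpha\log_+(\timeEnd\delta_0^{-1/\alpha})}$ because $4\sqrt{2\alpha\subgaussian^2} = \sqrt{32\alpha\subgaussian^2} = \sqrt{C_\alpha}$, and the additive $\lipschitz$, summed over the at most $K-1$ overpulled arms, produces the $(K-1)\lipschitz$ term. Thus the whole argument reduces to controlling the number of ``good'' overpulls $h_{i,T}^\HPevent$ by $\window_{\arm,T}^+$ and then feeding that back into the variance term.

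Next I would prove the overpull bound $h_{i,T}^\HPevent \le \window_{\arm,T}^+$ via Corollary~\ref{fundamental-correlary}. Let $t_i$ be the round of the last overpull of arm $\arm$ performed under $\HPevent_t$, so that $N_{i,t_i}^{\EWA} = N_{i,T}^\star + h_{i,T}^\HPevent - 1$. Since oracle counts are non-decreasing, $N_{i,t_i}^\star \le N_{i,T}^\star$, hence the instantaneous overpull count satisfies $\window_{\arm,t_i} = N_{i,t_i}^{\EWA} - N_{i,t_i}^\star \ge h_{i,T}^\HPevent - 1$. Applying Corollary~\ref{fundamental-correlary} at $t_i$ and combining three monotonicities would give $\Delta_{i,h_{i,T}^\HPevent-1} \le 4c(h_{i,T}^\HPevent-1,\delta_T)$: (i) $\reward^+_{t_i}(\EWA)\ge \min_{j} \reward_j(N_{j,T}^\star-1)$, which holds because any arm underpulled at $T$ is also underpulled at $t_i$, so its current mean dominates $\reward_j(N_{j,T}^\star-1)$ by the decay assumption; (ii) $\expestReward^{\window_{\arm,t_i}}_\arm(N_{i,t_i}^{\EWA}) \ge \expestReward^{h_{i,T}^\HPevent-1}_\arm(N_{i,T}^\star+h_{i,T}^\HPevent-1)$, since a longer window averages in older, larger means; and (iii) $c(\window_{\arm,t_i},\delta_{t_i}) \le c(h_{i,T}^\HPevent-1,\delta_T)$, using that $c$ decreases in both the window and $\delta$ and that $t_i\le T$. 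Squaring rearranges this into $h_{i,T}^\HPevent \le 1 + C_\alpha\log_+(\timeEnd\delta_0^{-1/\alpha})/\Delta_{i,h_{i,T}^\HPevent-1}^2$, so $h_{i,T}^\HPevent$ meets the constraint defining $\window_{\arm,T}^+$ in Eq.~\ref{eq:hit+}, whence $h_{i,T}^\HPevent \le \window_{\arm,T}^+$.

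Finally I would plug this back into the variance term, taking care with the order of operations. I first use $h_{i,T}^\HPevent \le \window_{\arm,T}^+$ \emph{inside} the square root (legal since $\sqrt{\cdot}$ is increasing), and only then invoke the defining inequality $\window_{\arm,T}^+ - 1 \le C_\alpha\log_+(\timeEnd\delta_0^{-1/\alpha})/\Delta_{i,\window_{\arm,T}^+-1}^2$. Doing it in this order is essential: because $\Delta_{i,h}$ is \emph{increasing} in $h$, substituting the definition at $h_{i,T}^\HPevent$ would produce $1/\Delta_{i,h_{i,T}^\HPevent-1} \ge 1/\Delta_{i,\window_{\arm,T}^+-1}$ and point the wrong way. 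This yields $4\sqrt{2\alpha\subgaussian^2(h_{i,T}^\HPevent-1)\log_+(\timeEnd\delta_0^{-1/\alpha})} \le C_\alpha\log_+(\timeEnd\delta_0^{-1/\alpha})/\Delta_{i,\window_{\arm,T}^+-1}$, using $4\sqrt{2\alpha\subgaussian^2}\,\sqrt{C_\alpha} = C_\alpha$. Summing the three per-arm terms over $\overpullSet_\HPevent \subseteq \possibleArms$ (extending to all of $\possibleArms$ only loosens the bound, as the remaining terms are nonnegative) and bounding $|\overpullSet_\HPevent|\le K-1$ in the $\lipschitz$ term gives the claim. I expect the main obstacle to be the overpull bound of the second step: carefully aligning the oracle-count, window-length, and confidence-level monotonicities so that the quantities appearing in Corollary~\ref{fundamental-correlary} at the intermediate round $t_i$ can be rewritten in terms of the horizon gap $\Delta_{i,h_{i,T}^\HPevent-1}$, together with the degenerate case $h_{i,T}^\HPevent=1$, where the variance term vanishes and only $\sqrt{C_\alpha\log_+(\timeEnd\delta_0^{-1/\alpha})}+\lipschitz$ remains.
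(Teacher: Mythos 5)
Your overall strategy is the paper's: start from the bound of Lemma~\ref{lemma:A}, control $h_{i,T}^\HPevent$ by the problem-dependent quantity $h_{i,T}^+$ using the concentration guarantee at the round $t_i$ of the last overpull under the favorable event, and substitute back in the right order (your observation that one must plug $h_{i,T}^+$ into the square root \emph{before} invoking its defining inequality, because $\Delta_{i,h}$ is non-decreasing in $h$, is exactly right and matches the paper). However, your step (ii) has a genuine flaw. You apply Corollary~\ref{fundamental-correlary} at $t_i$ with the instantaneous window $\window_{\arm,t_i}=\armCount_{\arm,t_i}^{\EWA}-\armCount_{\arm,t_i}^{\star}\ge h_{i,T}^\HPevent-1$, which lower-bounds $\expestReward^{\window_{\arm,t_i}}_\arm(\armCount_{\arm,t_i}^{\EWA})$, and then invoke $\expestReward^{\window_{\arm,t_i}}_\arm(\armCount_{\arm,t_i}^{\EWA})\ge\expestReward^{h_{i,T}^\HPevent-1}_\arm(\armCount_{\arm,T}^{\star}+h_{i,T}^\HPevent-1)$. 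That inequality is true, but it points the wrong way for your purpose: a lower bound on the larger, longer-window average gives no lower bound on the smaller, shorter-window average that actually enters $\Delta_{i,h_{i,T}^\HPevent-1}$. Combining your (i), (ii), (iii) only yields $\reward^-_T(\policy^\star)-\expestReward^{\window_{\arm,t_i}}_\arm(\armCount_{\arm,t_i}^{\EWA})\le 4c(h_{i,T}^\HPevent-1,\delta_T)$, which by (ii) is a bound on a quantity \emph{smaller} than $\Delta_{i,h_{i,T}^\HPevent-1}$, so the key conclusion $h_{i,T}^\HPevent\le h_{i,T}^+$ does not follow as written.

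The fix is to bypass the window conversion entirely: since arm $\arm$ is pulled at round $t_i$, it passed \emph{every} filter up to window $\armCount_{\arm,t_i}^{\EWA}$, in particular the filter of window $h_{i,T}^\HPevent-1\le\armCount_{\arm,t_i}^{\EWA}$, so Lemma~\ref{fundamental-lemma} applies directly to that window and gives $\expestReward^{h_{i,T}^\HPevent-1}_\arm(\armCount_{\arm,t_i}^{\EWA})\ge\reward^+_{t_i}(\EWA)-4c(h_{i,T}^\HPevent-1,\delta_{t_i})$, which is exactly the average appearing in $\Delta_{i,h_{i,T}^\HPevent-1}$; this is in substance what the paper does. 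A smaller remark: your justification of (i) via ``any arm underpulled at $T$ is also underpulled at $t_i$'' is not true in general (an arm can be overpulled at $t_i$ yet underpulled at $T$), but the conclusion survives because an arm $j$ underpulled at $T$ satisfies $\armCount_{j,t_i}^{\EWA}\le\armCount_{j,T}^{\EWA}\le\armCount_{j,T}^{\star}-1$, whence $\reward^+_{t_i}(\EWA)\ge\reward_j(\armCount_{j,t_i}^{\EWA})\ge\reward_j(\armCount_{j,T}^{\star}-1)\ge\reward^-_T(\policy^\star)$; if no arm is underpulled at $T$ the regret is zero and there is nothing to prove, which is the degenerate case the paper disposes of separately.
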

\begin{proof}

We start from the result of Lemma~\ref{lemma:A},
\begin{equation}
\label{lemmaAstart}
A_{\EWA} \leq \sum_{\arm\in \overpullSet_\HPevent} \pa{4\sqrt{2\alpha\subgaussian^2\log_+(\timeEnd\delta_0^{-1/\alpha})} \left( 1 + \sqrt{h_{i,T}^\HPevent -1}\right)} + (K-1)\lipschitz. 
\end{equation}
We want to bound $h_{i,T}^\HPevent $ with a problem dependent quantity $h_{i,T}^+$. We remind the reader that for arm $i$ at round $T$, the $h_{i,T}^\HPevent$-th overpull has been on $\HPevent_{t_i}$ pulled at round $t_i$. Therefore, Corollary~\ref{fundamental-correlary} applies and we have
\begin{align*}
\expestReward_\arm^{h_{i,T}^\HPevent  - 1} \left( \armCount_{\arm,\timeEnd}^{\star} + h_{i,T}^\HPevent   - 1 \right) &\geq \reward_T^+(\EWA) - 4c\pa{h_{i,T}^\HPevent   - 1, \delta_{t_i}}
\geq \reward_T^+(\EWA) - 4c\pa{h_{i,T}^\HPevent   - 1, \delta_T}
\\&
\geq \reward_T^+(\EWA) - 4\sqrt{\frac{2\alpha\subgaussian^2\log_+\pa{\timeEnd\delta_0^{-1/\alpha}}}{h_{i,T}^\HPevent -1}}
\geq \reward^-_T(\pi^\star) - 4\sqrt{\frac{2\alpha\subgaussian^2\log_+\pa{\timeEnd\delta_0^{-1/\alpha}}}{h_{i,T}^\HPevent -1}} \CommaBin
\end{align*}
with $\reward^-_T(\policy^\star) \triangleq \min_{i \in \possibleArms} \reward_i \pa{N_{i,T}^\star-1}$ being the lowest mean reward for which a noisy value was ever obtained by the optimal policy. $\reward^-_T(\policy^\star) < \reward^+_T(\EWA)$ implies that the regret is 0. Indeed, in that case the next possible pull 
with the largest mean for $\EWA$ is \emph{strictly larger}  than the mean of 
the last pull for $\policy^\star.$ Thus, there is no underpull at this round for $\EWA$ and $R_T(\EWA) = 0$ according to Equation~\ref{eq:regret2}. Therefore, we can assume $\reward^-_T(\policy^\star) \geq \reward^+_T(\EWA)$ for the regret bound.  Next, we define $\Delta_{i,h} \triangleq \reward^-_T(\policy^\star) - \bar{\reward}_i^h\left( N_{i,t}^\star+h \right)$ as the difference between the lowest mean value of the arm pulled by $\policy^\star$ and the average of the $h$ first overpulls of arm~$i$. Thus, we have the following  bound  for $h_{i,T}^\HPevent,$ 
\[
h_{i,T}^\HPevent \leq 1 + \frac{32\alpha\subgaussian^2\log_+\pa{T\delta_0^{-1/\alpha}}}{\Delta_{i,h_{i,T}^\HPevent-1}}\cdot
\] 
Next, $h_{i,T}^\HPevent$ has to be smaller than the maximum such $h$, for which the  inequality just above is satisfied if we replace $h_{i,T}^\HPevent$ by $h$. Therefore,
\begin{equation}
\label{PDoverpulls}
wh_{i,T}^\HPevent \leq h_{i,T}^+  \triangleq \max \left\{ h \leq T \big| \ h \leq  1 + \frac{32\alpha \subgaussian^2 \log_+ \pa{T\delta_0^{-1/\alpha}}}{\Delta_{i,h-1}^2} \right\}\cdot
\end{equation}
Since the square-root function is increasing, we can upper-bound Equation~\ref{eq:lasttimepdq} by replacing $h_{i,T}^\HPevent$ by its upper bound $h_{i,T}^+$
to get
\begin{align*}
A_{\EWA} &\leq \sum_{\arm\in \overpullSet_\HPevent} \pa{4\sqrt{2\alpha\subgaussian^2\log_+(T\delta_0^{-1/\alpha})} \left( 1 + \sqrt{h_{i,T}^+ - 1}\right) + \lipschitz}\\
& \leq \sum_{\arm\in \overpullSet_\HPevent} \pa{\sqrt{32\alpha\subgaussian^2\log_+(T\delta_0^{-1/\alpha})} \left( 1 + \frac{\sqrt{32\alpha\subgaussian^2\log_+(T\delta_0^{-1/\alpha})}}{\Delta_{i,h_{i,T}^+-1}}\right) + \lipschitz}. 
\end{align*}
The quantity $\overpullSet_\HPevent$ is depends on the execution. Notice that there  are at most $K-1$ arms in $\overpullSet_\HPevent$ and that $\overpullSet \subset \possibleArms$. Therefore, we have 
\begin{equation*}
A_{\EWA} \leq \sum_{\arm\in \possibleArms} \pa{\frac{32\alpha\subgaussian^2\log_+\pa{T\delta_0^{-1/\alpha}}}{\Delta_{i,h_{i,T}^+-1}} + \sqrt{32\alpha\subgaussian^2\log_+\pa{T\delta_0^{-1/\alpha}}}}+ (K-1)\lipschitz. 
\end{equation*}
\end{proof}
\restaalgoub*
\begin{proof}
Using Lemmas~\ref{reg:Decompo}, \ref{lemma:B}, and~\ref{lemma:PDbound} we get
\begin{align*}
\EE{\regret(\EWA)} &=\EE{A_{\EWA}} + \EE B 
\leq \sum_{\arm\in \possibleArms} \pa{\frac{32\alpha\subgaussian^2\log(T)}{\Delta_{i,h_{i,T}^+-1}} + \sqrt{32\alpha\subgaussian^2\log(T)}}+ (K-1)\lipschitz  + \frac{L\pi^2}{6} \\
&\leq \sum_{\arm\in \possibleArms} \pa{\frac{32\alpha\subgaussian^2\log(T)}{\Delta_{i,h_{i,T}^+-1}} + \sqrt{32\alpha\subgaussian^2\log(T)} + L}\cdot
\end{align*}
\end{proof}

\begin{corollary}
\label{cor:HP-PD}
\myAlgorithm run with $\alpha > 3$ and $\delta_0 \triangleq 2\delta/\zeta(\alpha-2)$ achieves with probability $1 - \delta$, 
\[
R_T(\EWA) \leq \sum_{\arm\in \possibleArms} \pa{\frac{32\alpha\subgaussian^2\log_+\left(\frac{T\zeta(\alpha-2)^{1/\alpha}}{(2\delta)^{1/\alpha}}\right)}{\Delta_{i,h_{i,T}^+-1} } + \sqrt{32\alpha\subgaussian^2\log_+\left(\frac{T\zeta(\alpha-2)^{1/\alpha}}{(2\delta)^{1/\alpha}}\right) }}+ (K-1)\lipschitz. 
\]
\end{corollary}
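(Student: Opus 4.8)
The plan is to mirror the argument of Corollary~\ref{cor:HP-minimax} \emph{verbatim} at the structural level, simply swapping the minimax control of $A_{\EWA}$ for its problem-dependent counterpart in Lemma~\ref{lemma:PDbound}. First I would invoke the regret decomposition of Lemma~\ref{reg:Decompo}, namely $R_T(\EWA)\le A_{\EWA}+B$ with $B=\sum_{t=1}^T[\bar{\HPevent_t}]\,Lt$. This reduces the whole task to two independent pieces: eliminating $B$ on a high-probability event, and bounding $A_{\EWA}$ on that same event.

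Second, I would show that all the favorable events hold simultaneously up to horizon $T$ with probability at least $1-\delta$. By Proposition~\ref{prop:heoffding}, $\PP{\bar{\HPevent_t}}\le Kt^2\delta_t/2$; plugging $\delta_t=\delta_0/(Kt^\alpha)$ (the convention used in Lemma~\ref{lemma:B}) and applying a union bound gives $\PP{\exists\,t\le T:\bar{\HPevent_t}}\le(\delta_0/2)\sum_{t\le T}t^{2-\alpha}\le(\delta_0/2)\,\zeta(\alpha-2)$. The hypothesis $\alpha>3$ is exactly what makes $\zeta(\alpha-2)$ finite, since $\sum_t t^{2-\alpha}$ converges if and only if $\alpha-2>1$. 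Calibrating $\delta_0\triangleq 2\delta/\zeta(\alpha-2)$ then caps this failure probability at $\delta$, so $\bigcap_{t\le T}\HPevent_t$ holds with probability $1-\delta$.

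Third, on this event every indicator $[\bar{\HPevent_t}]$ vanishes, whence $B=0$ and $R_T(\EWA)=A_{\EWA}$ deterministically. I would then apply Lemma~\ref{lemma:PDbound} to obtain $A_{\EWA}\le\sum_{\arm\in\possibleArms}\bigl(32\alpha\subgaussian^2\log_+(T\delta_0^{-1/\alpha})/\Delta_{i,h_{i,T}^+-1}+\sqrt{32\alpha\subgaussian^2\log_+(T\delta_0^{-1/\alpha})}\bigr)+(K-1)\lipschitz$. Finally, substituting $\delta_0=2\delta/\zeta(\alpha-2)$ yields $\delta_0^{-1/\alpha}=\zeta(\alpha-2)^{1/\alpha}/(2\delta)^{1/\alpha}$, so that $T\delta_0^{-1/\alpha}=T\zeta(\alpha-2)^{1/\alpha}/(2\delta)^{1/\alpha}$, which is precisely the argument of $\log_+$ appearing in the claimed bound; this reproduces the statement as worded.

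Once Lemmas~\ref{reg:Decompo} and~\ref{lemma:PDbound} are available, the proof is pure bookkeeping and contains no genuinely hard step. The only point demanding care is the calibration of $\delta_0$ through the union bound: one must verify that the per-round failure probabilities $Kt^2\delta_t/2$ aggregate into a convergent series, which is why the natural hypothesis here is $\alpha>3$ (rather than the $\alpha=5$ chosen for the in-expectation Theorem~\ref{dependent_theorem}), and why the additive $B$-term that survives in the expected-regret bound disappears entirely in the high-probability version.
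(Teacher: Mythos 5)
Your proposal is correct and follows essentially the same route as the paper's own proof: a union bound over the favorable events $\HPevent_t$ with the calibration $\delta_0 \triangleq 2\delta/\zeta(\alpha-2)$ (where $\alpha>3$ ensures convergence of $\sum_t t^{2-\alpha}$) forces $B=0$ with probability $1-\delta$, after which Lemma~\ref{lemma:PDbound} and the substitution $\delta_0^{-1/\alpha}=\bigl(\zeta(\alpha-2)/(2\delta)\bigr)^{1/\alpha}$ yield the stated bound. Your observation that the per-round confidence should be read as $\delta_t=\delta_0/(Kt^\alpha)$ so that the factor $K$ cancels in the union bound is exactly consistent with the computation the paper performs in Corollaries~\ref{cor:HP-minimax} and~\ref{cor:HP-PD}.
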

\begin{proof}
We consider the event $\cup_{t\leq T} \HPevent_t$  which happens with probability
\begin{equation*}
 1 - \sum_{t\leq T} \frac{Kt^2\delta_t}{2} \leq 1 - \sum_{t\leq T} \frac{Kt^2\delta_t}{2} \leq 1 - \frac{\zeta(\alpha - 2)\delta_0}{2}\cdot
\end{equation*}
Therefore, by setting $\delta_0 \triangleq 2\delta/\zeta(\alpha-2)$, we have that with probability $1-\delta$, $B = 0$ since $\Big[ \bar{\HPevent_t} \Big] = 0$ for all $t$. We use Lemma~\ref{lemma:PDbound} to get the claim of the corollary.
\end{proof}

\section{Efficient algorithm \EFF}
\label{EFF}

\begin{figure}[t]
	\centering
	\includegraphics[scale=0.6]{./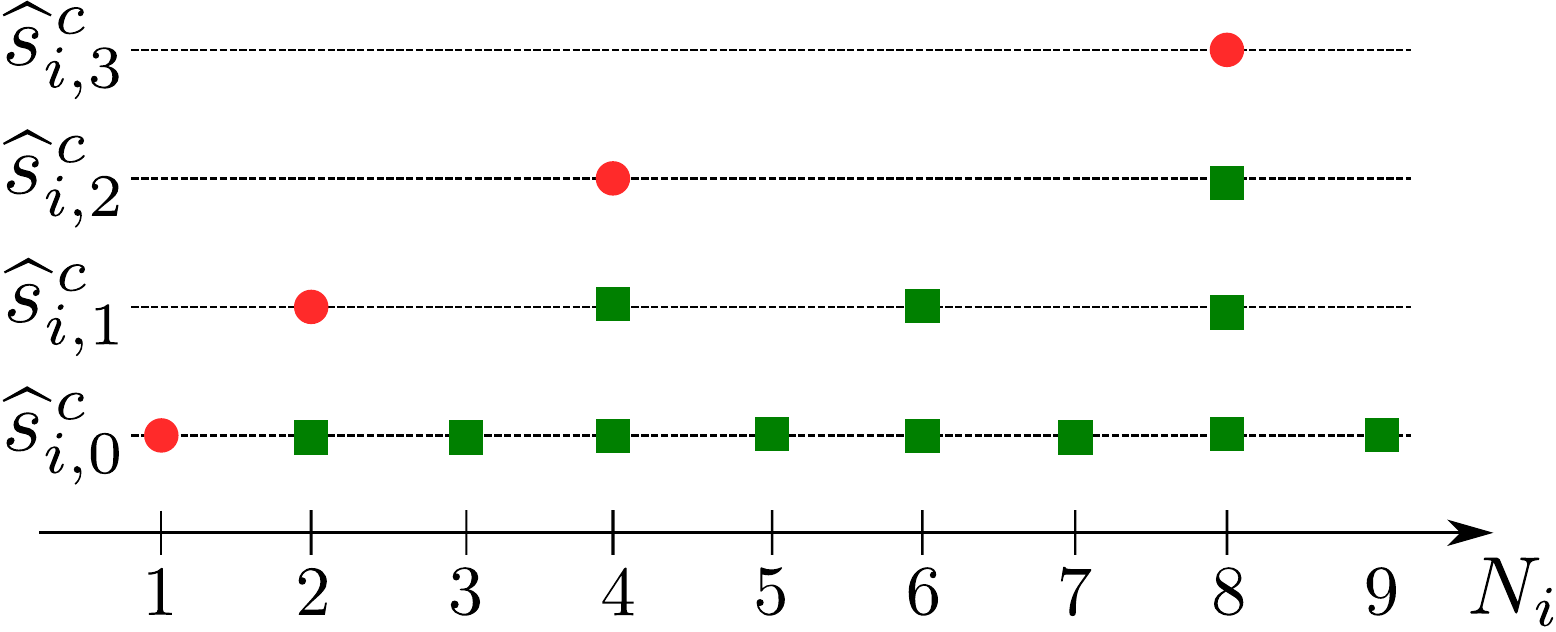}
	\caption{Illustration of the functioning of \EFF. The red circles denotes the number of pulls of arm $i$ at which a new estimate $\widehat{s}^{\,c}_{i,j}$ is created corresponding to a window $h = 2^j$, while the green boxes indicate the number of pulls for which $\hat{s}_{i,j}^{c}$ is updated with the last $2^j$ samples.}
	\label{eff-fig}
\end{figure}

In Algorithm~\ref{Eff-alg}, we present \EFF, an algorithm that stores at most $2 K\log_2(t)$ statistics. More precisely, for $j \leq \log_2(N_{i,t}^{\EFFP})$,  we let $\hat{s}_{i,j}^{\,p}$ and $\hat{s}_{i,j}^{\,c}$ be the current and pending $j$-th statistic for arm $i$. 

\begin{algorithm}[H]
\caption{\EFF}
\label{Eff-alg}
\begin{algorithmic}[1]
\REQUIRE $\possibleArms$, $\delta_0$, $\alpha$
	\STATE pull each arm once, collect reward, and initialize $N_{\arm,K} \leftarrow 1$ 
	\FOR{$\currentTime \gets K+1, K+2, \dots \do $}
		\STATE $\delta_t \leftarrow \delta_0/(t^\alpha)$
		\STATE $j \leftarrow 0$ 
		{\footnotesize \COMMENT{\emph{initialize bandwidth}}}
		\STATE $\possibleArms_1 \leftarrow \possibleArms$ 
		{\footnotesize \COMMENT{\emph{initialize with all the arms}}}
		\STATE $\arm(t) \gets {\tt none}$
		\WHILE{$\arm(t)$ is  ${\tt none}$}
			\STATE $\possibleArms_{2^{j+1}} \leftarrow {\small{\textsc{EFF\_Filter}}}(\possibleArms_{2^j} ,j, \delta_\currentTime)$
			\STATE $j \leftarrow j + 1$ 
			\IF{$\exists \arm \in \possibleArms_{2^j}$ such that $\armCount_{\arm, t}\leq 2^j $}\label{eff:cond}
			\STATE $\arm(t) \leftarrow \arm$
			\ENDIF
		\ENDWHILE
		\STATE  receive $\obs_\arm(\armCount_{\arm,\currentTime +1 }) \leftarrow \obs_{\arm(\currentTime),\currentTime}$
		\STATE ${\small{\textsc{EFF\_Update}}}\left(i(t),r_i(N_{i,t+1}),t+1\right)$
	\ENDFOR
\end{algorithmic}

\end{algorithm}
\noindent
As $N_{i,t}$ increases, new statistics $\hat{s}_{i,j}^{\,c}$ for larger windows are created as illustrated in Figure~\ref{eff-fig}. First, at any time $t$, $\hat{s}_{i,j}^{\,c}$ is the average of $2^{j-1}$ consecutive reward samples for arm $i$ within the last $2^{j} -1$ sample. These statistics are used in the filtering process as they are representative of exactly $2^{j-1}$ recent samples.  Second, $\hat{s}_{i,j}^{\,p}$ stores the pending samples that are not yet taken into account by $\hat{s}_{i,j}^{\,c}$. Therefore, each time we pull arm $i$, we update all the pending averages. When the pending statistic is the average of the $2^{j-1}$ last samples then we set $\hat{s}_{i,j}^{\,c} \leftarrow \hat{s}_{i,j}^{\,p}$ and reinitialize $\hat{s}_{i,j}^{\,p} \leftarrow 0$. 

\begin{algorithm}[b!]
\caption{{\sc EFF\_Filter}}
\label{alg:eff-filter}
\begin{algorithmic}[1]
\REQUIRE $\possibleArms_{2^j}$, $j$, $\delta_t$, $\subgaussian$
\STATE $c(2^j, \delta_\currentTime) \leftarrow \sqrt{2\subgaussian^2/2^j \log{\delta^{-1}_\currentTime}}$
\STATE $\hat{s}_{\max,j}^{\,c}   \leftarrow \max_{\arm \in \possibleArms_{\window}}\hat{s}_{i,j}^{\,c} $
\FOR{$ \arm \in \possibleArms_{\window}$}
	\STATE $\Delta_\arm \leftarrow  \hat{s}_{\max,j}^{\,c}   - \hat{s}_{i,j}^{\,c}$
	\IF{$\Delta_\arm \leq 2c(2^j,  \delta_\currentTime) $}
	\STATE add $\arm$ to $\possibleArms_{2^{j+1}}$
	\ENDIF
\ENDFOR
\ENSURE $\possibleArms_{2^{j+1}}$
\end{algorithmic}
\end{algorithm}
\begin{algorithm}[H]
\caption{{\small\sc EFF\_Update}}
\label{alg:eff-update}
\begin{algorithmic}[1]
\REQUIRE $i$, $r$, $t$
\STATE $\armCount_{\arm(\currentTime),\currentTime} \leftarrow \armCount_{\arm(\currentTime),\currentTime-1} +1$
\STATE $R^{\rm total}_i \leftarrow R^{\rm total}_i + r$ 
{\footnotesize \COMMENT{\emph{keep track of total reward}}}
\IF{$\exists j$ such that $N_{i,t} = 2^j$}
\STATE $\hat{s}_{i,j}^{\,c} \leftarrow R_i^{\rm total}/N_{i,t}$ 
{\footnotesize \COMMENT{\emph{initialize new statistics}}}
\STATE $\hat{s}_{i,j}^{\,p} \leftarrow 0$
\STATE $n_{i,j}\leftarrow 0$
\ENDIF
\FOR{$j \gets 0 \dots \log_2(N_{i,t})$}
\STATE $n_{i,j} \leftarrow n_i +1$
\STATE $\hat{s}_{i,j}^{\,p} \leftarrow \hat{s}_{i,j}^{\,p} + r$
\IF{$n_{i,j} = 2^j$}
\STATE $\hat{s}_{i,j}^{\,c} \leftarrow \hat{s}_{i,j}^{\,p}/2^j$
\STATE $n_{i,j} \leftarrow 0$
\STATE $\hat{s}_{i,j}^{\,p} \leftarrow 0$
\ENDIF
\ENDFOR
\end{algorithmic}
\end{algorithm}

In analyzing the performance of \EFF, we have to account for two different effects: (1) the loss in resolution due to windows of size that increases exponentially instead of a fixed increment of 1, and (2) the delay in updating the statistics $\hat{s}_{i,j}^{\,c}$, which do not include the most recent samples. We let $\expestReward_i^{h',h''}$ be the average of the samples between the $h'$-th to last one and the $h''$-th to last one (included) with $h'' > h'$. \FEWA  was controlling  $\expestReward_i^{1,h}$ for each arm, \EFF controls $\expestReward_i^{h'_i,h'_i +2^{j-1}}$ with different $h'_i \leq 2^{j-1} -1$ for each arm depending on when $\hat{s}_{i,j}^{\,c}$ was refreshed last time. However, since the means of arms are non-increasing, we can consider the worst case when the arm
with the highest mean available at that round is estimated with its last samples (the smaller ones) and the bad arms are estimated on their oldest possibles samples (the larger ones).

\begin{restatable}{lemma}{restalemefficient}
\label{lem:efficient}
On favorable event $\HPevent_t$, if an arm $\arm$ passes through a filter of window $\window$ at round $\currentTime$, the average of its $\window$ last pulls cannot deviate significantly from the best available arm $\arm^\star_\currentTime$ at that round,
\begin{equation*}
\expestReward_i^{2^{j-1},2^{j}-1} \geq \reward^+_t(\EWA) - 4 c(\window, \delta_\currentTime).
\end{equation*}
\end{restatable}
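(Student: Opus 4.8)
The plan is to replay the argument establishing Lemma~\ref{fundamental-lemma} for \FEWA, adapting it to the two features that set \EFF apart: the windows now grow geometrically ($\window = 2^j$) rather than by unit increments, and each precomputed block $\hat{s}_{i,j}^{\,c}$ averages samples that need not be the most recent ones but lie somewhere within the last $2^j-1$ pulls. The concentration input is unchanged: on $\HPevent_\currentTime$, Proposition~\ref{prop:heoffding} still applies to every block, so $\hat{s}_{i,j}^{\,c}$ is within $c(\window,\delta_\currentTime)$ of its mean $\mathbb{E}[\hat{s}_{i,j}^{\,c}]$. Repeating the three-term bookkeeping of the first chain in the proof of Lemma~\ref{fundamental-lemma} (the confidence radius of the kept arm, the filter threshold $2c$, and the confidence radius of the empirically best arm), an arm $\arm$ surviving \textsc{EFF\_Filter} at window $\window=2^j$ obeys $\mathbb{E}[\hat{s}_{i,j}^{\,c}] \geq \max_{i'\in\possibleArms_{2^j}}\mathbb{E}[\hat{s}_{i',j}^{\,c}] - 4c(\window,\delta_\currentTime)$.

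Next I would exploit Asm.\,\ref{assum-Lipschitz}: since $\reward_i$ is non-increasing, older samples carry larger means, so for every arm and level $\mathbb{E}[\hat{s}_{i,j}^{\,c}] \leq \expestReward_i^{2^{j-1},2^{j}-1}$, i.e. the oldest-placement average appearing in the claim dominates the realized block mean. It therefore suffices to chain the running maximum $\max_{i'\in\possibleArms_{2^j}}\mathbb{E}[\hat{s}_{i',j}^{\,c}]$ down the geometric grid of windows and recover $\reward^+_t(\EWA)$ at the bottom. The bottom step mirrors the second display of Lemma~\ref{fundamental-lemma}: at $j=0$ the block is the single most recent sample, so $\mathbb{E}[\hat{s}_{i,0}^{\,c}]=\mu_i(N_{i,t}-1)$ and $\reward^+_t(\EWA)=\reward_{\arm^\star_\currentTime}(\armCount_{\arm^\star_\currentTime,\currentTime}) \leq \reward_{\arm^\star_\currentTime}(\armCount_{\arm^\star_\currentTime,\currentTime}-1)\leq \max_{i'\in\possibleArms_1}\mathbb{E}[\hat{s}_{i',0}^{\,c}]$. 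The intermediate inequalities are the analog of the third display of Lemma~\ref{fundamental-lemma} and require that the running maximum not decrease when the window doubles.

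The step I expect to be the main obstacle is precisely this monotonicity. In \FEWA the window-$(\window{+}1)$ average is the window-$\window$ average augmented by one strictly older term, so $\expestReward^{\window}\leq\expestReward^{\window+1}$ is immediate; here the blocks underlying $\hat{s}_{i,j}^{\,c}$ and $\hat{s}_{i,j+1}^{\,c}$ are \emph{not} nested, and depending on when each was last refreshed the longer block may sit on newer, smaller-mean samples, so the naive $\mathbb{E}[\hat{s}_{i,j+1}^{\,c}]\geq\mathbb{E}[\hat{s}_{i,j}^{\,c}]$ can fail. The fix I would pursue is to route the whole chain through the worst-case oldest surrogates $\expestReward_i^{2^{j-1},2^{j}-1}$: these occupy disjoint, progressively older index ranges, so $\expestReward_i^{2^{j},2^{j+1}-1}\geq\expestReward_i^{2^{j-1},2^{j}-1}$ holds unconditionally by the decay assumption, restoring monotonicity, while the filter-passing inequality is only helped by replacing each realized block mean with its dominating oldest surrogate. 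The crux is to check that this substitution leaves the accumulated slack at exactly $4c(\window,\delta_\currentTime)$ — i.e. that no stray decay term of order $2^{j}L$ is introduced — which works because the oldest surrogate of the oracle arm averages samples whose means are all at least $\reward^+_t(\EWA)$, so the bottom of the chain survives the passage to surrogates. Combining the three displays then yields $\expestReward_i^{2^{j-1},2^{j}-1}\geq\reward^+_t(\EWA)-4c(\window,\delta_\currentTime)$, as claimed.
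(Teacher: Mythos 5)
Your setup is right, and you have correctly isolated the crux. The concentration input, the $4c$ filter-passing bookkeeping for the surviving arm, and the domination $\mathbb{E}[\hat{s}_{i,j}^{\,c}] \leq \expestReward_i^{2^{j-1},2^{j}-1}$ all transfer from Lemma~\ref{fundamental-lemma}, and the monotonicity of the running maximum along the geometric grid is indeed the step that does not come for free. But your proposed fix---chaining $\max_{i\in\possibleArms_{2^j}}\expestReward_i^{2^{j-1},2^{j}-1}$ over the \emph{oldest surrogates}---does not close the gap. The induction requires the arm realizing that maximum to survive \textsc{EFF\_Filter} at level $j$, and the filter compares the \emph{realized} statistics $\hat{s}_{i,j}^{\,c}$, which concentrate around their own block means, not around the surrogates. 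An arm can maximize the oldest surrogate while its actual block sits on the newest, smallest-mean placement: take an arm whose mean dropped from a large value to $0$ roughly $2^{j-1}$ pulls ago and whose current block covers only post-drop samples; its surrogate is still large, but its statistic is near $0$, so it is eliminated by a mediocre constant arm and your running maximum strictly decreases at the next level. So the substitution is \emph{not} ``only helpful'': it helps when upper-bounding the deficit of the kept arm, but it points the wrong way when you must certify that the surrogate-argmax passes the filter.

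The missing ingredient is structural rather than a worst-case-placement argument. Because each $\hat{s}_{i,j}^{\,c}$ is refreshed on a schedule aligned to multiples of its own block length, the block offsets at consecutive levels coincide or differ by exactly the level-$j$ block length; hence the level-$(j{+}1)$ block is either a superset of the level-$j$ block or consists entirely of older samples. In both cases Assumption~\ref{assum-Lipschitz} gives $\mathbb{E}[\hat{s}_{i,j+1}^{\,c}] \geq \mathbb{E}[\hat{s}_{i,j}^{\,c}]$ for every arm, i.e., the realized block expectations themselves are monotone along the grid. With that in hand, the chain of Lemma~\ref{fundamental-lemma} applies verbatim to $\max_{i\in\possibleArms_{2^j}}\mathbb{E}[\hat{s}_{i,j}^{\,c}]$: its argmax passes the filter by the usual two-confidence-interval argument, the maximum is non-decreasing in $j$, and at the smallest window it equals $\max_i \mu_i(N_{i,t}-1) \geq \reward^+_t(\EWA)$ since that first statistic is always up to date. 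Combining this with your first display and the surrogate domination yields $\expestReward_i^{2^{j-1},2^{j}-1} \geq \reward^+_t(\EWA) - 4c(\window,\delta_\currentTime)$ as claimed.
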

We proceed with modifying Corollary~\ref{fundamental-correlary} to have the following efficient version.
\begin{restatable}{corollary}{restaeffcor}
\label{eff-corollary}
Let $\arm \in \overpullSet$ be an arm overpulled by {\EFF} at round $t$ and $\window_{\arm,t}^{\EFFP} \triangleq \armCount_{\arm, t}^{\EFFP} - \armCount_{\arm, t}^{\star} \geq 1$ be the difference in the number of pulls w.r.t.\,the optimal policy $\pi^\star$ at round $t$. On  favorable event $\HPevent_t$,  we have that 
\begin{align*}
\reward^+_t(\EFFP) - \expestReward^{\window_{\arm,t}^{\EFFP}}(\armCount_{\arm,t}) \leq  \frac{4\sqrt{2}}{\sqrt{2}-1} c(\window_{\arm,t}^{\EFFP}, \delta_t).
\end{align*}
\end{restatable}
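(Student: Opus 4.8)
The plan is to mirror the one-line proof of Corollary~\ref{fundamental-correlary}, but now to pay for the two approximations that \EFF introduces relative to \FEWA: the filtering windows are restricted to powers of two, and each precomputed statistic $\hat s^{\,c}_{i,j}$ lags behind the most recent samples. The starting point is Lemma~\ref{lem:efficient}, which plays here the role that Lemma~\ref{fundamental-lemma} played for \FEWA: whenever arm $\arm$ clears the dyadic filter of window $2^j$ at round $t$, its block average over the positions $[2^{j-1},2^j-1]$ satisfies $\expestReward_\arm^{2^{j-1},2^j-1}\geq \reward^+_t(\EFFP)-4c(2^j,\delta_t)$. The whole difficulty is to turn this per-block statement into a bound on the full average $\expestReward^{\window_{\arm,t}^{\EFFP}}(\armCount_{\arm,t})$ of the last $\window_{\arm,t}^{\EFFP}$ means.

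First I would pin down exactly which filters arm $\arm$ has passed. Since $\arm$ is overpulled and was selected at round $t$, the stopping test at Line~\ref{eff:cond} forces $2^{J-1}<\armCount_{\arm,t}\leq 2^J$ for the terminal index $J$, and to reach $\possibleArms_{2^J}$ the arm must have cleared every dyadic filter of window $2^0,2^1,\dots,2^{J-1}$. Writing $h\triangleq\window_{\arm,t}^{\EFFP}$, we have $1\leq h\leq \armCount_{\arm,t}\leq 2^J$, so all dyadic windows strictly below $h$ are both available as statistics and already passed.

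Next I would reduce the target average to a dyadic one. Because the means are non-increasing, $\expestReward^{1,h'}$ is monotone in $h'$, so it suffices to lower bound $\expestReward^{1,2^m-1}$ for the largest $m$ with $2^m\leq h$ (shrinking $m$ by one in the boundary case $m=J$, which only costs a factor $\sqrt2$ in the radius since then $2^{m-1}=h/2$). The blocks $[1,1],[2,3],[4,7],\dots,[2^{m-1},2^m-1]$ partition $[1,2^m-1]$, hence $\expestReward^{1,2^m-1}=\tfrac1{2^m-1}\sum_{j=1}^m 2^{j-1}\expestReward_\arm^{2^{j-1},2^j-1}$, and each block average is exactly what Lemma~\ref{lem:efficient} controls. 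Substituting the block bounds yields $\reward^+_t(\EFFP)-\expestReward^{1,2^m-1}\leq \tfrac{4}{2^m-1}\sum_{j=1}^m 2^{j-1}c(2^j,\delta_t)$.

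The final step is the arithmetic. Using $c(2^j,\delta_t)=c(1,\delta_t)2^{-j/2}$ converts $\sum_{j=1}^m 2^{j-1}c(2^j,\delta_t)$ into a geometric series with ratio $1/\sqrt2$, whose partial sum scales like $\tfrac{\sqrt2}{\sqrt2-1}2^{m/2}$; after dividing by $2^m-1$ and folding in $c(2^m,\delta_t)\leq\sqrt2\,c(h,\delta_t)$ (from $2^m>h/2$), everything collapses to a constant multiple of $c(h,\delta_t)$, giving the advertised $\frac{4\sqrt2}{\sqrt2-1}c(\window_{\arm,t}^{\EFFP},\delta_t)$. The hard part will be precisely this bookkeeping: unlike the \FEWA corollary, the delayed statistics destroy the cost-free telescoping of Lemma~\ref{fundamental-lemma}, so the confidence radius must be charged once at every dyadic level, and I expect the care to lie in summing this geometric series while simultaneously absorbing the boundary index and the off-by-one between $h$ and its nearest power of two without inflating the constant beyond $\frac{4\sqrt2}{\sqrt2-1}$.
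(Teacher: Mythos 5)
Your proposal is correct and follows essentially the same route as the paper: invoke Lemma~\ref{lem:efficient} on each dyadic block, use monotonicity of the means to pass from the window $\window_{\arm,t}^{\EFFP}$ to the nearest dyadic average $\expestReward^{1,2^f-1}$, decompose that average over the blocks $[2^{j-1},2^j-1]$, and sum the resulting geometric series of confidence radii before converting $c(2^{f+1},\delta_t)$ back to $c(\window_{\arm,t}^{\EFFP},\delta_t)$. The only differences are cosmetic (your reading of the per-block radius as $c(2^j,\delta_t)$ versus the paper's $c(2^{j-1},\delta_t)$, and your explicit handling of the boundary index), neither of which changes the argument or the final constant.
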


\begin{proof}
If $i$ was pulled at round $t$, then by the condition at Line~\ref{eff:cond} of Algorithm~\ref{Eff-alg}, it means that $i$ passes through all the filters until at least window $2^f$ such that $2 ^f \leq h_{i,t}^{\EFFP}< 2^{f+1} $. Note that for $h_{i,t}^{\EFFP} = 1$, then \EFF
has the same guarantee as \FEWA since the first filter is always up to date. Then for $h_{i,t}^{\EFFP} \geq 2,$  
%
\begin{align}
\expestReward^{ 1, h_{i,t}^{\EFFP}}_i(\armCount_{\arm,\currentTime} ) &\geq \expestReward^{ 1, 2^f -1}_i(\armCount_{\arm,\currentTime} )  = \frac{\sum_{j=1}^{f} 2^{j-1} \expestReward_i^{2^{j-1},2^{j}-1}}{2^{f}-1} \label{eq:line1}\\
&\geq \reward^+_t(\EFFP) -\frac{4\sum_{j=1}^{f} 2^{j-1}c(2^{j-1}, \delta)}{2^{f}-1} = \reward^+_t(\EFFP) -4c(1,\delta_t) \frac{\sum_{j=1}^{f} \sqrt{2}^{j-1}}{2^{f}-1} \label{eq:line2}\\
& = \reward^+_t(\EFFP) -4c(1,\delta_t) \frac{\sqrt{2}^{f} -1}{(2^{f}-1)(\sqrt{2}-1)} \geq \reward^+_t(\EFFP) -4c(1,\delta_t) \frac{1}{\sqrt{2}^{f}(\sqrt{2}-1)} \label{eq:line3}\\
& =\reward^+_t(\EFFP) -\frac{4\sqrt{2}}{\sqrt{2}-1}c\pa{2^{f+1},\delta_t} \geq \reward^+_t(\EFFP) -\frac{4\sqrt{2}}{\sqrt{2}-1}c\pa{h_{i,t}^{\EFFP},\delta_t},\label{eq:line4}
\end{align}
where Equation~\ref{eq:line1} uses that the average of older means is larger than average of the more recent ones and then decomposes $2^f-1$ means onto a geometric grid. Then, Equation~\ref{eq:line2} uses Lemma~\ref{lem:efficient} and makes  the dependence of $c(2^{j-1}, \delta)$ on $j$ explicit. Next, Equations~\ref{eq:line3}
and~\ref{eq:line4}  use standard algebra to get a lower bound
and that $c(h,\delta)$ decreases with $h$. 
\end{proof}

\noindent Using the result above, we follow the same proof as the one for \FEWA and derive minimax and problem-dependent upper bounds for \EFF using Corollary~\ref{eff-corollary} instead of Corollary~\ref{fundamental-correlary}.

\begin{restatable}[minimax guarantee for {\EFF}]{corollary}{restacoreffminimax}
\label{cor:eff-minimax}
For any rotting bandit scenario with means $\{\mu_i(n)\}_{i,n}$ satisfying Assumption~\ref{assum-Lipschitz} with bounded decay $L$ and any time horizon $T$, {\EFF} with $\delta_\currentTime= 1/(t^5)$, $\alpha= 5$, and $\delta_0=1$ has its expected regret upper-bounded as
\begin{equation*}
\mathbb{E}\left[\regret\left(\EFFP\right)\right] \leq 13\subgaussian\pa{\frac{\sqrt{2}}{\sqrt{2}-1}\sqrt{KT} + K}\sqrt{\log(T)}+ KL.
\end{equation*}
\end{restatable}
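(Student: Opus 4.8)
The plan is to re-run the minimax argument of Theorem~\ref{independent_theorem} essentially verbatim, substituting the efficient confidence guarantee of Corollary~\ref{eff-corollary} for Corollary~\ref{fundamental-correlary} at the single point where past overpulls are charged. First I would invoke the regret decomposition of Lemma~\ref{reg:Decompo}, which is proved for an \emph{arbitrary} policy and therefore applies to $\EFFP$, writing $\regret(\EFFP)\leq A_{\EFFP}+B$, where $A_{\EFFP}$ collects the overpulls committed under the favorable event $\HPevent_t$ and $B$ charges the (rare) overpulls committed under $\bar\HPevent_t$ the trivial bound $Lt$.

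The core step is to re-establish the \EFF counterpart of Lemma~\ref{lemma:A}. I would follow that proof line by line: fix an overpulled arm $i$, let $h_{i,T}^\HPevent$ index its last favorable overpull at round $t_i$, and split the sum of its overpull regrets into the final term plus the preceding $h_{i,T}^\HPevent-1$ terms. The preceding terms telescope into $(h_{i,T}^\HPevent-1)\pa{\reward^+_T(\EFFP)-\expestReward_i^{h_{i,T}^\HPevent-1}(\cdot)}$, which I now bound with Corollary~\ref{eff-corollary} instead of Corollary~\ref{fundamental-correlary}; this is the \emph{only} place the efficient statistics enter, and it trades the constant $4$ for $\tfrac{4\sqrt{2}}{\sqrt{2}-1}$. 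The final overpull term I decompose exactly as in Lemma~\ref{lemma:A} into a window-$1$ part and a part bounded by $\lipschitz$; crucially, the window-$1$ estimate of \EFF averages a single sample and is refreshed at every pull, so it is always up to date and Lemma~\ref{lem:efficient} applies there with the \emph{unchanged} constant $4$ (cf.\ the remark in the proof of Corollary~\ref{eff-corollary} that \EFF matches \FEWA for $h=1$). This produces
\begin{equation*}
A_{\EFFP} \leq \sum_{i\in\overpullSet_\HPevent}\pa{4\sqrt{2\alpha\subgaussian^2\log_+\pa{T\delta_0^{-1/\alpha}}} + \frac{4\sqrt{2}}{\sqrt{2}-1}\sqrt{2\alpha\subgaussian^2\pa{h_{i,T}^\HPevent-1}\log_+\pa{T\delta_0^{-1/\alpha}}} + \lipschitz}.
\end{equation*}

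Next I would aggregate over arms exactly as in Theorem~\ref{independent_theorem}: there are at most $K-1$ overpulled arms, the total number of overpulls $\sum_i h_{i,T}^\HPevent$ cannot exceed $T$, and concavity of the square root (Jensen) makes the uniform allocation the worst case, so the growing term sums to $\tfrac{4\sqrt{2}}{\sqrt{2}-1}\sqrt{10\subgaussian^2(K-1)T\log T}$ at $\alpha=5$. For the bad-event term I reuse Lemma~\ref{lemma:B} unchanged: the favorable event of \EFF involves no more estimators than that of \FEWA, so the union bound of Proposition~\ref{prop:heoffding} still gives $\mathbb{P}(\bar\HPevent_t)\leq Kt^2\delta_t/2$ and hence $\mathbb{E}[B]\leq L\pi^2/12$ for $\alpha=5,\ \delta_0=1$. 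Finally I plug in $\alpha=5,\ \delta_0=1$ and simplify with $4\sqrt{10}\leq 13$, $K-1\leq K$, $\sqrt{K-1}\leq\sqrt{K}$, and $\pi^2/12\leq 1$ (the last absorbing $\mathbb{E}[B]$ together with $(K-1)\lipschitz$ into $K\lipschitz$), which yields exactly the claimed bound.

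I expect the only genuine obstacle to be the careful re-derivation of the \EFF version of Lemma~\ref{lemma:A}, namely verifying that the staleness and the exponential-resolution of the precomputed statistics inflate \emph{only} the $\sqrt{h-1}$ term (through Corollary~\ref{eff-corollary}) while the constant ``$+1$'' term, inherited from the always-fresh window-$1$ filter, keeps its original constant $4$. This clean separation is precisely what keeps the $K$-coefficient at $13\subgaussian$ while inflating the $\sqrt{KT}$-coefficient by the factor $\tfrac{\sqrt{2}}{\sqrt{2}-1}$; everything downstream is the same bookkeeping already carried out for \FEWA.
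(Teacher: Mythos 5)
Your proposal is correct and follows essentially the same route as the paper, which itself only sketches this corollary by saying the \FEWA minimax proof is repeated with Corollary~\ref{eff-corollary} in place of Corollary~\ref{fundamental-correlary}. You correctly identify the one subtlety worth checking — that the always-fresh window-$1$ statistic keeps the constant $4$ on the per-arm additive term while only the $\sqrt{h-1}$ term picks up the factor $\tfrac{\sqrt{2}}{\sqrt{2}-1}$ — and the remaining bookkeeping (Lemma~\ref{reg:Decompo}, Lemma~\ref{lemma:B}, Jensen, and the numerical simplifications at $\alpha=5$, $\delta_0=1$) matches the paper's Theorem~\ref{independent_theorem} argument and yields the stated bound.
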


\begin{restatable}[problem-dependent guarantee for {\EFF}]{corollary}{restacoreffpd}
\label{cor:eff-PD}
For $\delta_\currentTime = 1/(\currentTime^5)$, the regret of {\EFF} is upper-bounded as
\begin{align*}
R_T(\EFFP)  \leq \sum_{\arm\in \possibleArms} \pa{\frac{C_{5}\frac{2}{3-2\sqrt{2}}\log(T)}{\Delta_{i,h_{i,T}^+-1}}+ \sqrt{C_{5}\log(T)} + L}\CommaBin
\end{align*}
with $C_\alpha\triangleq 32\alpha\subgaussian^2$ and $h_{i,T}^+$ defined in Equation~\ref{eq:hit+}.
\end{restatable}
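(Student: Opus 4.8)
The plan is to reuse the problem-dependent scaffold built for \FEWA almost verbatim, changing exactly one ingredient: wherever the \FEWA analysis invokes Corollary~\ref{fundamental-correlary}, I would instead invoke its efficient counterpart Corollary~\ref{eff-corollary}, whose only difference is that the guarantee on the selected arm degrades from $4c(h,\delta_t)$ to $\frac{4\sqrt{2}}{\sqrt{2}-1}c(h,\delta_t)$. First I would apply Lemma~\ref{reg:Decompo}, which is stated for an arbitrary policy and therefore applies directly to \EFFP, yielding $R_T(\EFFP)\le A_{\EFFP}+B$. The low-probability term $B$ is controlled exactly as in Lemma~\ref{lemma:B}: the concentration of Proposition~\ref{prop:heoffding} applies to the precomputed estimates of \EFF as well (with an even smaller union bound, since there are only $O(\log t)$ of them per arm), so with $\alpha=5$ and $\delta_0=1$ we get $\E[B]\le L\pi^2/12\le L$, which I would later fold into the per-arm additive $L$.

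The substance is the bound on $A_{\EFFP}$, mirroring Lemma~\ref{lemma:A} and Lemma~\ref{lemma:PDbound}. For each overpulled arm I would isolate its last favorable overpull $h_{i,T}^\HPevent$, split the associated inner sum into the first $h_{i,T}^\HPevent-1$ overpulls and the last one, and bound the first block by $\frac{4\sqrt{2}}{\sqrt{2}-1}\,(h_{i,T}^\HPevent-1)\,c(h_{i,T}^\HPevent-1,\delta_T)$ via Corollary~\ref{eff-corollary}. The last overpull is decomposed as in Lemma~\ref{lemma:A} into $A_1+A_2$, with $A_2\le L$ by the bounded-decay assumption and $A_1\le 4c(1,\delta_{t_i})$; crucially the window-$1$ filter of \EFF is always exactly up to date, so $A_1$ keeps the constant $4$ and contributes the same $\sqrt{C_5\log T}$ term as for \FEWA rather than the degraded constant. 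This is why only the \emph{leading} term of the statement carries the efficiency factor.

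Next I would turn the block bound into a problem-dependent overpull count. Applying Corollary~\ref{eff-corollary} at round $t_i$ and using $\reward^-_T(\policy^\star)\ge\reward^+_T(\EFFP)$ (otherwise the regret is zero, as argued in Lemma~\ref{lemma:PDbound}) gives $\Delta_{i,h_{i,T}^\HPevent-1}\le \frac{4\sqrt{2}}{\sqrt{2}-1}\sqrt{2\alpha\sigma^2\log_+(T\delta_0^{-1/\alpha})/(h_{i,T}^\HPevent-1)}$. Squaring bounds $h_{i,T}^\HPevent$ by a self-referential problem-dependent quantity, and the constant collapses cleanly because $\big(\frac{4\sqrt{2}}{\sqrt{2}-1}\big)^2=\frac{32}{3-2\sqrt{2}}$ so that $\big(\frac{4\sqrt{2}}{\sqrt{2}-1}\big)^2\cdot 2\alpha\sigma^2=C_\alpha\frac{2}{3-2\sqrt{2}}$, together with $\sqrt{3-2\sqrt{2}}=\sqrt{2}-1$. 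Substituting $\sqrt{h_{i,T}^\HPevent-1}$ back into the block bound then produces the leading term $C_5\frac{2}{3-2\sqrt{2}}\log(T)/\Delta_{i,h_{i,T}^+-1}$. Summing over the at most $K-1$ overpulled arms (enlarging $\overpullSet_\HPevent$ to all of $\possibleArms$) and adding the $(K-1)L$ from the $A_2$ terms together with $\E[B]\le L$ gives the stated $\sum_{i\in\possibleArms}L$ contribution.

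I expect the main obstacle to be purely bookkeeping: carrying the constant $\frac{4\sqrt{2}}{\sqrt{2}-1}$ through the squaring step and verifying that it lands exactly on $\frac{2}{3-2\sqrt{2}}$ while confirming that the additive window-$1$ term is untouched. A secondary subtlety is that the statement reuses $h_{i,T}^+$ from Equation~\ref{eq:hit+}, which is defined with the \FEWA constant $C_\alpha$ rather than the efficient one; since the efficient overpull bound has a strictly larger constant, the honest efficient index is $\ge h_{i,T}^+$, and because $\Delta_{i,h}$ is non-decreasing in $h$ (the means decay, so $\bar\reward_i^h$ shrinks as $h$ grows), evaluating at the smaller index $h_{i,T}^+$ only loosens the bound, so the stated form remains a valid upper bound.
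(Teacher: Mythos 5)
Your proposal is correct and follows exactly the route the paper intends: it gives no explicit proof of this corollary beyond the instruction to rerun the \FEWA problem-dependent argument (Lemmas~\ref{reg:Decompo}, \ref{lemma:A}, \ref{lemma:B}, \ref{lemma:PDbound}) with Corollary~\ref{eff-corollary} in place of Corollary~\ref{fundamental-correlary}, and your constant bookkeeping $\bigl(\tfrac{4\sqrt{2}}{\sqrt{2}-1}\bigr)^2 = \tfrac{32}{3-2\sqrt{2}}$, the unchanged window-$1$ additive term, and the absorption of $\E[B]$ into the per-arm $L$ all check out. Your closing observation about $h_{i,T}^+$ being defined with the \FEWA constant, and the monotonicity of $\Delta_{i,h}$ making the stated form still valid, is a genuine subtlety the paper glosses over, and you resolve it correctly.
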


\section{Numerical simulations: Stochastic bandits}


\begin{figure}[H]
	\centering
	\includegraphics[scale=0.25]{./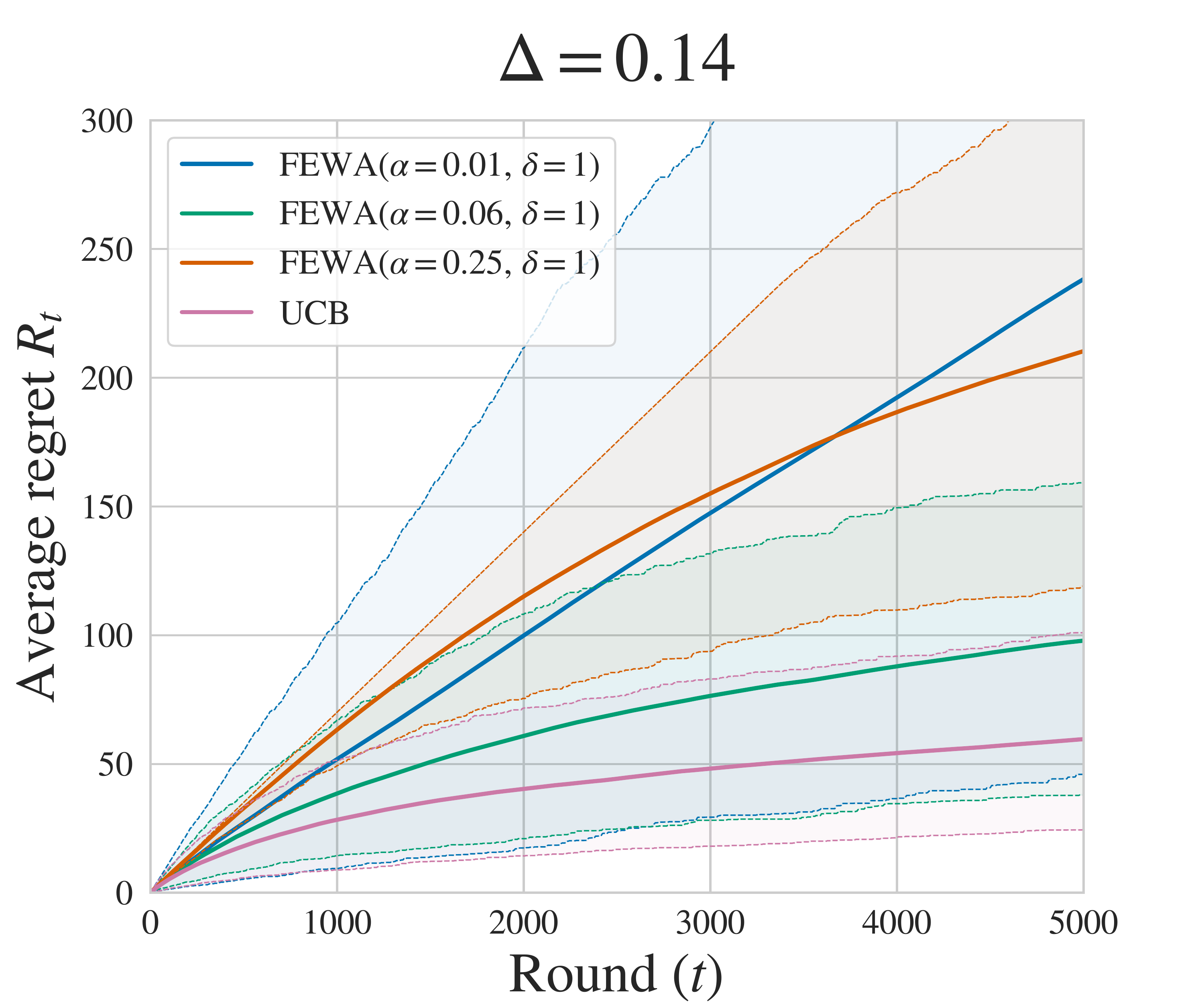}
	\includegraphics[scale=0.25]{./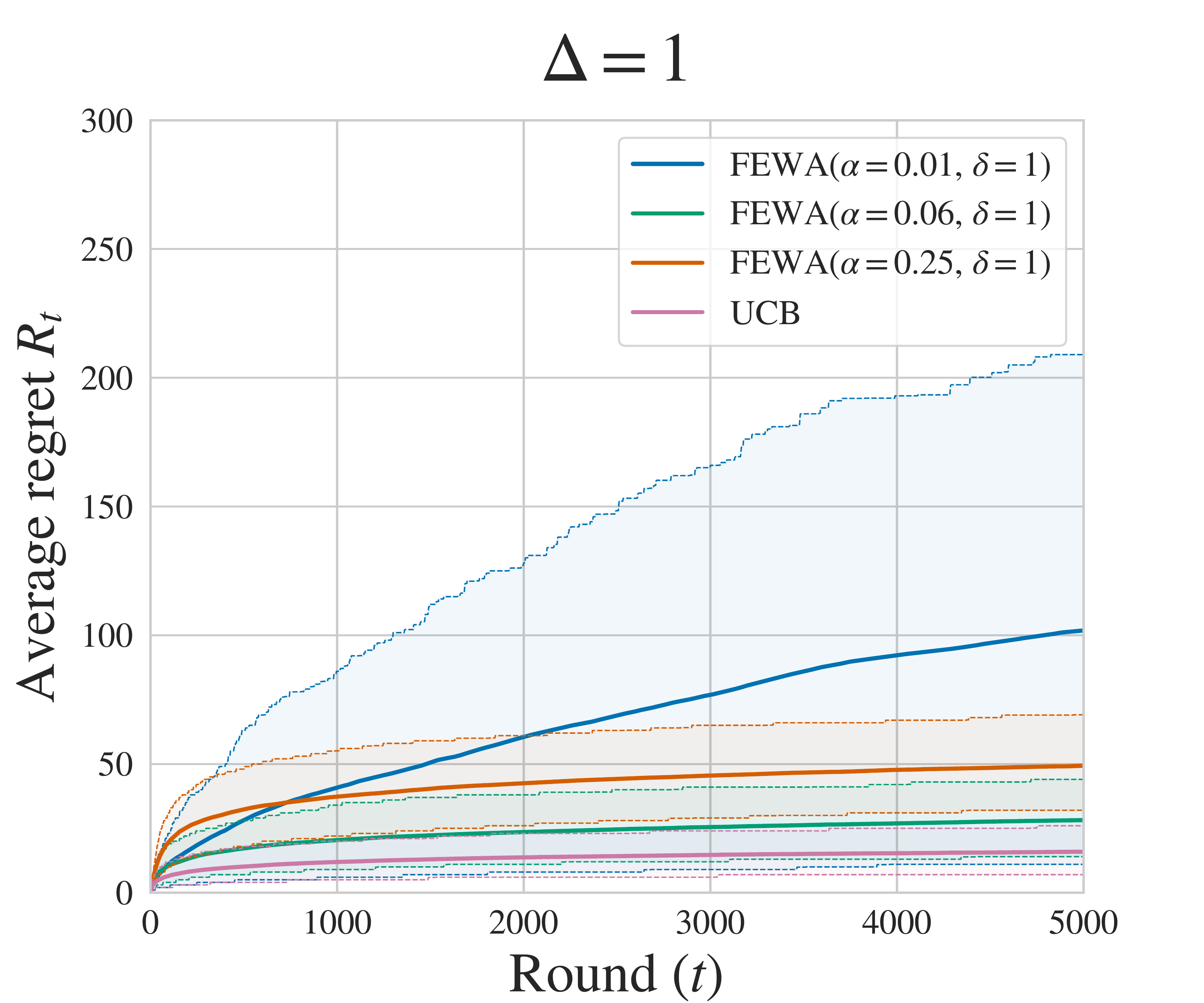}
	\caption{Comparing \UCBone and \myAlgorithm with $\Delta=0.14$ and $\Delta=1$.}
	\label{stochastic-fig}
\end{figure}

In Figure~\ref{stochastic-fig} we compare the performance of \FEWA against \UCB~\citep{lai1985asymptotically} on two-arm bandits with different gaps. These experiments confirm the theoretical findings of Theorem~\ref{independent_theorem} and Corollary~\ref{dependent_theorem}:  \FEWA has comparable performance with \UCB. In particular, both algorithms have a logarithmic asymptotic behavior and for $\alpha =0.06$, the ratio between the regret of two algorithms is empirically lower than $2$.
Notice, the theoretical factor between the two upper bounds is $80$ (for $\alpha = 5$). This shows the ability of \myAlgorithm to be competitive for stochastic bandits.

\end{document}